\renewcommand{\epsilon}{\varepsilon}
\title{Instance-Optimal Matrix Multiplicative Weight Update and Its Quantum Applications} 
\author{Weiyuan Gong}{School of Engineering and Applied Sciences, Harvard University, MA, USA}{wgong@g.harvard.edu}{https://orcid.org/0000-0002-6599-8110}{}
\author{Tongyang Li}{Center on Frontiers of Computing Studies and School of Computer Science, Peking University, Beijing, China}{tongyangli@pku.edu.cn}{https://orcid.org/0000-0002-0338-413X}{}
\author{Xinzhao Wang}{Center on Frontiers of Computing Studies and School of Computer Science, Peking University, Beijing, China}{wangxz@stu.pku.edu.cn}{https://orcid.org/0000-0003-2191-6117}{}
\author{Zhiyu Zhang}{Department of Statistics and Data Science, Carnegie Mellon University, PA, USA}{zhiyuzresearch@gmail.com}{https://orcid.org/0000-0001-9639-8196}{}
\authorrunning{W. Gong, T. Li, X. Wang, and Z. Zhang} 
\keywords{Matrix multiplicative weight update, instance-optimal regret, potential analysis, trace inequality, quantum learning theory} 
\begin{document}

\maketitle

\begin{abstract}
The Matrix Multiplicative Weight Update (MMWU) is a seminal online learning algorithm with numerous applications. Applied to the matrix version of the Learning from Expert Advice (LEA) problem on the $d$-dimensional spectraplex, it is well known that MMWU achieves the minimax-optimal regret bound of $\mathcal{O}\rpar{\sqrt{T\log d}}$, where $T$ is the time horizon. In this paper, we present an improved algorithm achieving the instance-optimal regret bound of $\mathcal{O}\rpar{\sqrt{T\cdot S(X||d^{-1}I_d)}}$, where $X$ is the comparator in the regret, $I_d$ is the identity matrix, and $S(\cdot||\cdot)$ denotes the quantum relative entropy. Furthermore, our algorithm has the same computational complexity as MMWU, indicating that the improvement in the regret bound is ``free''. 

Technically, we first develop a general potential-based framework for matrix LEA, with MMWU being its special case induced by the standard exponential potential. Then, the crux of our analysis is a new ``one-sided'' Jensen's trace inequality built on a Laplace transform technique, which allows the application of general potential functions beyond exponential to matrix LEA. Our algorithm is finally induced by an optimal potential function from the vector LEA problem, based on the imaginary error function. 

Complementing the above, we provide a memory lower bound for matrix LEA, and explore the applications of our algorithm in quantum learning theory. We show that it outperforms the state of the art for learning quantum states corrupted by depolarization noise, random quantum states, and Gibbs states. In addition, applying our algorithm to linearized convex losses enables predicting nonlinear quantum properties, such as purity, quantum virtual cooling, and R\'{e}nyi-$2$ correlation.
\end{abstract}

\section{Introduction}

The \emph{Matrix Multiplicative Weight Update} (MMWU) algorithm~\cite{tsuda2005matrix} is a fundamental tool in online optimization, machine learning, and theoretical computer science, with applications in semidefinite programming~\cite{arora2007combinatorial}, spectral sparsifiers~\cite{allen2015spectral}, and quantum learning theory~\cite{aaronson2018online}. It is designed to solve the following matrix version of \emph{Learning from Expert Advice} (LEA), which is a classical online learning problem formulated as a repeated game. 

\begin{definition}[The matrix LEA problem]\label{def:matrix_lea}
Let the domain $\mathcal{X}$ be the $d$-dimensional \emph{spectraplex} $\Delta_{d\times d}$, the collection of all $d\times d$ Hermitian positive semidefinite (PSD) matrices with unit trace. At every time step $t$ of the total $T$ iterations, the learner chooses a prediction $X_t\in \mathcal{X}$. The adversary then supplies a Hermitian loss matrix $G_t$ satisfying $\norm{G_t}_\op\leq l$, and the learner suffers from the loss $\inner{G_t}{X_t}=\tr(G_tX_t)\in\R$, with $\inner{\cdot}{\cdot}$ denoting the matrix Frobenius inner product. The performance of the learner is measured by the \emph{regret}
\begin{align*}
\reg_T(X)\defeq \sum_{t=1}^T\inner{G_t}{X_t}-\sum_{t=1}^T\inner{G_t}{X},
\end{align*}
which quantifies the cumulative excess loss of the learner against any fixed benchmark prediction $X\in\mathcal{X}$, known as the \emph{comparator}.
\end{definition} 

For this problem, the MMWU algorithm (introduced in Section~\ref{sec:prelim_lea_mmwu}) guarantees the regret bound of $\mathcal{O}\rpar{\sqrt{T\log d}}$~\cite{warmuth2008randomized,warmuth2006online}, which is asymptotically optimal in the worst case of the comparator $X$~\cite{cesa1997use}. Based on this result, an extensive line of works has revolved around its applications in quantum information, such as the online learning of quantum states~\cite{aaronson2018online} and processes~\cite{bansal2025online,raza2024online}. However, two fundamental questions are left open.

\vspace{0.5em}\noindent \textbf{(A) Instance-optimal regret bound. }For MMWU, the regret bound of $\mathcal{O}\rpar{\sqrt{T\log d}}$ is a fixed non-adaptive quantity that does not depend on the comparator $X$. Consequently, it fails to improve on intuitively ``easier'' comparators, such as those with high entropy (i.e., close to the normalized identity matrix $d^{-1}I_d$, which is also known as the maximally mixed state in quantum information). Prior works in the standard vector setting of LEA suggest that this is due to the algorithm design rather than just the regret analysis \cite{chaudhuri2009parameter,luo2015achieving,orabona2016coin}. Therefore, it motivates the central question: 
\begin{center}
\emph{Is there a better algorithm retaining the computational complexity of MMWU, while achieving an \emph{instance-optimal} regret bound with respect to the comparator $X$?}
\end{center} 

\vspace{0.5em}\noindent \textbf{(B) Instance-dependent complexity of quantum learning theory. } A natural application of MMWU is the online learning of quantum states~\cite{aaronson2018online} and processes~\cite{raza2024online,bansal2025online}. This originates from a fundamental task in quantum information -- learning from quantum systems and their evolution. While fully recovering the description of an unknown quantum system is resource-consuming~\cite{haah2016sample,o2016efficient}, learning a set of properties of the unknown quantum system, also known as shadow tomography~\cite{aaronson2018shadow}, can be solved sample-efficiently. The matrix LEA problem serves as the key subroutine of \emph{online shadow tomography}. In addition, since no statistical assumption is required on the data-generating process, it can provide worst-case guarantees for learning from quantum data. 

Within this subfield, a growing research direction is to characterize the complexity of online learning various quantum states and processes, given certain prior knowledge on their algebraic structures~\cite{raza2024online,bansal2025online}. However, without assuming prior knowledge, an instance-dependent understanding of this question is lacking hitherto. It is thus natural to ask: 
\begin{center}
\emph{Can an improvement on MMWU translate to an instance-dependent characterization of the complexity in quantum online learning, without assuming prior knowledge?} 
\end{center}

This paper answers both of the above questions affirmatively. 

\subsection{Our results}

\subsubsection{Improving MMWU}\label{sec:intro_lea}

Regarding the matrix LEA problem (Definition~\ref{def:matrix_lea}), our main result is stated as follows. 
\begin{theorem}[Informal, see Theorem~\ref{thm:lea_main} and Remark~\ref{remark:computation}]\label{thm:lea_main_informal}
There exists an algorithm (presented in Section~\ref{sec:algorithm}) such that for all $T\geq 1$ and $X\in\calX$, 
\begin{equation}\label{eq:regret_ours_summary}
\reg_T(X)=\mathcal{O}\rpar{\sqrt{T\cdot S(X||d^{-1}I_d)}},
\end{equation}
where $S(X_1||X_2)\defeq\inner{X_1}{\log X_1-\log X_2}$ denotes the quantum relative entropy of $X_1$ relative to $X_2$, extending the KL divergence of probability vectors to density matrices. Furthermore, the time and memory complexity of this algorithm are identical to those of MMWU. 
\end{theorem}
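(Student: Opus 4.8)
The plan is to lift the instance-optimal, parameter-free algorithm for \emph{vector} LEA on the probability simplex to the matrix setting through a general potential-based framework, and then specialize to the imaginary-error-function potential. Fix a scalar \emph{potential} $\Phi\colon\R\to\R_{>0}$ that is convex, nondecreasing, and admits a nonnegative-measure Laplace representation $\Phi(x)=\int_0^{\infty}e^{\eta x}\,d\mu(\eta)$ supported on $[0,\infty)$. Let $\mathcal{R}_{t-1}\defeq\sum_{s=1}^{t-1}\rho_s$ be the running sum of the instantaneous regret matrices $\rho_s\defeq\inner{G_s}{X_s}I_d-G_s$ (so $\norm{\rho_s}_\op\le 2l$), and let the learner predict $X_t\defeq\Phi'(\mathcal{R}_{t-1})/\tr[\Phi'(\mathcal{R}_{t-1})]$, with $\Phi'$ applied spectrally. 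This is a legitimate generalization of MMWU: $\Phi'\ge 0$ (since $\mu$ lives on $[0,\infty)$) makes $X_t\in\Delta_{d\times d}$, and the exponential potential $\Phi(x)=e^{\eta x}$ recovers MMWU exactly because $\mathcal{R}_{t-1}$ differs from $-\sum_{s<t}G_s$ only by a multiple of $I_d$ that cancels in the normalization. For any such $\Phi$, a step costs one eigendecomposition plus an $\mathcal{O}(d^2)$ update of $\mathcal{R}_{t-1}$ and an $\mathcal{O}(d)$ reweighting of eigenvalues---identical time and memory to MMWU---which already gives the computational half of the claim (the content of Remark~\ref{remark:computation}); it then remains to pick $\Phi$ (the imaginary-error-function potential, for which $\Phi(x)\asymp e^{cx^2}$ as $x\to+\infty$ with scale $c\asymp 1/(Tl^2)$) and prove the regret bound.

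To bound the regret I would track the scalar $\Psi_t\defeq\tr[\Phi(\mathcal{R}_t)]$ and show it stays $\mathcal{O}(d)$. Writing $\mathcal{R}_t=\mathcal{R}_{t-1}+\rho_t$, the increment $\Psi_t-\Psi_{t-1}$ splits into a first-order term $\inner{\Phi'(\mathcal{R}_{t-1})}{\rho_t}=\tr[\Phi'(\mathcal{R}_{t-1})]\cdot\inner{X_t}{\rho_t}$, which vanishes identically since $\inner{X_t}{\rho_t}=\inner{G_t}{X_t}\tr[X_t]-\inner{G_t}{X_t}=0$ (the ``prod'' cancellation, using $\tr[X_t]=1$), plus a second-order remainder. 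The crux is a one-sided Jensen-type trace inequality
\[
\tr[\Phi(A+\rho)]\le\tr[\Phi(A)]+\inner{\Phi'(A)}{\rho}+\tfrac12\int_0^{\infty}\eta^2 e^{\eta\norm{\rho}_\op}\tr[e^{\eta A}\rho^2]\,d\mu(\eta),
\]
whose remainder is $\mathcal{O}\bigl(\norm{\rho}_\op^2/(Tl^2)\bigr)\cdot\Psi_t$; granting this, $\Psi_t\le(1+\mathcal{O}(1/T))\Psi_{t-1}$ (or $\Psi_t\le\Psi_{t-1}$ with a data-adaptive scale in $\Phi$), and since $\Psi_0=d\,\Phi(0)=\mathcal{O}(d)$ we conclude $\Psi_T=\mathcal{O}(d)$.

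Proving this trace inequality with a usable remainder is the main obstacle. For a scalar argument it is just Taylor's theorem, but for matrices $\tr[\Phi(A+\rho)]$ has no naive expansion: the exact second-order remainder is governed by the second Fréchet derivative of $A\mapsto\tr[\Phi(A)]$, a divided-difference object (Daleckii--Krein) that is unwieldy for a general $\Phi$. The idea is to bypass it via the Laplace representation: $\tr[\Phi(A+\rho)]=\int_0^{\infty}\tr[e^{\eta(A+\rho)}]\,d\mu(\eta)$, and for each $\eta$ apply the Golden--Thompson inequality $\tr[e^{\eta(A+\rho)}]\le\tr[e^{\eta A}e^{\eta\rho}]$ together with the spectral bound $e^{\eta\rho}\preceq I_d+\eta\rho+\tfrac{\eta^2}{2}e^{\eta\norm{\rho}_\op}\rho^2$, then integrate against $\mu$; since $\int_0^{\infty}\eta e^{\eta A}\,d\mu(\eta)=\Phi'(A)$, the first-order term reappears exactly. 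The delicate part is showing the leftover is $\mathcal{O}(\Psi_t/T)$: one has $\rho^2\preceq\norm{\rho}_\op^2 I_d\preceq 4l^2 I_d$, but the $\eta^2 e^{\eta\norm{\rho}_\op}$ weight is only small near the bulk of $\mu$, so the argument must exploit that $\mu$ is a Gaussian-type measure concentrated around $\eta\asymp 1/(l\sqrt{T})$ together with $\norm{\rho}_\op\le 2l$ to keep both the $e^{\eta\norm{\rho}_\op}$ factors bounded and the $\eta^2$-weighted mass of order $1/(Tl^2)$. Getting these constants right is what pins down the admissible scale of the potential, and is precisely what the imaginary-error-function choice accomplishes.

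Finally I would turn $\Psi_T=\mathcal{O}(d)$ into the stated regret. Diagonalize $\mathcal{R}_T=\sum_i\lambda_i v_iv_i^{\ast}$ and set $p_i\defeq\langle v_i|X|v_i\rangle\ge 0$ (so $\sum_i p_i=1$); then $\reg_T(X)=\inner{\mathcal{R}_T}{X}=\sum_i\lambda_i p_i$. Dropping the nonpositive terms and using $\Phi(\lambda_i)\ge e^{\lambda_i^2/(2Tl^2)}$ for $\lambda_i\ge 0$ gives $\lambda_i\le l\sqrt{2T\log\Phi(\lambda_i)}$, hence $\reg_T(X)\le l\sqrt{2T}\,\bigl(\sum_i p_i\log\Phi(\lambda_i)\bigr)^{1/2}$ by Cauchy--Schwarz; the Gibbs variational inequality then gives $\sum_i p_i\log\Phi(\lambda_i)\le\log\Psi_T-H(p)=\mathrm{KL}(p\,\|\,d^{-1}\mathbf{1})+\mathcal{O}(1)$, where $H(p)$ is the Shannon entropy. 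The last ingredient is the matrix-to-vector passage: dephasing $X$ in the eigenbasis of $\mathcal{R}_T$ cannot decrease entropy, so $\mathrm{KL}(p\,\|\,d^{-1}\mathbf{1})=\log d-H(p)\le\log d-S(X)=S(X\,\|\,d^{-1}I_d)$. Altogether $\reg_T(X)=\mathcal{O}\bigl(l\sqrt{T\,(S(X\,\|\,d^{-1}I_d)+1)}\bigr)$, which is the bound of the formal Theorem~\ref{thm:lea_main} up to the lower-order additive term (and the usual $\log\log T$ correction when $T$ is unknown) absorbed into the informal statement.
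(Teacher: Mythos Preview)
Your proposal captures the right high-level ideas—the Laplace/Golden--Thompson route to a matrix Jensen-type inequality, and the Gibbs/dephasing extraction of $S(X\,\|\,d^{-1}I_d)$ at the end—and the computational claim is fine. But the central step, the one-step potential bound $\Psi_t\le(1+\mathcal O(1/T))\,\Psi_{t-1}$, does not hold for the fixed Gaussian-scale potential you describe. After Golden--Thompson and $\rho_t^2\preceq 4l^2 I$, your per-eigenvalue remainder is essentially $l^2\,\Phi''(\lambda+O(l))/\Phi(\lambda)$, and for $\Phi(x)\asymp e^{x^2/(2Tl^2)}$ this ratio is $\asymp 1/T+\lambda^2/(T^2l^2)$, which is $\mathcal O(1/T)$ only when $|\lambda|=\mathcal O(l\sqrt{T})$. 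The invariant $\Psi_{t-1}=\mathcal O(d)$ that you would like to maintain inductively constrains $\lambda$ merely to $\mathcal O(l\sqrt{T\log d})$, so the remainder is actually $\mathcal O((\log d)/T)\,\Psi_{t-1}$; closing the loop this way gives $\Psi_T=d^{\mathcal O(1)}$, not $\mathcal O(d)$. In your final step $\log\Psi_T$ enters additively inside the square root, so the distinction between $\log d+\mathcal O(1)$ and $\mathcal O(\log d)$ is exactly the difference between the instance-optimal bound and the MMWU bound. (The same obstruction already appears in the vector case: a fixed-scale exp-square potential with a second-order Taylor remainder does not yield parameter-free regret.)

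The paper avoids this by two coupled changes. First, it uses a \emph{time-varying} potential $\Phi_t$ with scale $\asymp t$ (not $T$), satisfying the scalar backward-heat condition $\tfrac12[\Phi_{t+1}(s+\eps)+\Phi_{t+1}(s-\eps)]\le\Phi_t(s)$; this replaces your multiplicative growth by exact non-increase. Second, it predicts with the \emph{discrete} derivative $X_t=\tfrac{1}{2\eps}[\Phi_t(S_t+\eps I)-\Phi_t(S_t-\eps I)]$ on an \emph{unconstrained} domain (after a spectraplex-to-$\mathbb{H}_{d\times d}$ reduction), rather than the normalized $\Phi'$. This is precisely what makes the one-sided Jensen trace inequality
\[
\tr[\Phi_t(S-G)]\le\tr\Bigl[\tfrac{\eps I-G}{2\eps}\Phi_t(S+\eps I)+\tfrac{\eps I+G}{2\eps}\Phi_t(S-\eps I)\Bigr]
\]
the right tool: combined with the backward-heat condition it telescopes exactly, with no second-order slack. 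The Laplace argument (yours and the paper's agree here) then needs only verify this inequality, which is sharper and more structured than your Taylor remainder bound; the paper's transform is also two-sided, matching the even symmetry of the erfi/exp-square potentials. Your parenthetical ``data-adaptive scale'' is the right fix in spirit, but it is not a constant-tuning issue: once you make the scale time-varying you are forced to abandon the Taylor route and land on the paper's discrete-derivative/Jensen machinery.
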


Discussions are required to interpret this result. First, it is known that $S(X||d^{-1}I_d)\leq \log d$ for all $X\in\calX$, therefore Eq.~\eqref{eq:regret_ours_summary} is never worse than the $\calO\rpar{\sqrt{T\log d}}$ regret bound of MMWU. Given that, the value of Theorem~\ref{thm:lea_main_informal} is the substantial improvement over MMWU when the comparator $X$ is ``easy''. For example:
\begin{itemize}
    \item We obtain $\mathcal{O}(\sqrt{T})$ regret if all the eigenvalues of $X$ are of the same order $\sim d^{-1}$.
    \item We obtain $\mathcal{O}(\sqrt{bT})$ regret if $X=\frac{e^A}{\tr(e^A)}$ for some Hermitian matrix $A$ with $\norm{A}_{\op}\leq b$.
\end{itemize}
Compared to the minimax optimal bound of $\calO\rpar{\sqrt{T\log d}}$, such instance-dependent savings are particularly desirable when $d$ is exponential in certain auxiliary parameters, such as in quantum information (Section~\ref{sec:intro_quantum}). 

Next, for readers familiar with the derivation of MMWU, Eq.~\eqref{eq:regret_ours_summary} should be quite natural. As we elaborate in Section~\ref{sec:prelim_lea_mmwu}, MMWU with an arbitrary \emph{learning rate} $\eta>0$ ensures
\begin{equation}\label{eq:mmwu_eta_prototypical}
\reg_T(X)\lesssim \eta^{-1}S(X||d^{-1}I_d)+\eta T,
\end{equation}
and the optimal $X$-dependent choice of $\eta=\sqrt{S(X||d^{-1}I_d)/T}$ (despite requiring unavailable oracle knowledge) would give us Eq.~\eqref{eq:regret_ours_summary}. From this angle, achieving Eq.~\eqref{eq:regret_ours_summary} can be regarded as a problem of hyperparameter tuning, and one could do this by online model selection (e.g., \cite{foster2017parameter,chen2021impossible}) -- running \emph{Multiplicative Weight Update} (MWU; i.e., the diagonal special case of MMWU) \cite{littlestone1994weighted} over MMWU instances with different $\eta$. So what is the catch?

This is why the second part of Theorem~\ref{thm:lea_main_informal} on the computational complexity is crucial. Online model selection necessarily inflates the computational complexity of the algorithm, making the comparison to MMWU less clear. In contrast, Theorem~\ref{thm:lea_main_informal} improves the regret bound of MMWU while retaining its computational costs, indicating that the advantage over MMWU is ``free''. Such an overarching objective has motivated a fruitful line of works on adaptive (or \emph{parameter-free}) online learning \cite{chaudhuri2009parameter,streeter2012no,luo2015achieving,orabona2016coin,cutkosky2018black,mhammedi2020lipschitz}, and our work fills the void for the matrix LEA problem. 

From a technical perspective, being parameter-free means that instead of Eq.~\eqref{eq:mmwu_eta_prototypical}, we directly achieve the $\mathcal{O}\rpar{\sqrt{T\cdot S(X||d^{-1}I_d)}}$ regret bound without the need to choose any learning rate $\eta$. This relies on the \emph{potential method} which is a general algorithmic framework in online learning (see, e.g., Ref.~\cite{cesa2006prediction}) summarized in Section~\ref{sec:overview_technique}. MMWU corresponds to a special case of this framework with the exponential potential, while improved, parameter-free potential functions have been studied extensively in the vector setting of LEA \cite{mcmahan2014unconstrained,zhang2022pde,harvey2023optimal}. However, their generalization to matrix LEA is not merely a straightforward application, as the noncommutative\footnote{Specifically, the matrices $X_1,\ldots,X_T$ and $G_1,\ldots,G_T$ can have different eigenspaces, therefore do not commute in general. In comparison, the vector LEA problem essentially enforces all these matrices to be diagonal, thus commuting with each other. A generic difficulty in matrix analysis is that plenty of natural properties fail to hold on non-commuting matrices.} nature of the problem introduces significant difficulties for the use of the convexity condition. For context, exploiting convexity through the Jensen's inequality has been a key step for most potential-based online learning algorithms. 

To address this bottleneck, we present an in-depth study on a ``one-sided'' Jensen's trace inequality that the matrix potential method naturally demands (see Section~\ref{sec:overview_technique} and Section~\ref{sec:matrix_Jensen}). We show that intriguingly, such an inequality does not hold for arbitrary convex functions, but it holds on a fairly large function class that includes the iconic parameter-free potentials in the literature. This could be of independent interest, and the proof builds on a novel Laplace transform technique which is the crux of this work. 

Finally, we complement Theorem~\ref{thm:lea_main_informal} with optimality analysis. Our regret bound is optimal following the lower bound in vector LEA \cite{negrea2021minimax}, and we provide a different analysis using elementary techniques in order statistics (see Theorem~\ref{thm:reg_lower}). Computationally, similar to MMWU, the bottleneck of our algorithm is the eigen-decomposition at each time step which has been recently shown to take essentially $\calO\rpar{d^\omega}$ time \cite{banks2023pseudospectral,shah2025hermitian,sobczyk2025deterministic} ($\omega\approx 2.37$ is the matrix multiplication exponent) and $\calO\rpar{d^2}$ memory. We show that $\Omega\rpar{d^2}$ memory is required to achieve $o(T)$ regret, and further generalize this to a combinatorial characterization of the memory lower bound when the comparator $X$ is constrained a priori (see Theorem~\ref{thm:mem_lower}). 

\subsubsection{Applications in online learning of quantum states}\label{sec:intro_quantum}

For applications, we explore the benefits of our algorithm in quantum learning theory. We consider here the online learning of quantum states (see Section~\ref{sec:prelim_quantum} for background and Definition~\ref{def:online_quantum} for the formal definition), where the unknown target quantum state $\rho\in\mathcal{X}$ plays the role of the comparator $X$ in matrix LEA. At each time step $t\in[T]$, the learner outputs a hypothesis quantum state $\rho_t\in\mathcal{X}$, and the adversary then provides a Hermitian observable $O_t$ playing the role of $G_t$ satisfying $\norm{O_t}_\op\leq l$. The learner incurs a convex loss $\ell_t(O_t,\rho)$, and after $T$ steps the final regret is defined as
\begin{align}\label{eq:def_quantum_loss_convex}
\reg_T(\rho)\defeq \sum_{t=1}^T\ell_t(O_t,\rho_t)-\sum_{t=1}^T\ell_t(O_t,\rho).
\end{align}
A common choice of the loss function is the $L_1$ loss $\ell_t(O_t,\rho)=\abs{\tr(O_t\rho_t)-\tr(O_t\rho)}$. In this realizable setting, the regret of the learner is given by
\begin{align}\label{eq:def_quantum_loss}
\reg_T(\rho)=\sum_{t=1}^T\abs{\tr(O_t\rho_t)-\tr(O_t\rho)}.
\end{align}

To apply our algorithm, we first present its extension to \emph{Online Convex Optimization} (OCO; Corollary~\ref{coro:nonlinear_reg}). The regret bound in Theorem~\ref{thm:lea_main_informal} still holds as long as the loss function $\ell_t(O_t,\rho)$'s are Lipschitz with respect to $O_t$. 

Next, we focus on the $L_1$ loss in Eq.~\eqref{eq:def_quantum_loss} which is $l$-Lipschitz. Our regret bound adapts to the quantum relative entropy $S(\rho||d^{-1}I_d)$ of the target state $\rho$ relative to the maximally mixed state $d^{-1}I_d$, therefore intuitively, $\rho$'s that are more ``mixed'' (i.e., with a more evenly distributed spectrum) result in smaller regret bounds. We demonstrate such a benefit in three concrete settings, including learning quantum states corrupted by depolarization noise, random quantum states, and Gibbs states. Finally, we move on to the quadratic loss and show that our algorithm provides improved regret bounds for estimating purity, quantum virtual cooling, and R\'{e}nyi-$2$ correlation function. 

Below is a more detailed summary of these results; also see Table~\ref{tab:quantum}. We also provide the $\epsilon$-mistake bound widely adopted by the quantum literature~\cite{aaronson2018online}, which is the upper bound on the number of iterations with $\abs{\ell_t(O_t,\rho_t)-\ell_t(O_t,\rho)}\geq\epsilon$.

\begin{table}[htbp]
\centering\renewcommand\cellalign{lc}
\resizebox{1.0\columnwidth}{!}{
\begin{tabular}{lllll}
\hline
Class of states & Loss & Guarantee & Regret bound & Mistake bound \\ \hline
General & $L_1$ & Worst & $\mathcal{O}(\sqrt{T\log d})$~\cite{aaronson2018online} & $\mathcal{O}\rpar{\log d/\epsilon^2}$\\
\hline
General & $L_1$ & Worst & \makecell{$\mathcal{O}\rpar{\sqrt{T\cdot S(\rho||d^{-1}I_d)}}$\\ Corollary~\ref{coro:nonlinear_reg}} & $\mathcal{O}\rpar{S(\rho||d^{-1}I_d)/\epsilon^2}$\\
\hline
Noisy circuit $(D,\gamma)$ & $L_1$ & Worst & \makecell{$\mathcal{O}\rpar{(1-\gamma)^D\sqrt{T\log d}}$\\ Corollary~\ref{coro:noisy_state}} & $\mathcal{O}\rpar{(1-\gamma)^{2D}\log d/\epsilon^2}$\\
\hline
\makecell{Subsystem of Haar\\ random state $(d,d')$} & $L_1$ & Average & \makecell{$\mathcal{O}\rpar{\sqrt{Td/d'}}$\\ Corollary~\ref{coro:random_quantum}} & $\mathcal{O}\rpar{d/(d'\epsilon^2)}$\\
\hline
Random product state $(\eta)$ & $L_1$ & Average & \makecell{$\mathcal{O}\rpar{\sqrt{T\log d(1-\eta)}}$\\ Corollary~\ref{coro:random_quantum}} & $\mathcal{O}\rpar{(1-\eta)\log d/\epsilon^2}$\\
\hline
Gibbs state $(\beta=\mathcal{O}(1))$ & $L_1$ & Worst & \makecell{$\mathcal{O}\rpar{\sqrt{\beta T}}$\\ Corollary~\ref{coro:gibbs}} & $\mathcal{O}\rpar{\beta/\epsilon^2}$\\
\hline
Gibbs state (any $\beta$) & $L_1$ & Average & \makecell{$\mathcal{O}\rpar{\sqrt{\beta T}}$\\ Corollary~\ref{coro:gibbs}} & $\mathcal{O}\rpar{\beta/\epsilon^2}$\\
\hline
\makecell{General\\(Quantum virtual cooling)} & $\tr(O_t\rho_t^2)$ & Worst & \makecell{$\mathcal{O}\rpar{\sqrt{T\cdot S(\rho||d^{-1}I_d)}}$\\ Corollary~\ref{coro:nonlinear_quantum}} & $\mathcal{O}\rpar{S(\rho||d^{-1}I_d)/\epsilon^2}$\\
\hline
\makecell{General\\(R\'{e}nyi-$2$ correlation)} & $\tr(O_t\rho_tO_t\rho_t)$ & Worst & \makecell{$\mathcal{O}\rpar{\sqrt{T\cdot S(\rho||d^{-1}I_d)}}$\\ Corollary~\ref{coro:nonlinear_quantum}} & $\mathcal{O}\rpar{S(\rho||d^{-1}I_d)/\epsilon^2}$\\
\hline
\end{tabular}}
\vspace{0.3em}
\caption{A summary of our results for online learning of quantum states. Here, we assume that $\norm{O_t}_{\op}\leq 1$ for simplicity. In addition to the regret bound, we also provide the $\epsilon$-mistake bound widely adopted by the quantum literature~\cite{aaronson2018online}, which is the upper bound on the number of iterations with $\abs{\ell_t(O_t,\rho_t)-\ell_t(O_t,\rho)}\geq\epsilon$.}
\label{tab:quantum}
\end{table}

\vspace{0.5em}\noindent\textbf{Noisy quantum states. }First, we consider quantum states corrupted by noises on near-term quantum devices~\cite{preskill2018quantum}. While such noises erase the useful information which is harmful for computing, they also smooth the spectrum of the target quantum states which simplifies the learning task and can be exploited by our algorithm (but not by MMWU). Specifically, we consider the local depolarization noise, which is a typical noise model for analyzing near-term quantum computation. We show that depolarization noise of rate $\gamma$ can provide an $(1-\gamma)^{1/2}$ multiplicative factor on the regret bound, which can be further boosted to a factor of $(1-\gamma)^D$ if the underlying quantum state is prepared by a quantum circuit of depth $D$ with local depolarization noise at each layer (see Corollary~\ref{coro:noisy_state}).

\vspace{0.5em}\noindent\textbf{Random quantum states. }Another class of quantum states that benefits from our algorithm is random quantum states, which are essential resources in pseudorandomness (quantum cryptography)~\cite{ji2018pseudorandom}, demonstration of quantum advantage in sampling tasks~\cite{arute2019quantum,zhong2020quantum}, and quantum benchmarking~\cite{elben2023randomized}. Here, we show that compared to MMWU, our algorithm can obtain a better regret bound \emph{in the average case} for the online learning of random quantum states. In particular, we consider two scenarios: (i) $\rho$ is a subsystem of a Haar random quantum state~\cite{mele2024introduction}, and (ii) $\rho$ is a random product state and each qubit has a bounded Pauli second moment matrix (see Section~\ref{sec:prelim_quantum} for the definition). For (i), we show that the regret of our algorithm is given by $\mathcal{O}\rpar{\sqrt{Td/d'}}$ when $d\ll d'$, with $d$ and $d'$ being the dimension of the subsystem and the dimension of the full Haar random quantum states. For (ii), we show that our algorithm provides a factor of $\sqrt{1-\eta}$ reduction to the regret of MMWU, where $1-\eta$ is an upper bound on the operator norm of the Pauli second moment matrix (see Corollary~\ref{coro:random_quantum}).

\vspace{0.5em}\noindent\textbf{Gibbs states. }Learning Gibbs states, which are the states of the form $e^{-\beta H}/\tr(e^{-\beta H})$ given Hamiltonian $H$ and inverse temperature $\beta$, also benefit from our algorithm.  The Gibbs state tells us what the equilibrium state of the quantum system will be if it interacts with the environment at a particular temperature and reaches thermal equilibrium. It is widely considered in quantum Gibbs sampling~\cite{chowdhury2017quantum,kastoryano2016quantum,chen2025quantum}, which is the backbone of many quantum algorithms such as semidefinite programming solvers~\cite{brandao2017quantum,van2017quantum,brandao2019quantum,brandao2022faster}, quantum annealing~\cite{montanaro2015quantum}, quantum machine learning~\cite{wiebe2014quantum}, and quantum simulations at finite temperature~\cite{motta2020determining}. We show an $\mathcal{O}\rpar{\beta\sqrt{T}}$ regret bound for predicting properties of Gibbs states at inverse temperature $\beta=\mathcal{O}(1)$ in the worst case and arbitrary $\beta$ in the average case over random Gaussian Hamiltonians or random sparse Hamiltonians in the Pauli basis, improving upon the regret of the standard MMWU algorithm (see Corollary~\ref{coro:gibbs}). 

\vspace{0.5em}\noindent\textbf{Purity, quantum virtual cooling, and R\'{e}nyi-$2$ correlation. }We also prove regret bounds for the online learning of nonlinear quantum properties. Here, we consider the more general loss function in the form of Eq.~\eqref{eq:def_quantum_loss_convex}. To make the function convex with respect to $\rho_t$, we assume $O_t\succeq 0$ is PSD.

The first loss function we consider is of the form $\ell_t(O_t,\rho_t)=\tr(O_t\rho_t^2)$. When $O_t=I$, this task reduces to purity estimation of the given quantum state in an online setting. For a general $O_t$, this task is known as quantum virtual cooling~\cite{cotler2019quantum}. These two quantities play an important role in quantum benchmarking~\cite{eisert2020quantum}, experimental and theoretical quantum (entanglement) entropy (purity) estimation~\cite{islam2015measuring,kaufman2016quantum,brydges2019probing,zhang2021experimental,shaw2024benchmarking,gong2024sample,liu2024exponential}, quantum error mitigation~\cite{cai2023quantum,koczor2021exponential,huggins2021virtual}, quantum principal component analysis~\cite{lloyd2014quantum,huang2020predicting,huang2022quantum,liu2024exponential}, and quantum metrology~\cite{giovannetti2011advances}. In this work, we show that the regret of estimating quantum virtual cooling is of $\mathcal{O}\rpar{\sqrt{T\cdot S(\rho||d^{-1}I_d)}}$ scaling, which is the same as the result with $L_1$ loss (see Corollary~\ref{coro:nonlinear_quantum}).

The second loss function we consider is the R\'{e}nyi-$2$ correlation of the form $\ell_t(O_t,\rho_t)=\tr(O_t\rho_tO_t\rho_t)$. It potentially applies to discovering strong-to-weak spontaneous symmetry breaking (SWSSB) in mixed states~\cite{lessa2025strong}. Here, we again show that the regret of our algorithm is $\mathcal{O}\rpar{\sqrt{T\cdot S(\rho||d^{-1}I_d)}}$ (see Corollary~\ref{coro:nonlinear_quantum}). As the quantum states considered in SWSSB are mixed states, which have more evenly distributed spectra compared to pure states, they can thus benefit from our algorithm.

\subsection{Overview of techniques}\label{sec:overview_technique}

Our results are built on a potential-based framework for matrix online learning. We now sketch its main idea, where the bottleneck is, and how we overcome this bottleneck. 

\vspace{0.5em}\noindent\textbf{Lifting the constraint. }Matrix LEA is a constrained online learning problem on the spectraplex $\Delta_{d\times d}$. As the preparatory step, we first present a rate-preserving reduction (Algorithm~\ref{alg:reduction}) to the unconstrained problem on the space of all Hermitian matrices, $\mathbb{H}_{d\times d}$. This generalizes existing techniques from vector LEA \cite{luo2015achieving,orabona2016coin,cutkosky2018black}, and consequently, we will assume the domain is $\mathbb{H}_{d\times d}$. 

Next, the potential method requires specifying a time-varying \emph{potential function} $\Phi_t:\R\rightarrow\R$ as input. Its argument can be extended to Hermitian matrices in the standard spectral manner: for any $X\in\mathbb{H}_{d\times d}$ with eigen-decomposition $X=\sum_{i=1}^d\lambda_iv_iv_i^\herm$ (where $\lambda_i\in\R$, $v_i\in\C^d$), we define $\Phi_t(X)\defeq \sum_{i=1}^d\Phi_t(\lambda_i)v_iv_i^\herm\in\mathbb{H}_{d\times d}$. 

\vspace{0.5em}\noindent\textbf{Unconstrained algorithm. }Our algorithm then operates on $\mathbb{H}_{d\times d}$ as follows (see Algorithm~\ref{alg:unconstrained}). At the beginning of the $t$-th time step, it computes $S_t\defeq-\sum_{i=1}^{t-1}G_i$ from the observed loss matrices, which serves as a ``sufficient statistic'' of the past (in the sense of Ref.~\cite{foster2018online}). With the knowledge of $\norm{G_t}_\op\leq 1$, the algorithm chooses the unconstrained prediction
\begin{equation*}
X_t=\frac{1}{2}\spar{\Phi_t\rpar{S_t+I}-\Phi_t\rpar{S_t-I}}\in\mathbb{H}_{d\times d}. 
\end{equation*}

\vspace{0.5em}\noindent\textbf{Intuition and choosing $\Phi_t$. }To see the intuition of this algorithm, consider the scalar case ($d=1$). The rationale here is that $X_t$ approximates the derivative $\nabla\Phi_t(S_t)$, which yields $G_tX_t\approx \Phi_t(S_t)-\Phi_t(S_t-G_t)\approx \Phi_{t-1}(S_t)-\Phi_{t}(S_{t+1})$. In fact, with a convex $\Phi_{t}$, the standard scalar Jensen's inequality gives us
\begin{align*}
\Phi_{t}(S_{t+1})&\leq \frac{1-G_t}{2}\Phi_{t}(S_{t}+1)+\frac{1+G_t}{2}\Phi_{t}(S_{t}-1)\tag{$-1\leq G_t\leq 1$}\\
&=\underbrace{\frac{1}{2}\spar{\Phi_{t}(S_{t}+1)+\Phi_{t}(S_{t}-1)}}_{\eqdef\Diamond}-G_tX_t.  
\end{align*}
Furthermore, there are also known choices of $\Phi_t$'s satisfying $\Diamond\leq \Phi_{t-1}(S_t)$ (which is closely related to a discretization of It\^o's formula \cite{harvey2020optimal,zhang2022pde}), with the two iconic ``parameter-free'' examples being
\begin{equation}\label{eq:potential_sketch}
\Phi^\expsq_t(s)=\frac{1}{d\sqrt{t}}\exp\rpar{\frac{s^2}{2t}},\quad\mathrm{and}\quad\Phi^\erfi_t(s)=\frac{\sqrt{t}}{d}\spar{2\int_0^{\frac{s}{\sqrt{2t}}}\rpar{\int_0^u\exp(x^2)\diff x }\diff u-1}.
\end{equation}
Taking a telescopic sum of the inequality leads to a total loss upper bound
\begin{equation*}
\sum_{t=1}^TG_tX_t\leq -\Phi_T\rpar{-\sum_{t=1}^TG_t}+\calO(1),
\end{equation*}
and due the duality between the total loss and the regret \cite{mcmahan2014unconstrained}, we further obtain the corresponding regret bound expressed through the Fenchel conjugate $\Phi^*_T$ of $\Phi_T$, 
\begin{equation*}
\reg_T(X)\leq \Phi_T^*(X)+\calO(1).
\end{equation*}
Specifically, the potential functions in Eq.~\eqref{eq:potential_sketch} have nice Fenchel conjugates such that the regret bound obtained in this way matches the regret bound of default baselines under the optimal oracle tuning -- this is where the name ``parameter-free'' comes from. 

\vspace{0.5em}\noindent\textbf{Challenge of Jensen's inequality. }While the above potential-based analytical strategy has been standard in the scalar and diagonal setting of online learning, the generalization to the noncommutative matrix setting faces a crucial challenge: analogous to the previous scalar Jensen's inequality, we need to show that for the aforementioned convex, parameter-free $\Phi_t$,
\begin{equation}
\tr\spar{\Phi_{t}(S-G)}\leq \tr\spar{\frac{I-G}{2}\Phi_{t}(S+I)+\frac{I+G}{2}\Phi_{t}(S-I)},\quad\forall S,G\in\mathbb{H}_{d\times d}, \norm{G}_\op\leq 1.\label{eq:one_sided_jensen_sketch}
\end{equation}
As we further discuss in Section~\ref{sec:matrix_Jensen}, this deviates from the better-known \emph{Jensen's trace inequality} \cite{hansen2003jensen} as the weighting matrices $\frac{I+G}{2}$ and $\frac{I-G}{2}$ are applied ``only from one side''. Somewhat surprisingly, we are unable to find any existing study in the literature, despite the appeared importance of this one-sided Jensen's trace inequality in matrix linear optimization. 

The highlight of this work is thus a characterization of Eq.~\eqref{eq:one_sided_jensen_sketch}. We first show that unlike the standard Jensen's trace inequality which holds for all convex functions, Eq.~\eqref{eq:one_sided_jensen_sketch} does not hold for arbitrary convex $\Phi_t$, indicating that the one-sided version is substantially different from the known regimes. A specific counterexample is the absolute value function $\Phi_t(s)=\abs{s}$ (Example~\ref{example:absolute}). Nonetheless, a silver lining is that with an exponential function $\Phi_t(s)=\exp(cs);c\in\R$, Eq.~\eqref{eq:one_sided_jensen_sketch} holds due to the celebrated \emph{Golden-Thompson inequality} \cite{golden1965lower,thompson1965inequality}, which suggests thinking about more general $\Phi_t$ as positive linear combinations of exponential functions. Along this line, we prove that

\begin{theorem}[Informal, see Theorem~\ref{thm:main_technical} and Corollary~\ref{thm:phi_itself}]\label{thm:jensen_like_informal}
Eq.~\eqref{eq:one_sided_jensen_sketch} holds if either $\Phi_t$ itself or the second derivative of $\Phi_t$ is the two-sided Laplace transform of a non-negative function.
\end{theorem}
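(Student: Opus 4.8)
\textbf{Proof proposal for Theorem~\ref{thm:jensen_like_informal}.}

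The plan is to reduce the one-sided Jensen trace inequality for a general potential $\Phi$ to the case of exponential potentials, where it already follows from the Golden--Thompson inequality, and then ``integrate'' the exponential case against the non-negative Laplace density. Concretely, suppose first that $\Phi(s) = \int_{-\infty}^{\infty} e^{cs}\,\mu(c)\,\diff c$ for some non-negative $\mu$. For each fixed $c$, the function $f_c(s) = e^{cs}$ satisfies Eq.~\eqref{eq:one_sided_jensen_sketch}: this is exactly the statement that $\tr\bigl[e^{c(S-G)}\bigr] \le \tr\bigl[\tfrac{I-G}{2}e^{c(S+I)} + \tfrac{I+G}{2}e^{c(S-I)}\bigr]$, which I would verify by writing $e^{\pm c} = \cosh c \mp \sinh c$ on the right-hand side, so that the bracket becomes $\cosh(c)\,e^{cS'} - \sinh(c)\,\{\text{something}\}$ — more carefully, the right side equals $\tfrac{1}{2}(e^{c(S+I)} + e^{c(S-I)}) - \tfrac{1}{2}G(e^{c(S+I)} - e^{c(S-I)})$; since $e^{c(S\pm I)} = e^{\pm c}e^{cS}$ only if $S$ and $I$ commute (which they do!), this is $\cosh(c)\,e^{cS} - \sinh(c)\,G e^{cS}$ under the trace, i.e. $\tr[(\cosh(c) I - \sinh(c) G)e^{cS}] = \tr[e^{cS}(\cosh(c)I - \sinh(c)G)]$, and then Golden--Thompson gives $\tr[e^{cS - cG}] \le \tr[e^{cS}e^{-cG}]$, so it remains to check $\tr[e^{cS}e^{-cG}] \le \tr[e^{cS}(\cosh(c)I - \sinh(c)G)]$, which follows eigenvalue-by-eigenvalue of $G$ from the scalar convexity bound $e^{-cx} \le \cosh(c) - \sinh(c)x$ for $x\in[-1,1]$ (valid since $e^{-c\cdot}$ is convex and the right side is its chord on $[-1,1]$), using $e^{cS}\succeq 0$. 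Integrating this pointwise inequality against $\mu(c)\,\diff c \ge 0$ and using linearity of the trace and of the spectral functional calculus (so that $\int f_c(X)\mu(c)\diff c = \Phi(X)$ for Hermitian $X$) yields Eq.~\eqref{eq:one_sided_jensen_sketch} for $\Phi$.

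For the second case — $\Phi'' = \int e^{cs}\mu(c)\,\diff c$ with $\mu \ge 0$ — I would reduce to the first case by a double integration. Writing $\Phi(s) = a + bs + \int_0^s\int_0^u \Phi''(x)\,\diff x\,\diff u$, the affine part $a + bs$ satisfies Eq.~\eqref{eq:one_sided_jensen_sketch} trivially with equality (indeed any affine function does, since the convex-combination identity $\tfrac{I-G}{2}(S+I) + \tfrac{I+G}{2}(S-I) = S - G$ holds exactly as a matrix identity). For the remaining part, interchanging the order of integration expresses it as $\int_{-\infty}^{\infty} \psi_c(s)\,\mu(c)\,\diff c$ where $\psi_c(s) = \int_0^s\int_0^u e^{cx}\,\diff x\,\diff u = c^{-2}(e^{cs} - 1 - cs)$ for $c \ne 0$ (and $\psi_0(s) = s^2/2$). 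Each $\psi_c$ is a non-negative combination of $e^{cx}$, the constant $1$, and the linear function $s$ — all of which satisfy Eq.~\eqref{eq:one_sided_jensen_sketch} (the first by the exponential case just established, the latter two being affine) — hence $\psi_c$ does too, and integrating against $\mu\,\diff c \ge 0$ finishes the argument. I should double-check the sign: $c^{-2}$ and $1/2$ are positive, so this is genuinely a non-negative combination, and the ``$-1 - cs$'' terms are affine and contribute with the correct sign.

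I expect the main obstacle to be not the exponential base case (which is essentially Golden--Thompson plus a scalar estimate) but rather \emph{the justification of passing the Laplace integral through the trace and the spectral calculus}, i.e. proving $\tr\bigl[\int e^{cX}\mu(c)\,\diff c\bigr] = \int \tr[e^{cX}]\,\mu(c)\,\diff c$ and, more delicately, $\Phi(X) = \int e^{cX}\mu(c)\,\diff c$ as an identity of Hermitian matrices — this requires controlling convergence of the Laplace integral uniformly over the (bounded) spectrum of the matrices $S \pm I$, and possibly a hypothesis that $\mu$ decays fast enough that $\int e^{c(\|S\|_{\op}+1)}\mu(c)\,\diff c < \infty$; for the concrete parameter-free potentials of Eq.~\eqref{eq:potential_sketch} the Gaussian-type density $\mu(c) \propto e^{-tc^2/2}$ makes this automatic, so in the formal statement I would either impose such an integrability/growth condition or verify it for the specific $\Phi^{\expsq}_t$ and $\Phi^{\erfi}_t$. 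A secondary check is that the second-derivative case genuinely captures $\Phi^{\erfi}_t$: its second derivative is a Gaussian times a polynomial-free exponential $\propto \sqrt{t}^{-1}e^{s^2/(2t)}$... wait, that is $\Phi^{\expsq}$ up to constants, so $(\Phi^{\erfi}_t)'' \propto \Phi^{\expsq}_t$, and one should confirm $\Phi^{\expsq}_t$ itself is a Laplace transform of a non-negative function (it is, being a Gaussian in $s$ up to scaling — its Fourier/Laplace representation has a Gaussian, hence non-negative, density), so both potentials are covered, the first directly and the second via its second derivative.
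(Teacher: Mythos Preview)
Your proposal is correct and follows essentially the same route as the paper: establish the exponential case via Golden--Thompson plus the scalar chord bound $e^{-cx}\le\cosh c-x\sinh c$ on $[-1,1]$, then integrate against the non-negative Laplace density, handling the $\Phi''$ case by writing $\Phi$ as an affine part (which satisfies the inequality with equality) plus $\int \mu(c)\,c^{-2}(e^{cs}-1-cs)\,\diff c$. The only cosmetic difference is that the paper derives the ``$\Phi$ itself is a Laplace transform'' case as a corollary of the ``$\Phi''$ is a Laplace transform'' case (by noting $\Phi''(x)=\int t^2\mu(t)e^{-tx}\diff t$), whereas you integrate directly; and your concern about interchanging the integral with the trace is dispatched in the paper simply by Fubini/Tonelli for non-negative integrands.
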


\begin{theorem}[Informal, see Lemma~\ref{lemma:phi_laplace}]\label{thm:potential_transform_informal}
In Eq.~\eqref{eq:potential_sketch}, the ``exp-square potential'' $\Phi^\expsq_t$ is the second derivative of the ``erfi potential'' $\Phi^\erfi_t$, and it is also the two-sided Laplace transform of the positive function $\frac{1}{\sqrt{2\pi}d}\exp\rpar{-\frac{1}{2}tz^2}$ reminiscent of the Gaussian density, such that Theorem~\ref{thm:jensen_like_informal} can be applied. 
\end{theorem}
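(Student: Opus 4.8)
The plan is to establish the two claimed identities by direct elementary computation and then read off the hypothesis of Theorem~\ref{thm:jensen_like_informal}. This is essentially a bookkeeping lemma, so I do not expect a genuine obstacle; the only thing requiring care is to fix the normalization convention for the two-sided Laplace transform so that it matches the hypothesis of Theorem~\ref{thm:main_technical}.

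\textbf{Step 1: $\Phi^\expsq_t$ is the second derivative of $\Phi^\erfi_t$.} Writing $F(u)\defeq\int_0^u\exp(x^2)\,\diff x$, the erfi potential is $\Phi^\erfi_t(s)=\frac{\sqrt t}{d}\left(2\int_0^{s/\sqrt{2t}}F(u)\,\diff u-1\right)$. I would differentiate twice using the fundamental theorem of calculus and the chain rule: first $\frac{\diff}{\diff s}\Phi^\erfi_t(s)=\frac{\sqrt t}{d}\cdot 2F\!\left(\frac{s}{\sqrt{2t}}\right)\cdot\frac{1}{\sqrt{2t}}=\frac{\sqrt 2}{d}F\!\left(\frac{s}{\sqrt{2t}}\right)$, and then $\frac{\diff^2}{\diff s^2}\Phi^\erfi_t(s)=\frac{\sqrt 2}{d}\exp\!\left(\frac{s^2}{2t}\right)\cdot\frac{1}{\sqrt{2t}}=\frac{1}{d\sqrt t}\exp\!\left(\frac{s^2}{2t}\right)=\Phi^\expsq_t(s)$. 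Every function in sight is $C^\infty$ on $\R$, so no regularity caveat arises.

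\textbf{Step 2: $\Phi^\expsq_t$ is the two-sided Laplace transform of $\frac{1}{\sqrt{2\pi}\,d}\exp(-\tfrac12 tz^2)$.} Since $z\mapsto\exp(-\tfrac12 tz^2)$ is even, the two-sided transform $\int_{-\infty}^\infty e^{-sz}(\cdot)\,\diff z$ agrees with its $e^{+sz}$ variant, so the sign convention is immaterial. I would complete the square, $sz-\tfrac t2 z^2=-\tfrac t2\left(z-\tfrac st\right)^2+\tfrac{s^2}{2t}$, to obtain
\[
\int_{-\infty}^\infty e^{sz}\,\frac{1}{\sqrt{2\pi}\,d}\exp\!\left(-\tfrac12 tz^2\right)\diff z=\frac{1}{\sqrt{2\pi}\,d}\exp\!\left(\frac{s^2}{2t}\right)\int_{-\infty}^\infty\exp\!\left(-\tfrac t2\left(z-\tfrac st\right)^2\right)\diff z=\frac{1}{d\sqrt t}\exp\!\left(\frac{s^2}{2t}\right),
\]
using the Gaussian integral $\int_{-\infty}^\infty e^{-az^2}\,\diff z=\sqrt{\pi/a}$ with $a=t/2$. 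For every fixed $s\in\R$ the integrand decays like a Gaussian in $z$, so the integral converges absolutely; hence the representation is valid for all real arguments, in particular for the eigenvalues of $S_t\pm I$ that the algorithm feeds into $\Phi^\expsq_t$ when evaluating Eq.~\eqref{eq:one_sided_jensen_sketch}.

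\textbf{Step 3: invoking Theorem~\ref{thm:jensen_like_informal}.} Since $t>0$ and $d\ge 1$, the kernel $\frac{1}{\sqrt{2\pi}\,d}\exp(-\tfrac12 tz^2)$ is non-negative on all of $\R$. By Step 2, $\Phi^\expsq_t$ is itself the two-sided Laplace transform of a non-negative function, so the first branch of Theorem~\ref{thm:jensen_like_informal} yields Eq.~\eqref{eq:one_sided_jensen_sketch} for $\Phi^\expsq_t$; by Step 1, $\Phi^\expsq_t$ is also the second derivative of $\Phi^\erfi_t$, so the second branch yields Eq.~\eqref{eq:one_sided_jensen_sketch} for $\Phi^\erfi_t$ as well. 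As noted above, the one point to double-check is that the Laplace-transform normalization in the statement of Theorem~\ref{thm:main_technical} is the two-sided one with exactly the sign used here; once this is pinned down, the conclusion follows directly from Steps 1 and 2.
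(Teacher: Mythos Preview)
Your proposal is correct and takes essentially the same approach as the paper: the Laplace-transform identity in Step~2 is proved in Lemma~\ref{lemma:phi_laplace} by exactly the same completing-the-square/Gaussian-integral computation, and the derivative relation in Step~1 is the elementary calculation the paper cites to \cite[Appendix~B.3]{zhang2022pde} rather than writing out. Your remark about the sign convention is harmless since, as you note, the kernel is even in $z$.
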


A high level remark following from Theorem~\ref{thm:potential_transform_informal} is that our algorithm simulates the \emph{averaged} behavior of ``MMWU-like'' base algorithms with Gaussian distributed learning rates, and this intuitively justifies the connection between our objective and the issue of learning rate tuning in MMWU (see Section~\ref{sec:ensemble}). Rigorously, Eq.~\eqref{eq:one_sided_jensen_sketch} and eventually the instance-optimal regret bound of our algorithm follow from the above two results. 

Put into a broader context, the underlying structure of parameter-free potential functions like Eq.~\eqref{eq:potential_sketch} has largely remained mysterious in the literature, and therefore, we believe the above transform domain characterization alongside its connection to the one-sided Jensen's trace inequality are of independent technical interest. For details, Section~\ref{sec:matrix_Jensen} presents our analysis on the one-sided Jensen's trace inequality. Section~\ref{sec:algorithm} specializes it to the parameter-free potentials, and closes the loop with our algorithm and its regret bound. 

\subsection{Related works}

MMWU is a seminal algorithm developed by a number of early works \cite{tsuda2005matrix,kuzmin2007online,warmuth2008randomized}. As matrix LEA is essentially \emph{Online Linear Optimization} (OLO) on the spectraplex, MMWU is the specialization of \emph{Follow the Regularized Leader} (FTRL) with the quantum entropy. See Refs.~\cite{hazan2023introduction,orabona2025modern} for the online learning basics, and Ref.~\cite{tian2025MMWU} for a treatment of MMWU from this angle. 

\vspace{0.5em}\noindent\textbf{Parameter-free online learning. }The adaptivity to the comparator (also known as ``parameter-freeness'') has been studied extensively in various vector online learning settings, including unconstrained OLO on $\R^d$, and the standard vector LEA (i.e., OLO on the probability simplex). There are two different origins for this idea: Ref.~\cite{chaudhuri2009parameter} initiated such study on vector LEA, which was then developed by Refs.~\cite{chernov2010prediction,foster2015adaptive,koolen2015second,luo2015achieving}; in parallel, Ref.~\cite{streeter2012no} studied this idea on general vector OLO, followed by Refs.~\cite{mcmahan2013minimax,mcmahan2014unconstrained}. Later, it was shown that the two regimes can be unified \cite{orabona2016coin,cutkosky2018black,zhang2022optimal}, which reduces the problem to designing better one-dimensional potential functions \cite{mhammedi2020lipschitz,zhang2022pde,cutkosky2024fully,zhang2024improving}. See Ref.~\cite[Section~10]{orabona2025modern} for a summary. 

To date, there are two iconic parameter-free potential functions as shown in Eq.~\eqref{eq:potential_sketch}. The ``exp-square potential'' is due to Ref.~\cite{mcmahan2014unconstrained}, while the quantitatively stronger ``erfi potential'' is due to Ref.~\cite{harvey2020optimal}. We will extend the applicability of both of them to the non-commutative matrix setting. In addition, characterizing them through the Laplace transform is new, although the high level idea was hinted in some sense by an influential early work \cite{koolen2015second}. 

Since the beginning of this field, retaining the computational complexity of the nonadaptive algorithm has been a key consideration. While online model selection can achieve the desirable regret bound in most cases, its practicality has been somewhat questionable; see Ref.~\cite{chaudhuri2009parameter} which initiated the field. Getting rid of that is not only quantitatively stronger, but also cleaner and more coherent with the underlying structure of the problem. 

In terms of comparator-dependent regret lower bounds, Refs.~\cite{streeter2012no,orabona2013dimension,zhang2022pde} presented results for unconstrained OLO on $\R^d$ (also see Ref.~\cite[Section~5]{orabona2025modern}), and Ref.~\cite{negrea2021minimax} presented the only existing result for vector LEA. To our knowledge, the present work is the first to study comparator adaptivity for matrix LEA. En route to our main result, we obtain a comparator adaptive online matrix prediction algorithm on the PSD cone, improving upon the prior work in this setting \cite{foster2017parameter}. 

\vspace{0.5em}\noindent\textbf{Online learning of quantum data. }Online learning of quantum states and quantum processes are widely considered tasks in quantum learning theory. The formal definition of the state-learning problem was given by Ref.~\cite{aaronson2018online}. Using it as the key subroutine, the shadow tomography of quantum states was then studied in a sequence of follow-up works~\cite{aaronson2018shadow,aaronson2019gentle,buadescu2021improved,gong2023learning}. The online learning model for quantum states was studied by Refs.~\cite{yang2020revisiting,chen2020more,lumbreras2022multi,zimmert2022pushing}.  Ref.~\cite{chen2024adaptive} further investigated an adaptive variant of this online learning model where the underlying state may change over time. 

Recently, Refs.~\cite{bansal2025online,raza2024online} considered the task of online learning of quantum processes, and showed improved regret bounds assuming additional prior knowledge. Besides, online learning also serves as a subroutine for learning Pauli channels~\cite{chen2025efficient}.

Finally, we note that our work is part of a larger body of recent results exploring the complexity of learning quantum states. A review of this literature is beyond the scope of this work, and we refer the reader to the survey~\cite{anshu2024survey} for a more thorough overview.

\subsection{Outlook}
In this work, we propose the first algorithm, to the best of our knowledge, achieving an instance-optimal regret bound (with respect to the comparator $X$) for matrix LEA. Our algorithm has the same computational complexity as MMWU, and specifically, its memory complexity is proved to be optimal. Our results are based on the matrix potential method and, crucially, a new ``one-sided'' Jensen's trace inequality, which may be of independent interest. Then, starting from this algorithm backbone, we explore a number of interesting applications in quantum learning theory. For the online learning of quantum states, our algorithm outperforms the best known algorithms in a variety of settings, including learning noisy quantum states, random quantum states, Gibbs states, and predicting non-linear quantum properties. Below, we mention some concrete open questions.

\vspace{0.5em}\noindent\textbf{Precise condition on Jensen's trace inequality. }Regarding the core techniques, our work presents a sufficient condition for the one-sided Jensen's trace inequality Eq.~\eqref{eq:one_sided_jensen_sketch}, and this is general enough for our specific potential functions to apply. Beyond this, as the inequality is supposed to be broadly useful for matrix optimization, an important open question is whether we can obtain a more general, or even sufficient and necessary condition on the considered convex function $\Phi_t$. Specifically, we conjecture that the inequality holds for all monomial functions of even degree, i.e., $\Phi_t(s)=s^{2k}, k\in\N_+$ (Conjecture~\ref{prop:even_poly_conjecture}), towards which we provide some partial result. Another question is whether this inequality can be related or even reduced to classical structures in matrix analysis. 

\vspace{0.5em}\noindent\textbf{Improving the time complexity. }Just like MMWU, our matrix LEA algorithm requires the eigen-decomposition at each iteration, which essentially takes $\calO(d^\omega)$ time in theory and $\calO(d^3)$ time using practical general-purpose methods. As $d$ can be possibly large in practice, it is thus natural to ask if we can bypass this step. A possible solution is low-rank sketching -- this has been shown to improve the time complexity of MMWU \cite {allen2017follow,carmon2019rank}, but its application to our algorithm faces a number of technical challenges due to our use of the somewhat complicated parameter-free potential functions.

\vspace{0.5em}\noindent\textbf{Memory-regret tradeoffs for matrix LEA. }The problem we study directly generalizes the distributional setting of vector LEA, and here we show that our matrix prediction algorithm has the optimal $\calO(d^2)$ memory complexity. However, there is also a non-distributional, single-expert setting of vector LEA, where the learner is only allowed to predict the index of a single expert rather than a distribution over all experts (the direct matrix analogue of this problem would require the prediction $X_t$'s to be rank-one, but the learner is allowed to be randomized). Recently, it has been shown that in this non-distributional vector LEA problem one can achieve a tradeoff between sublinear regret and sublinear memory~\cite{peng2023online,srinivas2022memory,peng2023near}. A natural follow-up question is whether there exist similar memory-regret tradeoffs for the matrix version of this problem. 

\vspace{0.5em}\noindent\textbf{Estimating non-convex quantum properties. }In the quantum part of this work, we extend our matrix LEA algorithm to OCO on $\Delta_{d\times d}$ and explore its quantum applications. Along the way, assumptions are made to ensure such loss functions in the quantum applications are convex, e.g., the observable $O_t$ is assumed to be PSD in Corollary~\ref{coro:nonlinear_quantum}~\cite{huang2022quantum,lessa2025strong}. Going beyond this, it is an important future direction to study how to tackle non-convex, but possibly structured loss functions motivated by quantum applications. An example along this line is the fidelity correlator $\tr\rpar{\sqrt{\sqrt{\rho}O\rho O\sqrt{\rho}}}$, which is another function for detecting SWSSB~\cite{lessa2025strong}. 

\vspace{0.5em}\noindent\textbf{(Quantum) semidefinite programming (SDP). }Finally, one of the most well-known applications of MMWU is solving SDPs \cite{tian2025MMWU}. Therefore, it would be interesting to explore the applications of our algorithm in classical and quantum SDP solvers~\cite{brandao2017quantum,brandao2019quantum,van2017quantum,brandao2022faster}.

\subsection{Roadmap}
For the rest of this paper: Section~\ref{sec:prelim} provides the technical preliminaries. Section~\ref{sec:matrix_Jensen} presents a novel one-sided Jensen's trace inequality, which is the cornerstone of our analysis and the highlight of this work. Section~\ref{sec:algorithm} contains our algorithm and its regret bound, and corresponding lower bounds are presented in Section~\ref{sec:lower_bounds}. Finally, Section~\ref{sec:quantum} presents the application of our algorithm to quantum learning theory. 

\section{Preliminaries}\label{sec:prelim}

In this section, we collect the basic concepts and known results required by this paper. 

\vspace{0.5em}\noindent\textbf{Notations. }$\norm{B}_{\op}$, $\norm{B}_{\tr}$, $\norm{B}_F$, and $\norm{B}_1$ represent the Euclidean operator norm, the trace norm, the Frobenius norm, and the $L_1$ norm of a matrix $B$. $\norm{v}_p$ denotes the $L_p$ norm of a vector $v$. $B^*$ denotes the conjugate transpose of a matrix $B$. $\Delta_d$ denotes the probability simplex, $\Delta_{d\times d}$ denotes the spectraplex, and $\mathbb{H}_{d\times d}$ denotes the Hilbert space of all $d$-dimensional Hermitian matrices. By its standard property, the Frobenius inner product $\inner{A}{B}=\tr(AB)$ between $A,B\in\mathbb{H}_{d\times d}$ is always real. $\log$ denotes the natural logarithm unless noted otherwise. We use $\tilde{\calO}$ and $\tilde{\Theta}$ to hide poly-logarithmic factors in big-O notations $(\mathcal{O},\Omega,\Theta)$. We will also use the small-O notations $(o,\omega)$.

\subsection{Matrix LEA and the MMWU baseline}\label{sec:prelim_lea_mmwu}

From the perspective of online learning, the matrix LEA problem (Definition~\ref{def:matrix_lea}) fits into the general setting of Online Linear Optimization (OLO), where the learner repeatedly makes decisions $x_t$ on a closed and convex set $\calX$ subject to adversarial linear losses $\inner{g_t}{x_t}$. Consequently, it is well-known that MMWU can be viewed as an instance of Follow the Regularized Leader (FTRL), a celebrated algorithmic template tackling generic OLO problems. We now cover the basics of this connection which leads to the standard $\calO\rpar{\sqrt{T\log d}}$ regret bound of MMWU. A more comprehensive treatment can be found in Ref.~\cite{tian2025MMWU}.

In general, each instance of FTRL is specified by a strictly convex, closed and proper function $\psi:\calX\rightarrow\R$ called a regularizer, as well as a learning rate $\eta_t\in\R_+$ which is non-increasing with respect to $t$. Given these, the $t$-th decision of the algorithm, denoted by a generic $x_t\in\calX$, is defined as the minimizer of the regularized cumulative loss, i.e., 
\begin{equation*}
x_t=\argmin_{x\in\calX}\spar{\eta_t^{-1}\psi(x)+\sum_{i=1}^{t-1}\inner{g_i}{x}}. 
\end{equation*}
With $\psi^*$ representing the convex conjugate of the regularizer $\psi$ and $\nabla\psi^*$ representing its gradient, one could use standard convex duality \cite[Theorem~6.16]{orabona2025modern} to rewrite the update as $x_t=\nabla\psi^*\rpar{-\eta_t\sum_{i=1}^{t-1}g_i}$. Then, if the regularizer $\psi$ is $\mu$-strongly-convex with respect to a norm $\norm{\cdot}$, the algorithm guarantees the regret bound \cite[Corollary~7.7]{orabona2025modern}
\begin{equation}\label{eq:generic_ftrl_bound}
\sum_{t=1}^T\inner{g_t}{x_t-u}\leq \frac{\psi(u)-\min_{u'\in\calX}\psi(u')}{\eta_T}+\frac{1}{2\mu}\sum_{t=1}^T\eta_t\norm{g_t}_*^2,\quad\forall u\in\calX,
\end{equation}
where $\norm{\cdot}_*$ is the dual norm of $\norm{\cdot}$. 

Specializing the domain $\calX$ to the spectraplex $\Delta_{d\times d}$, MMWU picks $\psi$ as the negative \emph{quantum entropy} (a.k.a., von Neumann entropy), $\psi(X)=S(X)\defeq\inner{X}{\log X}$. The corresponding $\nabla\psi^*$ is the normalized matrix exponential, which yields the MMWU update
\begin{equation}\label{eq:mmwu_update}
X_t=\frac{\exp\rpar{-\eta_t\sum_{i=1}^{t-1}G_i}}{\tr\exp\rpar{-\eta_t\sum_{i=1}^{t-1}G_i}}.
\end{equation}
Specifically, such a $\psi$ is $1$-strongly convex with respect to $\norm{\cdot}_\tr$ \cite[Corollary~1]{tian2025MMWU}, whose dual norm is $\norm{\cdot}_\op$. Furthermore, if we write $S(\cdot||\cdot)$ as the \emph{Bregman divergence} induced by this $\psi$ (a.k.a. \emph{quantum relative entropy}), then it can be shown that $\psi(X)-\min_{X'\in\calX}\psi(X')=S(X||d^{-1}I_d)\leq \log d$. Combining these with Eq.~\eqref{eq:generic_ftrl_bound}, we obtain the standard MMWU regret bound
\begin{equation}\label{eq:mmwu_regret_bound}
\reg_T(X)\leq \frac{S(X||d^{-1}I_d)}{\eta_T}+\frac{1}{2}\sum_{t=1}^T\eta_t\norm{G_t}_\op^2,\quad \forall X\in\Delta_{d\times d}.
\end{equation}
If $\norm{G_t}_\op\leq 1$ for all $t$, then with $\eta_t=\sqrt{t^{-1}\log d}$ the RHS becomes $\mathcal{O}\rpar{\sqrt{T\log d}}$, matching the lower bound in the diagonal setting (i.e., vector LEA) \cite{cesa1997use}. 

\vspace{0.5em}\noindent\textbf{Adaptivity to $X$. }An inspection of Eq.~\eqref{eq:mmwu_regret_bound} suggests a natural way to do better: suppose $S(X||d^{-1}I_d)$ is known beforehand, then one may pick $\eta_t=\sqrt{t^{-1}S(X||X_0)}$ to obtain the improved $\mathcal{O}\rpar{\sqrt{T\cdot S(X||d^{-1}I_d)}}$ regret which adapts to the complexity of each comparator $X$. Even without knowing $S(X||d^{-1}I_d)$, it has been fairly standard to achieve this by running a vector LEA algorithm on top of multiple MMWU instances with different $\eta_t$ values, which selects the best one on the fly; see Refs.~\cite{foster2017parameter,chen2021impossible}. The problem is that doing so inflates the computational complexity of the algorithm, therefore the field of parameter-free online learning has mostly revolved around achieving adaptive regret bounds without LEA-based online model selection \cite[Section~9]{orabona2025modern}. 

\subsection{Matrix inequalities}

Next, we introduce several known matrix inequalities relevant to this work. All of them become trivial statements when the involved matrices commute, so the point here is that the general noncommutative setting requires special care. 

The first is the celebrated Golden-Thompson inequality \cite{golden1965lower,thompson1965inequality}. For two commuting matrices $A$ and $B$, we have the intuitive matrix equality $\exp(AB)=\exp(A)\exp(B)$. The Golden-Thompson inequality characterized the situation when $A$ and $B$ do not commute but are Hermitian. 

\begin{lemma}[Golden-Thompson inequality]
\label{lemma:golden_thompson}
For any Hermitian matrices $A$ and $B$, 
\begin{equation*}
    \tr\spar{\exp\rpar{A+B}} \le \tr\spar{\exp{A}\exp{B}}.
\end{equation*}
\end{lemma}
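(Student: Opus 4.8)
The statement to prove is the Golden--Thompson inequality: for Hermitian matrices $A,B$, $\tr[\exp(A+B)]\le\tr[\exp A\exp B]$.

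\medskip

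\noindent\textbf{Proof plan.} The plan is to use the Lie--Trotter product formula together with a repeated application of a two-term inequality, and to extract the needed two-term bound from the unitary invariance of Schatten norms. First I would record the \emph{Lie product formula}: for any matrices $X,Y$,
\begin{equation*}
\exp(X+Y)=\lim_{n\to\infty}\rpar{\exp(X/n)\exp(Y/n)}^n,
\end{equation*}
so that, taking traces and using continuity of $\tr$, it suffices to show $\tr\spar{\rpar{\exp(A/n)\exp(B/n)}^n}\le\tr\spar{\exp A\exp B}$ for every $n$, and then pass to the limit. Writing $P=\exp(A/n)$ and $Q=\exp(B/n)$, both are Hermitian positive definite, and the goal reduces to the norm-type statement $\tr\spar{(PQ)^n}\le\tr\spar{P^nQ^n}$.

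\medskip

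\noindent\textbf{Key steps.} The core of the argument is the inequality $\tr\spar{(PQ)^{2^k}}\le\tr\spar{P^{2^k}Q^{2^k}}$ proved by induction on $k$ (this handles $n$ a power of two, which is enough after reindexing the Lie limit along $n=2^k$). The base case $k=0$ is trivial. For the inductive step I would use the identity $\tr[M^2]=\tr[MM^*]=\norm{M}_F^2$ valid for Hermitian $M$, but more usefully the fact that for \emph{arbitrary} square $M$, $\abs{\tr[M^2]}\le\tr[(MM^*)]$ is too weak; instead the right tool is: for any matrix $M$ and any positive integer $m$, $\abs{\tr[M^m]}\le\tr\spar{(MM^*)^{m/2}}$ when $m$ is even, which follows from the submultiplicativity of Schatten norms (H\"older for Schatten norms, $\abs{\tr[AB]}\le\norm{A}_p\norm{B}_q$). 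Concretely, set $M=(PQ)^{2^{k-1}}$; then $MM^*=(PQ)^{2^{k-1}}(QP)^{2^{k-1}}$, and
\begin{equation*}
\tr\spar{(PQ)^{2^k}}=\tr\spar{M^2}\le\tr\spar{MM^*}=\tr\spar{(PQ)^{2^{k-1}}(QP)^{2^{k-1}}}=\tr\spar{Q(PQ)^{2^{k-1}-1}P\cdot(QP)^{2^{k-1}}},
\end{equation*}
and after a cyclic rearrangement this is $\tr\spar{\rpar{P^{1/2}\text{-conjugated block}}}$ — the cleanest route is to observe $\tr\spar{MM^*}=\tr\spar{(PQ^2P)^{\,?}}$ type regrouping; rather than belabor this, I would invoke the standard lemma that for positive semidefinite $P,Q$ and any power-of-two $\nu$, $\tr\spar{(PQ)^{\nu}}\le\tr\spar{(P^2Q^2)^{\nu/2}}$, iterate it $k$ times, and thereby reduce $\tr\spar{(PQ)^{2^k}}\le\tr\spar{P^{2^k}Q^{2^k}}$. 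Finally substitute $P=\exp(A/2^k)$, $Q=\exp(B/2^k)$ so that $P^{2^k}=\exp A$, $Q^{2^k}=\exp B$, giving $\tr\spar{\rpar{\exp(A/2^k)\exp(B/2^k)}^{2^k}}\le\tr\spar{\exp A\exp B}$; letting $k\to\infty$ and using the Lie product formula plus continuity of the trace yields $\tr\spar{\exp(A+B)}\le\tr\spar{\exp A\exp B}$.

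\medskip

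\noindent\textbf{Main obstacle.} The delicate point is the two-term inequality $\tr\spar{(PQ)^{2}}\le\tr\spar{P^2Q^2}$ for positive semidefinite $P,Q$ and, more importantly, getting its iterated form $\tr\spar{(PQ)^{\nu}}\le\tr\spar{(P^2Q^2)^{\nu/2}}$ with the cyclic-trace bookkeeping correct; this is exactly where the noncommutativity bites, and where one must be careful that $(PQ)^2=PQPQ$ is not Hermitian, so the passage from $\tr[M^2]$ to $\tr[MM^*]$ must be justified via a genuine norm inequality (Schatten $2$-norm, or the Cauchy--Schwarz-type bound $\abs{\tr[XY]}^2\le\tr[XX^*]\tr[YY^*]$) rather than by any elementary manipulation. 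An alternative that sidesteps the induction entirely is to prove $\norm{\exp(A+B)}_p\le\norm{\exp(A/2)\exp(B)\exp(A/2)}_p$-style statements via complex interpolation (the three-lines theorem applied to $z\mapsto\exp(zA/2)\exp(B)\exp(zA/2)$), but the Lie-product-plus-Schatten-H\"older route above is the most self-contained.
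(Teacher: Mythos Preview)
The paper does not actually prove Lemma~\ref{lemma:golden_thompson}; it is stated as a classical result with citations to \cite{golden1965lower,thompson1965inequality}. That said, the paper does sketch the proof strategy in the paragraph following Lemma~\ref{lemma:exp-case}: ``due to the Lie--Trotter formula, the matrix exponential $\exp(A+B)$ can be approximated by a matrix interleaving product, and the latter can be disentangled while increasing the trace (similar to Lemma~\ref{lemma:disentangle}).'' Your proposal is exactly this route, so there is no divergence to compare.

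On correctness: your plan is sound. The Lie product formula along $n=2^k$ reduces the claim to $\tr[(PQ)^{2^k}]\le\tr[P^{2^k}Q^{2^k}]$ for PSD $P,Q$, and the iteration step you need is $\tr[(PQ)^{2m}]\le\tr[(P^2Q^2)^m]$. This follows cleanly from the inequality $|\tr[M^m]|\le\tr[(M^*M)^{m/2}]$ applied with $M=PQ$ (so $M^*M=QP^2Q$, which has the same spectrum as $P^2Q^2$), and that inequality is indeed a consequence of the generalized Schatten--H\"older bound $|\tr[M_1\cdots M_m]|\le\prod_i\|M_i\|_m$ with all $M_i=M$. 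Iterating $k$ times gives the result. Your middle paragraph meanders --- you start an explicit cyclic-trace computation, abandon it, and then ``invoke the standard lemma'' without quite stating it crisply --- but the skeleton is correct and is the textbook argument the paper alludes to. The only cleanup needed is to state the one-line lemma $|\tr[M^m]|\le\|M\|_m^m$ up front and iterate it directly, rather than detouring through the $\tr[M^2]\le\tr[MM^*]$ special case.
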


The following is a version of von Neumann's trace inequality \cite{mirsky1975trace}, adapted from Ref.~\cite{tian2025MatrixAnalysis}.

\begin{lemma}[von Neumann's trace inequality]\label{lemma:von_neumann}
Let Hermitian matrices $A,B\in\mathbb{H}_{d\times d}$ have eigen-decompositions $U\Lambda U^*$ and $V\Lambda'V^*$ respectively, where $\Lambda$ and $\Lambda'$ are diagonal matrices with non-decreasing entries $\lambda_1\leq\ldots\leq\lambda_d$ and $\lambda'_1\leq\ldots\leq\lambda'_d$. Then, 
\begin{equation*}
\tr\spar{AB} \leq \sum_{i=1}^d\lambda_i\lambda'_i,
\end{equation*}
where the equality holds if $U=V$. 
\end{lemma}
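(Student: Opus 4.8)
The plan is to reduce the inequality to a linear optimization problem over the set of doubly stochastic matrices, and then invoke the Birkhoff--von Neumann theorem together with the rearrangement inequality.

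First I would use unitary invariance of the trace to assume without loss of generality that $A=\Lambda$ is diagonal: replacing $B$ by $U^*BU$ changes neither $\tr[AB]$ nor the eigenvalues $\lambda'_1\leq\ldots\leq\lambda'_d$. Writing $B=V\Lambda'V^*$ with $V$ unitary, I would expand
\begin{equation*}
\tr[AB]=\tr[\Lambda V\Lambda'V^*]=\sum_{i,j}\lambda_i\lambda'_j\abs{V_{ij}}^2.
\end{equation*}
Setting $P_{ij}\defeq\abs{V_{ij}}^2$, unitarity of $V$ forces every row and column of $P$ to sum to $1$, so $P$ is doubly stochastic, and $\tr[AB]=\sum_{i,j}\lambda_i\lambda'_jP_{ij}$.

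Next, since $P\mapsto\sum_{i,j}\lambda_i\lambda'_jP_{ij}$ is a linear functional and the set of doubly stochastic matrices is a polytope whose extreme points are exactly the permutation matrices (Birkhoff--von Neumann), the maximum over all doubly stochastic $P$ is attained at some permutation matrix, i.e.\ it equals $\sum_i\lambda_i\lambda'_{\sigma(i)}$ for a permutation $\sigma$. Finally, because both $(\lambda_i)$ and $(\lambda'_i)$ are listed in non-decreasing order, the rearrangement inequality yields $\sum_i\lambda_i\lambda'_{\sigma(i)}\leq\sum_i\lambda_i\lambda'_i$, which is the claimed bound. For the equality case, when $U=V$ one may take $A=\Lambda$ and $B=\Lambda'$ simultaneously diagonal, whence $\tr[AB]=\sum_i\lambda_i\lambda'_i$ directly.

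I expect the only genuine subtlety to be the step asserting that a linear functional over the Birkhoff polytope is maximized at a vertex: this is standard, but it is where the combinatorial content of the statement resides. Everything else --- the diagonalization reduction, double stochasticity of $P$, and the rearrangement step --- is routine. An alternative route that sidesteps Birkhoff would be to establish $\sum_{i,j}\lambda_i\lambda'_jP_{ij}\leq\sum_i\lambda_i\lambda'_i$ for doubly stochastic $P$ directly via an Abel-summation argument against the partial sums $\sum_{i\leq k}\lambda_i$ and $\sum_{j\leq k}\lambda'_j$ and the constraints $\sum_{i\leq k,\,j}P_{ij}=k$, but the Birkhoff-plus-rearrangement argument is the cleanest to present.
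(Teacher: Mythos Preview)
Your argument is correct and is essentially the classical proof of von Neumann's trace inequality: reduce to $A$ diagonal by unitary invariance, expand $\tr[\Lambda V\Lambda' V^*]$ as a bilinear form in the doubly stochastic matrix $P_{ij}=\abs{V_{ij}}^2$, maximize over the Birkhoff polytope to reach a permutation, and finish with rearrangement. There is nothing to compare against here, however: the paper does not supply its own proof of this lemma but states it as a known result with citations to Mirsky and to Ref.~\cite{tian2025MatrixAnalysis}. Your write-up would serve perfectly well as the omitted proof, and the alternative Abel-summation route you sketch is also fine (it is essentially the Hardy--Littlewood--P\'olya majorization argument).
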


A recurring theme of matrix analysis is that the trace of a interleaving matrix product (between two Hermitian matrices) can sometimes be bounded by the trace of a ``disentangled'' matrix product; see Ref.~\cite{tian2025MatrixAnalysis}. The following is the simplest realization of this idea. 

\begin{lemma}[Disentangling matrix product]\label{lemma:disentangle}
For any Hermitian matrices $A$ and $B$, 
\begin{equation*}
\tr\spar{ABAB}\leq \tr\spar{A^2B^2}.
\end{equation*}
\end{lemma}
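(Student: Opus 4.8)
The plan is to reduce the claim to the positivity of a single Hermitian square, exploiting that the trace is invariant under cyclic permutations and that $A,B$ are Hermitian. First I would observe that the left-hand side can be symmetrized: by cyclicity, $\tr[ABAB]=\tr[BABA]$, and more importantly $\tr[A^2B^2]=\tr[ABBA]=\tr[BAAB]$ as well. The key identity I would write down is
\begin{equation*}
\tr\spar{(AB-BA)(AB-BA)^*}=\tr\spar{(AB-BA)(BA-AB)}\cdot(-1)\;\;\text{--- careful with signs,}
\end{equation*}
so let me instead set it up cleanly. Since $A,B$ are Hermitian, the commutator $C\defeq AB-BA$ satisfies $C^*=B^*A^*-A^*B^*=BA-AB=-C$, i.e.\ $C$ is skew-Hermitian. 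Hence $iC$ is Hermitian, and $\tr\spar{(iC)^2}=\tr\spar{(iC)(iC)^*}\geq 0$ since $(iC)(iC)^*$ is PSD. Expanding $(iC)^2=-C^2=-(AB-BA)^2$ gives $\tr\spar{-(AB-BA)^2}\geq 0$, i.e.\ $\tr\spar{(AB-BA)^2}\leq 0$.

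The second step is to expand this commutator square and match terms with the target inequality. We have
\begin{equation*}
(AB-BA)^2=ABAB-ABBA-BAAB+BABA,
\end{equation*}
so $\tr\spar{(AB-BA)^2}=\tr\spar{ABAB}-\tr\spar{AB^2A}-\tr\spar{BA^2B}+\tr\spar{BABA}$. Using cyclicity, $\tr\spar{BABA}=\tr\spar{ABAB}$, $\tr\spar{AB^2A}=\tr\spar{A^2B^2}$, and $\tr\spar{BA^2B}=\tr\spar{A^2B^2}$. Therefore $\tr\spar{(AB-BA)^2}=2\tr\spar{ABAB}-2\tr\spar{A^2B^2}$, and combining with $\tr\spar{(AB-BA)^2}\leq 0$ from the first step yields $\tr\spar{ABAB}\leq \tr\spar{A^2B^2}$ immediately.

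There is essentially no hard part here; the only thing requiring a bit of care is the sign bookkeeping — verifying that $C=AB-BA$ is skew-Hermitian (not Hermitian) and hence that $\tr[C^2]=-\tr[CC^*]\leq 0$ rather than $\geq 0$. An alternative route, should one prefer to avoid the complex factor $i$, is to note directly that $-C^2=C C^*$ is PSD because $C^*=-C$, so $\tr[-C^2]\geq 0$; this is the same argument phrased without introducing $iC$. Either way, the lemma follows from one application of "trace of a Hermitian square is nonnegative" plus cyclicity of the trace.
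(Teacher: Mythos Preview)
Your argument is correct: the commutator $C=AB-BA$ is skew-Hermitian, so $\tr[C^2]=-\tr[CC^*]\leq 0$, and expanding $\tr[C^2]=2\tr[ABAB]-2\tr[A^2B^2]$ via cyclicity gives the inequality. The paper does not actually supply a proof of this lemma---it is stated in the preliminaries as a known fact with a citation---so there is nothing to compare against; your commutator-square computation is the standard elementary proof.
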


Next, we present the standard ``two-sided'' Jensen's trace inequality \cite{hansen2003jensen}, which will be contrasted with our ``one-sided'' version in Section~\ref{sec:matrix_Jensen}.

\begin{lemma}[Jensen's trace inequality]
\label{lemma:jensen_trace}
For any convex function $\Phi:\R\rightarrow\R$, Hermitian matrices $X_i$, and any matrices $A_i$ such that $\sum_{i=1}^k A_i^*A_i=I$, the following inequality holds:
\begin{align*}
\tr\spar{\Phi\rpar{\sum_{i=1}^k A_i^*X_i A_i}} \le \tr\spar{\sum_{i=1}^k A_i^* \Phi\rpar{X_i} A_i}.
\end{align*}
\end{lemma}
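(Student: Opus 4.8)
The plan is to prove the Jensen trace inequality (Lemma~\ref{lemma:jensen_trace}) by reducing it to the scalar convexity of $\Phi$ applied to suitable diagonal entries. First I would fix an orthonormal basis $\{e_j\}_{j=1}^d$ and write out the left-hand side as $\tr[\Phi(Y)]=\sum_{j=1}^d \langle e_j, \Phi(Y) e_j\rangle$, where $Y\defeq\sum_{i=1}^k A_i^* X_i A_i$ is Hermitian. The key observation is that for a single Hermitian matrix $Y$ and a unit vector $e$, the quantity $\langle e,\Phi(Y)e\rangle$ equals $\int \Phi(\lambda)\,d\mu_{e}(\lambda)$ where $\mu_{e}$ is the spectral probability measure of $Y$ associated with $e$ (i.e.\ $\mu_e$ has moments $\langle e, Y^m e\rangle$); by Jensen's inequality for the convex scalar function $\Phi$, this is at least $\Phi\rpar{\int \lambda\,d\mu_e(\lambda)}=\Phi(\langle e, Y e\rangle)$. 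So the plan hinges on a two-layer argument: an \emph{outer} use of operator-Jensen at the level of each basis vector, followed by an \emph{inner} use of scalar-Jensen.

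The main step is then to bound $\sum_j \Phi(\langle e_j, Y e_j\rangle)$ from above by $\tr\spar{\sum_i A_i^*\Phi(X_i)A_i}=\sum_j\sum_i \langle A_i e_j, \Phi(X_i) A_i e_j\rangle$. Writing $c_{ij}\defeq \langle A_i e_j, A_i e_j\rangle = \|A_i e_j\|^2\geq 0$, the constraint $\sum_i A_i^*A_i=I$ gives $\sum_i c_{ij}=\|e_j\|^2=1$ for each $j$, so $(c_{ij})_i$ is a probability vector. When $c_{ij}>0$, let $f_{ij}\defeq A_i e_j/\|A_i e_j\|$ be the normalized vector; then $\langle e_j, Y e_j\rangle = \sum_i c_{ij}\langle f_{ij}, X_i f_{ij}\rangle$, a convex combination. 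Applying scalar Jensen once more,
\begin{align*}
\Phi\rpar{\langle e_j, Y e_j\rangle} \le \sum_{i: c_{ij}>0} c_{ij}\,\Phi\rpar{\langle f_{ij}, X_i f_{ij}\rangle} \le \sum_{i: c_{ij}>0} c_{ij}\,\langle f_{ij}, \Phi(X_i) f_{ij}\rangle = \sum_i \langle A_i e_j, \Phi(X_i) A_i e_j\rangle,
\end{align*}
where the middle inequality is again the inner scalar-Jensen bound $\Phi(\langle f, X_i f\rangle)\le \langle f, \Phi(X_i) f\rangle$ applied to each unit vector $f_{ij}$ and each Hermitian $X_i$. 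Summing over $j\in[d]$ and recognizing the right side as $\tr\spar{\sum_i A_i^*\Phi(X_i)A_i}$ completes the argument.

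The technical heart — and the step I expect to require the most care — is the elementary inequality $\Phi(\langle f, X f\rangle)\le \langle f, \Phi(X) f\rangle$ for a convex $\Phi$, a Hermitian $X$, and a unit vector $f$. This is where noncommutativity is quietly handled: diagonalizing $X=\sum_m \lambda_m v_m v_m^*$, one has $\langle f, X f\rangle = \sum_m |\langle v_m, f\rangle|^2 \lambda_m$ with weights $p_m\defeq|\langle v_m, f\rangle|^2$ forming a probability distribution (since $\|f\|=1$), while $\langle f, \Phi(X) f\rangle = \sum_m p_m \Phi(\lambda_m)$; Jensen's inequality for $\Phi$ then gives $\Phi\rpar{\sum_m p_m \lambda_m}\le \sum_m p_m \Phi(\lambda_m)$, which is exactly the claim. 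One subtlety worth noting in writing this up cleanly: the $c_{ij}=0$ terms contribute nothing on either side and can simply be omitted, and the whole chain only uses convexity of $\Phi$ on the real line (no differentiability needed), so the statement holds verbatim for the potential functions used later.
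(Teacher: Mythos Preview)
Your argument has a genuine gap in the very first step. You write $\tr[\Phi(Y)]=\sum_j\langle e_j,\Phi(Y)e_j\rangle$ for an \emph{arbitrary} orthonormal basis $\{e_j\}$ and then invoke scalar Jensen to get $\langle e_j,\Phi(Y)e_j\rangle\ge\Phi(\langle e_j,Ye_j\rangle)$. But this inequality points the wrong way: it yields
\[
\tr[\Phi(Y)]\;\ge\;\sum_j\Phi(\langle e_j,Ye_j\rangle),
\]
while your ``main step'' then shows $\sum_j\Phi(\langle e_j,Ye_j\rangle)\le\tr\spar{\sum_iA_i^*\Phi(X_i)A_i}$. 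Chaining a $\ge$ with a $\le$ gives no relation between $\tr[\Phi(Y)]$ and the right-hand side, so the argument as written does not close.

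The fix is small but essential: choose $\{e_j\}$ to be the \emph{eigenbasis} of $Y=\sum_iA_i^*X_iA_i$. Then $\langle e_j,\Phi(Y)e_j\rangle=\Phi(\lambda_j)=\Phi(\langle e_j,Ye_j\rangle)$ with equality, and your outer step becomes an identity rather than an inequality in the wrong direction. The rest of your proof---writing $\langle e_j,Ye_j\rangle=\sum_ic_{ij}\langle f_{ij},X_if_{ij}\rangle$ with $\sum_ic_{ij}=1$, applying scalar Jensen, and then the inner bound $\Phi(\langle f,Xf\rangle)\le\langle f,\Phi(X)f\rangle$---is correct and indeed recovers the standard Hansen--Pedersen argument that the paper cites without proof. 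So after specifying the eigenbasis, your write-up is complete.
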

The intuition is that the Hermitian matrices $A_1^*A_1,\ldots,A_k^*A_k$ are analogous to the ``weights'' of a convex combination. However, each of these weights is split into two matrices $A_i^*$ and $A_i$ placed on both sides of the matrix variable $X_i$, forming a ``two-sided'' sandwich structure. 

Related is the Jensen's operator inequality \cite{hansen2003jensen}. Applied to Hermitian matrices, it differs from Lemma~\ref{lemma:jensen_trace} in that (i) the function $\Phi$ is required to be operator convex rather than just scalar convex, and (ii) the trace function on both sides is removed thus the inequality is in the stronger, Loewner order. 

\subsection{Basic results in quantum information}\label{sec:prelim_quantum}
Finally, we recap some standard definitions and calculations in quantum information~\cite{nielsen2010quantum}. 

A general $n$-qubit quantum state can be represented as a $d=2^n$ dimensional Hermitian PSD matrix $\rho\in\Delta_{d\times d}$ with unit trace $\tr(\rho)=1$. When the state is rank-$1$, it is a \emph{pure state} and usually denoted as the state vector $\ket{\psi}$ or $\ket{\phi}$ throughout this paper. We denote the all-zero pure state as $\ket{0}$. The von Neumann entropy of $\rho$ is given by $S(\rho)=\tr(\rho\log\rho)$.

A quantum \emph{observable}, or quantum operator, is a $d$-dimensional Hermitian matrix $O\in\mathbb{H}_{d\times d}$. Given a quantum state $\rho$ which can be seen as a distribution over pure states, the expectation of $O$ with respect to $\rho$ is given by $\tr(O\rho)$. 

\vspace{0.5em}\noindent\textbf{Problem setting. }Given the above, we define the problem of online learning of quantum states~\cite{aaronson2018online}, considered in Section~\ref{sec:quantum}.
\begin{definition}[Online learning of quantum states]\label{def:online_quantum}
An algorithm is initiated with a starting quantum state $\rho_1$, usually the maximal mixed state $\rho_1=I_d/d$, and makes predictions for $T$ rounds. There is an underlying unknown state $\rho$ referred to as the ground truth or target to be learned. At the $t$-th time step (for $t\in[T]$),
\begin{itemize}
    \item The algorithm commits a quantum state $\rho_t\in\mathcal{X}$ based on the previous history.
    \item The adversary reveals an observable $O_t$ after seeing $\rho_t$, where $\norm{O_t}_\op\leq l$.
    \item In the standard case, the algorithm suffers the $L_1$ loss $\ell_t(O_t,\rho_t)=\abs{\tr(O_t\rho_t)-\tr(O_t\rho)}$. More generally, the algorithm suffers the loss $\ell_t(O_t,\rho_t)$.
\end{itemize}
The regret of the algorithm is defined as 
\[
\reg_T(\rho)\defeq \sum_{t=1}^T\ell_t(O_t,\rho_t)-\sum_{t=1}^T\ell_t(O_t,\rho).
\]
\end{definition}

Besides the basic definitions, several specific concepts are required. 

\vspace{0.5em}\noindent\textbf{Subsystem. }Given a quantum state $\rho\in\Delta_{d'}$, the $d$-dimensional \emph{subsystem} of $\rho$ arises when the total space factorizes as
\begin{align*}
\calH_{d'}\cong\calH_d\otimes\calH_{d'/d}
\end{align*}
so that the full system can be viewed as composed of two parts: one of dimension $d$ and the other of dimension $d'/d$. To describe just the $d$-dimensional subsystem, you take the partial trace of 
$\rho$ over the complementary factor. The resulting reduced density matrix is again a valid density matrix in $\Delta_{d\times d}$. We refer to Ref.~\cite{nielsen2010quantum} for more mathematical details of partial trace operations.

\vspace{0.5em}\noindent\textbf{Haar random state. }In Section~\ref{sec:quantum_noisy_random}, we consider random pure states based on \emph{Haar random unitaries}. The Haar measure $\mu$ on the unitary group $U(d)$ is the unique probability measure that is invariant under left- and right-multiplication, i.e.
\[
\mathbb{E}_{U\sim\mu}f(UV)=\mathbb{E}_{U\sim\mu}f(VU)=\mathbb{E}_{U\sim\mu}f(U)
\]
for all $V\in U(d)$. We can also define a unique rotation invariant measure on states by $U\ket{\psi}$ for $U\sim\mu$.  Here, $\ket{\psi}$ can be any pure quantum state. Slightly abusing the notation, we will write $\psi\sim\mu$. Given a Haar random state $\ket{\psi}$ of $d'$ dimensions, the Page formula~\cite{page1993average} shows that the average von Neumann entropy of a $d$-dimensional subsystem is of the scaling $\sim \log d - d/d'$ when $d\ll d'$. 

\vspace{0.5em}\noindent\textbf{Pauli second moment matrix. }We introduce single-qubit Pauli matrices:
\begin{align}
I=\begin{pmatrix}1 & 0\\0& 1\end{pmatrix}\,, \qquad X=\begin{pmatrix}0 & 1\\1& 0\end{pmatrix}\,, \qquad Y=\begin{pmatrix}0 & -i\\i& 0\end{pmatrix}\,, \qquad Z=\begin{pmatrix}1 & 0\\0& -1\end{pmatrix}.
\end{align}
We also denote them as $I,P_1,P_2,P_3$. Given a random single-qubit state $\sigma\sim\mathcal{S}$ for ensemble $\mathcal{S}$, the Pauli second moment matrix $S$ is defined as
\begin{align*}
S_{i,j}=\mathbb{E}_{\sigma\sim\mathcal{S}}[\tr(P_i\sigma)\tr(P_j\sigma)],\quad i,j=1,2,3.
\end{align*}
As a random single-qubit state can also be written as a vector within a Bloch sphere as $\sigma=\frac12(I+r\cdot\vec{\sigma})\sim\mathcal{S}$, we have $S=\mathbb{E}_{r\sim\mathcal{S}}[rr^\top]$.

\vspace{0.5em}\noindent\textbf{Pauli basis. }We denote $\cP_n=\{I,X,Y,Z\}^{\otimes n}$ the set of $n$-qubit Pauli observables. Any Hermitian matrix $O$ can be decomposed into a linear combination of $n$-qubit Pauli observables with real coefficients. We thus also called $\cP_n$ the Pauli basis. 

\vspace{0.5em}\noindent\textbf{Gibbs state. }In Section~\ref{sec:quantum_gibbs}, we consider Gibbs states. The dynamics of the quantum system can be described by a Hermitian Hamiltonian $H$. Given an inverse temperature $\beta$, the Gibbs state of a Hamiltonian is given by $e^{-\beta H}/\tr(e^{-\beta H})$, which is the equilibrium state at this temperature when the quantum system evolves under the Hamiltonian $H$. 

\section{One-Sided Jensen's Trace Inequality}\label{sec:matrix_Jensen}

This section studies the main technical component of this paper. Let us consider Hermitian matrices $S$ and $G$ satisfying $\norm{G}_\op\leq \eps$. We want to find conditions on the convex function $\Phi:\R\rightarrow\R$ such that
\begin{align}
    \label{eq:def-jensen}
    \tr[\Phi(S+G)] \leq \tr\spar{\frac{\eps I+G}{2\eps}\Phi(S+\eps I)+\frac{\eps I-G}{2\eps}\Phi(S-\eps I)},\quad\forall S,G\in\mathbb{H}_{d\times d}, \norm{G}_\op\leq \eps.
\end{align}
As we show later in Section~\ref{sec:algorithm}, Eq.~\eqref{eq:def-jensen} naturally arises in the matrix generalization of the potential method, therefore analyzing it will lead to concrete and important algorithmic benefits. However, despite our best effort, we are unable to find any existing study of this inequality in the literature. 

To proceed, we will call Eq.~\eqref{eq:def-jensen} a \emph{one-sided Jensen's trace inequality}. The terminology can be justified as follows. 

\begin{itemize}
\item In the one-dimensional setting (where $S$ and $G$ are scalars), Eq.~\eqref{eq:def-jensen} holds for all convex functions $\Phi$ due to the scalar Jensen's inequality. Here, notice that regardless of dimensionality we always have
\begin{equation*}
S+G=\frac{\eps I+G}{2\eps}(S+\eps I)+\frac{\eps I-G}{2\eps}(S-\eps I).
\end{equation*}
Similarly, the inequality holds for all convex $\Phi$ when $S$ and $G$ commute, by combining the one-dimensional Jensen's inequality with eigen-decomposition. 

\item The term ``one-sided'' is used to contrast Eq.~\eqref{eq:def-jensen} with the existing ``two-sided'' Jensen's trace inequality (the $k=2$ special case of Lemma~\ref{lemma:jensen_trace}), where each ``weighting matrix'' $A_i^*A_i$ is expressed as a product, and the matrix variable $X_i$ is sandwiched by $A_i^*$ and $A_i$ from two sides. There is another difference: in Lemma~\ref{lemma:jensen_trace} (with $k=2$) the matrix variables $X_1$ and $X_2$ do not necessarily commute, while their counterparts $S+\eps I$ and $S-\eps I$ in Eq.~\eqref{eq:def-jensen} do, meaning that our problem has an additional structure. 
\end{itemize}

It appears to us that Eq.~\eqref{eq:def-jensen} is intriguingly different from the known regimes of matrix trace inequalities. To see this, let us first consider applying Lemma~\ref{lemma:jensen_trace} to prove Eq.~\eqref{eq:def-jensen}. By definition, the weighting matrices $\frac{\eps I+G}{2\eps}$ and $\frac{\eps I-G}{2\eps}$ in Eq.~\eqref{eq:def-jensen} are both Hermitian and PSD, therefore their square roots are well-defined and also Hermitian. Then, starting from the RHS of Eq.~\eqref{eq:def-jensen}, we have for all convex $\Phi$,
\begin{align*}
\rhs&=\tr\spar{\sqrt{\frac{\eps I+G}{2\eps}}\Phi(S+\eps I)\sqrt{\frac{\eps I+G}{2\eps}}+\sqrt{\frac{\eps I-G}{2\eps}}\Phi(S-\eps I)\sqrt{\frac{\eps I-G}{2\eps}}}\tag{cyclic property of trace}\\
&\geq \tr\spar{\Phi\rpar{\sqrt{\frac{\eps I+G}{2\eps}}(S+\eps I)\sqrt{\frac{\eps I+G}{2\eps}}+\sqrt{\frac{\eps I-G}{2\eps}}(S-\eps I)\sqrt{\frac{\eps I-G}{2\eps}}}}\tag{Lemma~\ref{lemma:jensen_trace}},\\
&=\tr\spar{\Phi\rpar{\sqrt{\frac{\eps I+G}{2\eps}}S\sqrt{\frac{\eps I+G}{2\eps}}+\sqrt{\frac{\eps I-G}{2\eps}}S\sqrt{\frac{\eps I-G}{2\eps}}+G}},
\end{align*}
but the obtained expression is not necessarily larger than our target $\tr[\Phi(S+G)]$. 

In fact, we can even construct a counterexample showing that the convexity of $\Phi$ alone is not enough for Eq.~\eqref{eq:def-jensen}, meaning that such a proof strategy does not work. 

\begin{example}[Absolute value]\label{example:absolute}
Let $\Phi(x)=\abs{x}$, $\eps=1$ and consider $2\times 2$ real symmetric matrices $S = \begin{pmatrix} 0 & 1 \\ 1 & 0 \end{pmatrix}$ and $G = \begin{pmatrix} 1 & 0 \\ 0 & -1 \end{pmatrix}$. Then, as shown in Appendix~\ref{app:counterexample}, Eq.~\eqref{eq:def-jensen} does not hold. 
\end{example}

On the bright side, there are also simple cases of $\Phi$ for which Eq.~\eqref{eq:def-jensen} can be verified by ``brute force'', including $\Phi(x)=1, x, x^2, x^4$. 

\begin{example}[Affine function and quadratic function]\label{example:affine}
Eq.~\eqref{eq:def-jensen} holds with equality if $\Phi(x)=ax+b$ for arbitrary $a,b\in\R$. Besides, Eq.~\eqref{eq:def-jensen} holds if $\Phi(x)=x^2$. 
\end{example}

The derivation for the case of $x^4$ provides a motivating hint to the source of the difficulty. 

\begin{example}[Monomial of degree $4$]\label{example:x_four}
Eq.~\eqref{eq:def-jensen} holds if $\Phi(x)=x^4$. 

To show this, we expand both sides of Eq.~\eqref{eq:def-jensen}. Using the cyclic property of trace, 
\begin{align*}
\tr\spar{(S+G)^{4}}&=\tr\spar{S^4+4S^3G+4S^2G^2+2GSGS+4SG^3+G^4}\\
&\leq \tr\spar{S^4+4S^3G+6S^2G^2+4SG^3+G^4}.\tag{Lemma~\ref{lemma:disentangle}}
\end{align*}
On the other side, 
\begin{align*}
(S+\eps I)^4=S^4+4\eps S^3+6\eps^2 S^2+4\eps ^3S+\eps^4I,\\
(S-\eps I)^4=S^4-4\eps S^3+6\eps^2 S^2-4\eps^3 S+\eps^4 I,
\end{align*}
therefore
\begin{equation*}
\tr\spar{\frac{\eps I+G}{2\eps}(S+\eps I)^4+\frac{\eps I-G}{2\eps}(S-\eps I)^4}= \tr\spar{S^4+4S^3G+6\eps^2S^2+4\eps^2 SG+\eps^4I}. 
\end{equation*}

Comparing the above, to prove Eq.~\eqref{eq:def-jensen} it suffices to show that
\begin{equation}\label{eq:example_4}
\tr\spar{6S^2G^2+4SG^3+G^4}\leq \tr\spar{6\eps^2S^2+4\eps^2 SG+\eps^4I}.
\end{equation}
To this end, notice that
\begin{equation*}
\lhs=\tr\spar{\rpar{6S^2+2SG+2GS+G^2}G^2}=\tr\spar{\rpar{(2S+G)^2+2S^2}G^2}.
\end{equation*}
Since $(2S+G)^2+2S^2$ is PSD, and $G^2\preceq \eps^2I$, we have
\begin{align*}
\tr\spar{\rpar{(2S+G)^2+2S^2}G^2}&\leq \eps^2\tr\spar{(2S+G)^2+2S^2}\\
&=\tr\spar{6\eps^2S^2+4\eps^2SG+\eps^2G^2}\\
&\leq \tr\spar{6\eps^2S^2+4\eps^2SG+\eps^4I},
\end{align*}
which verifies Eq.~\eqref{eq:example_4}.
\end{example}

The above derivation has two main ingredients. First, the trace of the interleaving matrix product $\tr[GSGS]$ is bounded by the trace of the disentangled product $\tr[S^2G^2]$. Although such a special case is straightforward due to Lemma~\ref{lemma:disentangle}, handling general interleaving products $\prod_{i=1}^{2k}X_i, X_i\in\{G,S\}$ is difficult, cf., Ref.~\cite{lee2021golden}. Second, after disentangling all interleaving matrix products, we formulate the obtained polynomial as a PSD matrix multiplying $G^k$ for some $k$, such that $\norm{G}_\op\leq\eps$ can be applied to reduce the degree of the polynomial (with respect to $G$) to one. This is also supposed to be challenging in general. Therefore, although we tend to believe that Eq.~\eqref{eq:def-jensen} holds for all monomial functions of even degree, i.e., $\Phi(x)=x^{2k}, k\in\N_+$ (and thus a function class generated by Taylor series), we did not successfully prove it (see Section~\ref{subsec:alternative_perspective}). 

The point of this discussion is to justify the nontrivial nature of Eq.~\eqref{eq:def-jensen} and motivate our solution introduced next. 

\subsection{Main result}

Our characterization of Eq.~\eqref{eq:def-jensen} is a major deviation from the strategy above. First, an important observation is that Eq.~\eqref{eq:def-jensen} holds for all exponential functions, essentially due to the Golden-Thompson inequality (Lemma~\ref{lemma:golden_thompson}). 

\begin{lemma}[Exponential function]\label{lemma:exp-case}
For any $c\in\R$, Eq.~\eqref{eq:def-jensen} holds if $\Phi(x)=\exp(cx)$.
\end{lemma}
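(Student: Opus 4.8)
\textbf{Proof proposal for Lemma~\ref{lemma:exp-case}.}

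The plan is to directly reduce Eq.~\eqref{eq:def-jensen} for $\Phi(x)=\exp(cx)$ to the Golden-Thompson inequality (Lemma~\ref{lemma:golden_thompson}). First I would observe that since $-\eps I\preceq G\preceq \eps I$, the convexity of the scalar exponential along the spectral decomposition of $G$ gives the Loewner-order inequality
\begin{equation*}
\exp(cG)\preceq \frac{\eps I+G}{2\eps}\exp(c\eps I)+\frac{\eps I-G}{2\eps}\exp(-c\eps I),
\end{equation*}
because $G$ is a Hermitian matrix all of whose eigenvalues lie in $[-\eps,\eps]$, and for each eigenvalue $\lambda$ one has the scalar inequality $e^{c\lambda}\le \frac{\eps+\lambda}{2\eps}e^{c\eps}+\frac{\eps-\lambda}{2\eps}e^{-c\eps}$ by convexity of $t\mapsto e^{ct}$ on the segment joining $-\eps$ and $\eps$; the two sides share the eigenbasis of $G$, so the scalar inequality lifts to the Loewner order.

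Next I would multiply this inequality on both sides by $\exp(cS/2)$, which is Hermitian and PSD, preserving the Loewner order, and then take the trace (trace is monotone under the Loewner order). This yields
\begin{equation*}
\tr\spar{\exp(cS/2)\exp(cG)\exp(cS/2)}\leq \tr\spar{\exp(cS/2)\rpar{\frac{\eps I+G}{2\eps}e^{c\eps}+\frac{\eps I-G}{2\eps}e^{-c\eps}}\exp(cS/2)}.
\end{equation*}
By the cyclic property of the trace, the left-hand side equals $\tr[\exp(cS)\exp(cG)]$, which by Golden-Thompson is at least $\tr[\exp(cS+cG)]=\tr[\Phi(S+G)]$. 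For the right-hand side, again using the cyclic property and $\exp(cS/2)\exp(cS/2)=\exp(cS)$, it equals $\tr\spar{\frac{\eps I+G}{2\eps}\exp(cS)e^{c\eps}+\frac{\eps I-G}{2\eps}\exp(cS)e^{-c\eps}}$, which is exactly $\tr\spar{\frac{\eps I+G}{2\eps}\Phi(S+\eps I)+\frac{\eps I-G}{2\eps}\Phi(S-\eps I)}$ since $\Phi(S\pm\eps I)=\exp(cS)e^{\pm c\eps}$. Chaining these (in)equalities gives Eq.~\eqref{eq:def-jensen}.

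I expect the only delicate point to be the correct placement of the Golden-Thompson inequality: Golden-Thompson goes in the direction $\tr[\exp(A+B)]\le \tr[\exp A\exp B]$, so it must be applied to lower-bound the left-hand side $\tr[\Phi(S+G)]=\tr[\exp(c(S+G))]$ by $\tr[\exp(cS)\exp(cG)]$ — this is the ``right'' direction only because $\Phi(S+G)$ sits on the smaller side of Eq.~\eqref{eq:def-jensen}. The symmetric split $\exp(cS)=\exp(cS/2)\exp(cS/2)$ combined with the cyclic property is what makes the two applications of the trace interact cleanly; I would double-check that no operator-convexity of $\Phi$ is secretly needed, and indeed it is not, since the Loewner inequality above only uses that $G$ is a single Hermitian matrix whose spectrum lies in $[-\eps,\eps]$ and scalar convexity of the exponential on an interval, not operator convexity.
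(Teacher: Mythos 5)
Your proof is correct, and it uses the same two ingredients as the paper (Golden--Thompson and the scalar convexity of $t\mapsto e^{ct}$ on $[-\eps,\eps]$), but arranged in a slightly different order. The paper applies Golden--Thompson first, with the shifted split $c(S+G)=c(\eps I+G)+c(S-\eps I)$, yielding $\tr[\exp(c(\eps I+G))\exp(c(S-\eps I))]$, and only afterward invokes the scalar convexity bound $f(\lambda)\geq0$ to replace $\exp(c(\eps I+G))$ inside the trace (using that $\exp(c(S-\eps I))\succeq0$). You instead isolate the convexity step purely on $G$, establishing $\exp(cG)\preceq\frac{\eps I+G}{2\eps}e^{c\eps}+\frac{\eps I-G}{2\eps}e^{-c\eps}$ in the Loewner order, then conjugate symmetrically by $\exp(cS/2)$ (which is the standard device for turning a Loewner inequality into a trace inequality against a PSD weight), and finally apply Golden--Thompson with the natural split $cS+cG$ at the very end. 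Your scalar inequality is exactly the paper's $f(\lambda)\geq0$ divided through by $e^{c\eps}$, so the underlying one-dimensional fact is identical. Neither route is strictly simpler, but yours has the modest advantage of not mixing $G$ with the shift $\eps I$ inside the convexity step, which makes it a bit more transparent why operator convexity of $\exp$ is not needed.
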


\begin{proof}[Proof of Lemma~\ref{lemma:exp-case}]
Consider the function $f(\lambda)=\frac{\eps +\lambda}{2\eps }\exp(2c\eps )+\frac{\eps -\lambda}{2\eps }-\exp(c(\eps +\lambda))$ which is concave with respect to $\lambda$. Regardless of $c$ and $\eps $ we always have $f(\eps )=f(-\eps )=0$, therefore $f(\lambda)\geq 0$ for all $\lambda\in[-\eps ,\eps ]$. Then, the lemma follows from
\begin{align*}
\tr[\exp\rpar{c(S+G)}]&\leq \tr[\exp\rpar{c(\eps I+G)}\exp\rpar{c(S-\eps I)}]\tag{Lemma~\ref{lemma:golden_thompson}}\\
&\leq \tr\spar{\rpar{\frac{\eps I+G}{2\eps }\exp(2c\eps )+\frac{\eps I-G}{2\eps }}\exp\rpar{c(S-\eps I)}}\tag{$f(\lambda)\geq 0$ for all $\lambda\in[-\eps ,\eps ]$, and $\exp\rpar{c(S-\eps I)}$ is PSD}\\
&=\tr\spar{\frac{\eps I+G}{2\eps }\exp(2c\eps I)\exp\rpar{c(S-\eps I)}+\frac{\eps I-G}{2\eps }\exp\rpar{c(S-\eps I)}}\\
&=\tr\spar{\frac{\eps I+G}{2\eps }\exp\rpar{c(S+\eps I)}+\frac{\eps I-G}{2\eps }\exp\rpar{c(S-\eps I)}}.\qedhere
\end{align*}
\end{proof}

Why is the Golden-Thompson inequality relevant here? Let us consider its proof strategy \cite{lee2021golden,tian2025MatrixAnalysis}: due to the Lie-Trotter formula, the matrix exponential $\exp(A+B)$ can be approximated by a matrix interleaving product, and the latter can be disentangled while increasing the trace (similar to Lemma~\ref{lemma:disentangle}). Therefore, the proof of Lemma~\ref{lemma:exp-case} is essentially in the same vein as Example~\ref{example:x_four}, and we bypass the aforementioned difficulty by exploiting the special structure of the exponential function. 

Another comment is that the exponential function is not operator convex \cite[Section~8.4.5]{tropp2015introduction}, meaning that Eq.~\eqref{eq:def-jensen} does not reduce to the Jensen's operator inequality. 

Next, notice that Eq.~\eqref{eq:def-jensen} is linear in $\Phi$ in the sense that given any two functions $\Phi_1, \Phi_2$ satisfying Eq.~\eqref{eq:def-jensen}, any linear combination of them with non-negative coefficients also satisfies Eq.~\eqref{eq:def-jensen}. This motivates us to consider the $\Phi$'s that are non-negative linear combinations of the exponential function, or in other words, such $\Phi$'s are the Laplace transform of non-negative functions. The following theorem extends this idea to a characterization from the second derivative of $\Phi$. Here, the condition on $\Phi$ ensures its convexity, therefore we do not need to assume it separately. 

\begin{theorem}[Main technical result]\label{thm:main_technical}
Suppose that for some non-negative function $\mu:\R\rightarrow\R$, the second derivative of a function $\Phi:\R\rightarrow\R$ can be written as
\begin{equation*}
\Phi''(x) = \int_{-\infty}^{\infty} \mu(t)\exp(-tx) \diff t.
\end{equation*}
Then, Eq.~\eqref{eq:def-jensen} holds, i.e., for all Hermitian matrices $S$ and $G$ satisfying $\norm{G}_\op\leq \eps$, we have
\begin{equation*}
\tr[\Phi(S+G)]\leq \tr\spar{\frac{\eps I+G}{2\eps}\Phi(S+\eps I)+\frac{\eps I-G}{2\eps}\Phi(S-\eps I)}.
\end{equation*}
\end{theorem}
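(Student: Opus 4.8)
The plan is to exploit the fact that Eq.~\eqref{eq:def-jensen}, viewed as a condition on the scalar function $\Phi$, is \emph{linear}. Fix Hermitian $S,G$ with $\norm{G}_\op\le\eps$ and, for any $\phi:\R\to\R$, set
\[
L[\phi]\defeq\tr\spar{\frac{\eps I+G}{2\eps}\,\phi(S+\eps I)+\frac{\eps I-G}{2\eps}\,\phi(S-\eps I)-\phi(S+G)},
\]
so that Eq.~\eqref{eq:def-jensen} for $\Phi$ is precisely the claim $L[\Phi]\ge 0$. Since $\phi(A)$ for Hermitian $A$ is determined by the values of $\phi$ at the eigenvalues of $A$, the functional $L$ is a finite real-linear combination of point evaluations, $L[\phi]=\sum_j c_j\,\phi(\xi_j)$, where the $\xi_j$ are the eigenvalues of $S\pm\eps I$ and $S+G$; in particular $L$ may be interchanged with integrals over a family of functions. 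Two inputs are already available: $L[\phi]\ge 0$ whenever $\phi(x)=e^{cx}$ for some $c\in\R$ (Lemma~\ref{lemma:exp-case}, via Golden--Thompson), and $L[\phi]=0$ for every affine $\phi$ (Example~\ref{example:affine}). So it suffices to write $\Phi$ as an affine function plus a superposition of exponentials \emph{with non-negative weights}, and then invoke linearity.

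\textbf{The representation step.} Fix any base point $x_0\in\R$ and write the second-order Taylor expansion with integral remainder, $\Phi(x)=\Phi(x_0)+\Phi'(x_0)(x-x_0)+\int_{x_0}^x(x-s)\,\Phi''(s)\,\diff s$ (valid since $\Phi''$ is continuous, which follows from the hypothesis by dominated convergence). Substituting $\Phi''(s)=\int_{-\infty}^{\infty}\mu(t)e^{-ts}\,\diff t$ and swapping the two integrals by Tonelli (permissible as $\mu\ge 0$; note the hypothesis forces $\mu\in L^1$ and in fact faster-than-exponential decay, so convergence is not an issue) gives
\[
\Phi(x)=\Phi(x_0)+\Phi'(x_0)(x-x_0)+\int_{-\infty}^{\infty}\mu(t)\,h_t(x)\,\diff t,\qquad h_t(x)\defeq\int_{x_0}^x(x-s)\,e^{-ts}\,\diff s.
\]
A short computation gives, for $t\ne 0$,
\[
h_t(x)=\frac{1}{t^2}\rpar{e^{-tx}-e^{-tx_0}+t\,e^{-tx_0}(x-x_0)},
\]
that is, $h_t(x)$ equals $\frac{1}{t^2}e^{-tx}$ plus an affine function of $x$, and the weight $\frac{1}{t^2}$ on the exponential is \emph{non-negative} (while at $t=0$ one has $h_0(x)=\frac12(x-x_0)^2$, which carries no mass since $\mu$ is a function). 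The point I would stress is that one may \emph{not} simply split the integral into $\int\mu(t)t^{-2}e^{-tx}\,\diff t$ plus an affine term, because the former diverges near $t=0$ whenever $\mu(0)\ne 0$; the Taylor remainder $h_t$ is exactly the regularization in which the offending $1/t$ singularities cancel.

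\textbf{Conclusion, and the main obstacle.} Applying $L$ to the representation and passing it through the outer integral (legitimate because $L$ is a finite sum of point evaluations), the leading affine term contributes $0$ and the affine part of each $h_t$ contributes $0$, both by Example~\ref{example:affine}, leaving $L[\Phi]=\int_{-\infty}^{\infty}\frac{\mu(t)}{t^2}\,L[\phi_t]\,\diff t$ with $\phi_t(x)=e^{-tx}$. Since $\mu\ge 0$, $t^{-2}>0$, and $L[\phi_t]\ge 0$ by Lemma~\ref{lemma:exp-case}, the integrand is non-negative, so $L[\Phi]\ge 0$, which is Eq.~\eqref{eq:def-jensen}. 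The conceptual content is the single sentence ``$\Phi$, minus an affine function, is a non-negative superposition of exponentials, each of which is handled by Golden--Thompson''; the step requiring real care---and where a naive argument breaks---is the behaviour near $t=0$: the factor $\mu(t)/t^2$ is generally not integrable there, so it must be folded into the Taylor remainder, whose extra affine-in-$x$ term is harmless precisely because $L$ annihilates affine functions. The remaining Fubini/Tonelli and continuity verifications are routine under the stated hypotheses.
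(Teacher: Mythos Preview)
Your proof is correct and follows essentially the same route as the paper's: integrate $\Phi''$ twice to write $\Phi$ as an affine function plus $\int\mu(t)\,h_t(\cdot)\,\diff t$ with the kernel $h_t(x)=t^{-2}\bigl(e^{-tx}-\text{(affine in }x)\bigr)$, then kill the affine pieces via Example~\ref{example:affine} and handle the remaining exponential via Lemma~\ref{lemma:exp-case}. The only cosmetic differences are that the paper fixes the base point $x_0=0$ and treats the $t=0$ case explicitly through the quadratic $\phi_0(x)=\tfrac12 x^2$, whereas you (correctly) dismiss it as a null set; your explicit introduction of the linear functional $L[\phi]=\sum_j c_j\phi(\xi_j)$ is a clean way to justify the interchange, but the underlying argument is identical.
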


\begin{proof}[Proof of Theorem~\ref{thm:main_technical}]
The existence of $\mu$'s Laplace transform implies $\mu$ is measurable. And since $\mu$ is also non-negative, Fubini's theorem can be applied to switch the order of the following integration. By integrating twice,
\begin{align*}
\Phi(x) &= \Phi(0)+\Phi'(0)x+ \int_{0}^x \left[\int_{0}^{y} \left[\int_{-\infty}^{\infty} \mu(t)\exp(-tz) \diff t \right]\diff z \right]\diff y\\
&=\Phi(0)+\Phi'(0)x+ \int_{-\infty}^{\infty} \mu(t)\left[\int_{0}^x \left[ \int_{0}^{y} \exp(-tz) \diff z \right]\diff y\right]\diff t \tag{Fubini's theorem}\\
&=\Phi(0)+\Phi'(0)x+\int_{-\infty}^{\infty} \mu(t)\cdot t^{-2}\spar{\exp(-tx)-1+tx}\diff t.
\end{align*}
As the targeted inequality holds for the affine function $\Phi_{\mathrm{aff}}(x) = \Phi(0)x+\Phi'(0)x$ with equality (Example~\ref{example:affine}), we can drop them and only consider the last integral term. 

Next, define the kernel function $\phi_t(x) = t^{-2}\spar{\exp(-tx)-1+tx}$ with an external parameter $t\in\R$; when $t=0$, $\phi_t(x)=\frac{1}{2}x^2$ by the limit. The integral left above is $\int_{-\infty}^\infty\mu(t)\phi_t(x)\diff t$, and since $\mu(t)$ is non-negative, it suffices to show that with an arbitrary $t\in\R$, 
\begin{equation} \label{eq:kernel_ineq}
\tr[\phi_t(S+G)] \leq \tr\spar{\frac{\eps I+G}{2\eps}\phi_t(S+\eps I)+\frac{\eps I-G}{2\eps}\phi_t(S-\eps I)},\quad \forall S,G\in\mathbb{H}_{d\times d}, \norm{G}_\op\leq \eps.
\end{equation}

If $t=0$, this holds due to Example~\ref{example:affine}. If $t\neq 0$, we can drop the affine parts of $\Phi_t$ as before, and further drop the $t^{-2}$ multiplicative factor since it is positive. This reduces the target to showing
\begin{equation*}
\tr[\exp(-t(S+G))] \le \tr\spar{\frac{\eps I+G}{2\eps}\exp\rpar{-t(S+\eps I)}+\frac{\eps I-G}{2\eps}\exp\rpar{-t(S-\eps I)}}.
\end{equation*}
which follows from Lemma~\ref{lemma:exp-case}.
\end{proof}

A direct corollary is a characterization of Eq.~\eqref{eq:def-jensen} from the function $\Phi$ itself, rather than its second derivative. The proof uses the analytic property of Laplace transform. 

\begin{corollary}\label{thm:phi_itself}
Eq.~\eqref{eq:def-jensen} holds if for some non-negative function $\mu:\R\rightarrow\R$, $\Phi$ itself can be written as
\begin{equation*}
\Phi(x) = \int_{-\infty}^{\infty} \mu(t)\exp(-tx) \diff t.
\end{equation*}
\end{corollary}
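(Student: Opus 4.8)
\textbf{Proof proposal for Corollary~\ref{thm:phi_itself}.} The plan is to reduce the statement directly to Theorem~\ref{thm:main_technical} by passing from $\Phi$ to $\Phi''$ inside the transform domain. Suppose $\Phi(x) = \int_{-\infty}^{\infty}\mu(t)\exp(-tx)\,\diff t$ with $\mu \geq 0$. Because the integrand is non-negative for every real $x$, the integral converges \emph{absolutely} on all of $\R$; in fact, for any complex $z$ we have $\abs{\mu(t)\exp(-tz)} = \mu(t)\exp(-t\,\mathrm{Re}(z))$, which is integrable in $t$ since $\int_{-\infty}^\infty \mu(t)\exp(-t\,\mathrm{Re}(z))\,\diff t = \Phi(\mathrm{Re}(z)) < \infty$. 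Thus the formula extends $\Phi$ to an entire function of $z$, and absolute convergence depends only on $\mathrm{Re}(z)$.

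First I would invoke the standard analytic property of the two-sided Laplace transform: on the interior of its region of absolute convergence (here the whole complex plane), it is analytic and its derivatives are obtained by differentiating under the integral sign. This follows, e.g., from Morera's theorem combined with Fubini, or from the dominated convergence theorem applied to difference quotients on compact subsets, using the local integrable domination just noted. It yields
\[
\Phi''(x) = \int_{-\infty}^{\infty}\mu(t)\,t^2\exp(-tx)\,\diff t, \qquad x \in \R.
\]
Then I would set $\nu(t) \defeq t^2\mu(t)$. Since $t^2 \geq 0$ and $\mu(t)\geq 0$, the function $\nu$ is non-negative, and $\Phi''(x) = \int_{-\infty}^{\infty}\nu(t)\exp(-tx)\,\diff t$. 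Hence $\Phi$ satisfies the hypothesis of Theorem~\ref{thm:main_technical} with $\nu$ in place of $\mu$, and Eq.~\eqref{eq:def-jensen} follows at once. (No separate convexity assumption is needed: $\Phi'' \geq 0$ already follows from $\nu \geq 0$ and positivity of the exponential.)

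The only genuinely delicate point is the justification of differentiating under the integral sign with a merely measurable, non-negative weight $\mu$, which is exactly where the analytic property is invoked. The key is that absolute convergence of $\int \mu(t)\exp(-tx)\,\diff t$ along the real line already forces local uniform convergence on a complex neighborhood (since $\abs{\exp(-tz)}$ depends only on $\mathrm{Re}(z)$), so a standard local-uniform-convergence argument transfers analyticity and term-by-term differentiation from the finite partial integrals $\int_{-R}^{R}$ to $\Phi$. I expect no further obstacles, since the algebraic core — the exponential case via Golden--Thompson (Lemma~\ref{lemma:exp-case}) and the integral representation argument (Theorem~\ref{thm:main_technical}) — has already been established.
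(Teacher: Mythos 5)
Your proof takes essentially the same approach as the paper: differentiate under the integral sign to get $\Phi''(x)=\int t^2\mu(t)\exp(-tx)\,\diff t$, set $\nu(t)=t^2\mu(t)\geq 0$, and invoke Theorem~\ref{thm:main_technical}. The extra discussion you include on why differentiation under the integral sign is legitimate (via local uniform convergence / dominated convergence on the strip of absolute convergence) is a reasonable elaboration of the paper's terser appeal to the analytic property of the Laplace transform, but it is not a different argument.
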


\begin{proof}[Proof of Corollary~\ref{thm:phi_itself}]
The function $\Phi$ defined as a Laplace transform is infinitely differentiable. Its second derivative can be found by differentiating under the integral sign:
\[
\Phi''(x) = \frac{\diff ^2}{\diff x^2} \left( \int_{-\infty}^{\infty} \mu(t)\exp(-tx) \diff t \right) = \int_{-\infty}^{\infty} t^2\mu(t)\exp(-tx) \diff t.
\]
By defining a new non-negative function $\nu(t) = t^2\mu(t)$, we see that $\Phi''(x)$ takes the form required by Theorem~\ref{thm:main_technical}.
\end{proof}

Below are some discussion on these results. 

\vspace{0.5em}\noindent\textbf{Relation between Theorem~\ref{thm:main_technical} and Corollary~\ref{thm:phi_itself}. }First, while Corollary~\ref{thm:phi_itself} can be directly proved from Lemma~\ref{lemma:exp-case}, it is not as general as Theorem~\ref{thm:main_technical}. To see this, recall from its proof that Theorem~\ref{thm:main_technical} handles functions of the form
\begin{equation*}
\Phi(x) = C_1 x + C_2 + \int_{-\infty}^{\infty} \mu(t)\phi_t(x)\diff t,
\end{equation*}
where the kernel $\phi_t(x) = t^{-2}\spar{\exp(-tx)-1+tx}$ is regular at $t=0$, meaning that the integral can converge even if $\mu(t)$ does not vanish around $t=0$. Now suppose we want to prove Theorem~\ref{thm:main_technical} using Corollary~\ref{thm:phi_itself}, then the natural strategy is splitting the integral in the above $\Phi$ into
\begin{equation*}
\int_{-\infty}^{\infty} \frac{\mu(t)}{t^2}\exp(-tx)\diff t - \int_{-\infty}^{\infty} \frac{\mu(t)}{t^2}\diff t + x\int_{-\infty}^{\infty} \frac{\mu(t)}{t}\diff t,
\end{equation*}
where the first term fits into the assumption of Corollary~\ref{thm:phi_itself}, and the rest is affine in $x$ thus does not matter. However, this splitting step is only valid if each obtained integral converges independently, which is violated by $\mu$'s that do not vanish around $0$.

\vspace{0.5em}\noindent\textbf{Derivatives of other orders. }Second, we discuss why we only place the Laplace transform assumption on the second derivative of $\Phi$, rather than derivatives of other orders. Generalizing the strategy of Theorem~\ref{thm:main_technical}, if the $n$-th derivative of $\Phi$, denoted by $\Phi^{(n)}$, is the Laplace transform of a non-negative function $\mu$, then by Taylor's theorem with integral remainder,
\begin{align*}
\Phi(x)&=\sum_{k=0}^{n-1}\frac{\Phi^{(k)}(0)}{k!}x^k+\frac{1}{(n-1)!}\int_0^x\Phi^{(n)}(z)(x-z)^{n-1}\diff z\\
&=\sum_{k=0}^{n-1}\frac{\Phi^{(k)}(0)}{k!}x^k+\frac{1}{(n-1)!}\int_{-\infty}^\infty\mu(t)\spar{\int_0^x(x-z)^{n-1}\exp(-tz)\diff z}\diff t\tag{Fubini's}\\
&=\sum_{k=0}^{n-1}\frac{\Phi^{(k)}(0)}{k!}x^k+\frac{1}{(n-1)!}\int_{-\infty}^\infty\mu(t)\cdot t^{-n}\exp(-tx)\spar{\int_0^{tx} z^{n-1}\exp(z)\diff z}\diff t.\tag{change of variable}
\end{align*}
The indefinite integral $\int z^{n-1}\exp(z)\diff z=(-1)^{n-1}\Gamma(n,-z)$, where $\Gamma$ denotes the upper incomplete gamma function. By its sum representation, for $n\in\N_+$ we have $\Gamma(n,-z)=(n-1)!\exp(z)\sum_{k=0}^{n-1}\frac{(-z)^k}{k!}$. Therefore
\begin{equation*}
\Phi(x)=\sum_{k=0}^{n-1}\frac{\Phi^{(k)}(0)}{k!}x^k+\int_{-\infty}^\infty\mu(t)\phi_t(x)\diff t,
\end{equation*}
with the generalized kernel function $\phi_t$ defined as
\begin{equation*}
\phi_t(x)=\rpar{-\frac{1}{t}}^n\spar{\exp(-tx)-\sum_{k=0}^{n-1}\frac{(-tx)^k}{k!}}.
\end{equation*}
To prove Eq.~\eqref{eq:kernel_ineq} for such generalized $\phi_t$, we have to require an even $n$ as otherwise the multiplier on $\exp(-tx)$ is negative. Furthermore, if $n\geq 4$, $\phi_t(x)$ contains the $x^3$ component which is nonconvex. 

\vspace{0.5em}\noindent\textbf{Bernstein-Widder theorem. }The assumption of Theorem~\ref{thm:main_technical} might be interesting from a certain mathematical perspective, and we briefly discuss it here. A function $f:(0,\infty)\rightarrow[0,\infty)$ is called completely monotone if (i) it is infinitely differentiable, and (ii) for all $x>0$ and $n\geq 1$ we have
\begin{equation*}
(-1)^nf^{(n)}(x)\geq 0.
\end{equation*}
Given such a function $f$, $f(-x)$ is absolutely monotone on $(-\infty,0)$ in the sense that for all $x<0$ and $n\geq 1$, $f^{(n)}(x)\geq 0$. 

The Bernstein-Widder theorem characterizes a variant of Theorem~\ref{thm:main_technical}'s assumption when only the one-sided (rather than two-sided) Laplace transform is considered. It states that a function $f$ is completely monotone if and only if there exists a non-negative finite Borel measure $\mu$ on $[0,\infty)$ such that
\begin{equation*}
f(x)=\int_0^\infty\exp(-tx)\diff\mu(t),\quad \forall x>0.
\end{equation*}
Ref.~\cite{widder1946laplace} is a classical reference on this topic, and it remains open whether such results can bring concrete benefits to the understanding of Eq.~\eqref{eq:def-jensen}, or its applications in online learning. 


\subsection{Conjecture on even degree monomials}
\label{subsec:alternative_perspective}
An alternative perspective on Eq.~\eqref{eq:def-jensen} emerges when considering even degree monomials. We conducted numerical experiments with randomly sampled $S$ and $G$, and did not find any counterexample to the following proposition.
\begin{conjecture}
\label{prop:even_poly_conjecture}
For all $k\in\N_+$, Eq.~\eqref{eq:def-jensen} holds if $\Phi(x)=x^{2k}$. 
\end{conjecture}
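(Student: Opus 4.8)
Both sides of Eq.~\eqref{eq:def-jensen} with $\Phi(x)=x^{2k}$ are homogeneous of degree $2k$ under $(S,G)\mapsto(\lambda S,\lambda G)$, so it suffices to prove the case $\eps=1$, namely
\[
\tr\spar{(S+G)^{2k}}\le\tr\spar{\tfrac{I+G}{2}(S+I)^{2k}+\tfrac{I-G}{2}(S-I)^{2k}},\qquad -I\preceq G\preceq I.
\]
The first step is a convexity reduction. For fixed $S$, the left-hand side $G\mapsto\tr\spar{(S+G)^{2k}}$ is convex, being the convex matrix function $A\mapsto\tr\spar{A^{2k}}$ precomposed with an affine map, whereas the right-hand side is \emph{affine} in $G$. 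Hence $\mathrm{LHS}-\mathrm{RHS}$ is convex on the compact convex operator ball $\{G\in\mathbb{H}_{d\times d}:-I\preceq G\preceq I\}$, so it attains its maximum at an extreme point; and the extreme points of this ball are precisely the Hermitian involutions, i.e.\ the $G$ with $G^2=I$ (if $G$ has an eigenvalue in $(-1,1)$ it is a proper midpoint, while if all eigenvalues are $\pm1$ then $v^*Gv=\pm1$ forces $G_iv=\pm v$ in any decomposition $G=\tfrac12(G_1+G_2)$). It therefore suffices to prove the inequality when $G^2=I$. Writing $G=P-Q$ with $P,Q$ complementary orthogonal projections ($P+Q=I$, $PQ=0$) and passing to the eigenbasis of $G$, so that $S=\begin{pmatrix}S_{11}&C\\C^{*}&S_{22}\end{pmatrix}$ in the block decomposition induced by $P$ and $Q$, one checks $\tfrac{I+G}{2}=P$, $\tfrac{I-G}{2}=Q$, $S+I=M+2Q$, and $S-I=M-2P$ for $M:=S+G$, so the target collapses to the single block inequality
\[
\tr\spar{P\rpar{(M+2Q)^{2k}-M^{2k}}}+\tr\spar{Q\rpar{(M-2P)^{2k}-M^{2k}}}\ge 0.
\]

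The second step is to attack this inequality directly. Since $PQ=0$, expanding $(M+2Q)^{2k}$ into words over $\{M,2Q\}$ and tracing against $P$ annihilates every word that begins or ends with a factor of $Q$; the surviving terms are traces of words $M^{a_0}(2Q)M^{a_1}(2Q)\cdots(2Q)M^{a_r}$ with $r\ge1$ and $a_0,a_r\ge1$, and symmetrically for the $Q$-term. One would then bound these $Q$-sandwiched (resp.\ $P$-sandwiched) words by cyclically grouping the blocks $QM^{a}Q$ (which are PSD when $a$ is even) and comparing them against the commuting polynomial terms, exactly in the spirit of the degree-reduction argument of Example~\ref{example:x_four}, which is the $k=2$ case. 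An equivalent formulation, bypassing the reduction to $G^2=I$, expands $(S+G)^{2k}$ straight away: grouping by the number of $G$-factors, the contributions with zero or one $G$ cancel against the $j=2k$ and $j=2k-1$ terms of the expanded right-hand side $\tr\spar{\sum_{j\text{ even}}\binom{2k}{j}S^{j}+G\sum_{j\text{ odd}}\binom{2k}{j}S^{j}}$, leaving one to control all interleaving length-$2k$ words in the two Hermitian matrices $S$ and $G$ by ``disentangling'' them into sorted products and then using $G^2\preceq I$ to reduce the $G$-degree to the affine level handled by Example~\ref{example:affine}.

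The main obstacle is precisely this last step, and it is genuine. In the $G^2=I$ formulation the two summands are \emph{not} separately nonnegative: a short computation for $k=2$ yields
\[
\tr\spar{P\rpar{(M+2Q)^{4}-M^{4}}}=4\tr\spar{C^{*}C\,S_{22}}+4\tr\spar{S_{11}\,CC^{*}}+4\tr\spar{CC^{*}},
\]
which is driven to $-\infty$ by taking $S_{22}=-tI$ with $t\to\infty$, while the companion term is likewise unbounded below for $S_{11}$ large in the opposite direction; only their sum is controlled. Thus any proof must exploit the exact cancellation between the two block contributions, which is essentially the combinatorial difficulty already flagged after Example~\ref{example:x_four}: disentangling a general length-$2k$ word in two non-commuting Hermitian matrices without decreasing its trace, the same phenomenon behind the failure of multi-matrix Golden--Thompson (cf.\ Ref.~\cite{lee2021golden}). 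One cannot route around this through Theorem~\ref{thm:main_technical} or Corollary~\ref{thm:phi_itself}, since $x^{2k}$ is not a nonnegative combination of exponentials: the $x^{2j}$-coefficient of $\int_{-\infty}^{\infty}\mu(t)\exp(-tx)\diff t$ is proportional to the even moment $\int_{-\infty}^{\infty}\mu(t)t^{2j}\diff t$, which is strictly positive for \emph{every} $j$ whenever $\mu\ge0$ is nonzero, whereas $x^{2k}$ has exactly one nonzero Taylor coefficient. A genuinely new ingredient is therefore needed, which is why the statement is, for now, only a conjecture.
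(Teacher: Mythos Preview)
The statement is a \emph{conjecture}: the paper does not prove it either. The paper explicitly says that ``although we tend to believe that Eq.~\eqref{eq:def-jensen} holds for all monomial functions of even degree $\ldots$ we did not successfully prove it,'' offers Lemma~\ref{lem:partial_result} as a partial result on disentangling interleaving products with an even number of $G$'s, and notes this is still insufficient because the binomial expansion of $(S+G)^{2k}$ also contains terms with an odd number of $G$'s. Your proposal likewise does not close the gap, and you say so candidly in your final paragraph; so on the bottom line you and the paper agree.

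That said, your first step is a genuine contribution the paper does not make: reducing to $G^2=I$ via the convexity of $G\mapsto\tr\spar{(S+G)^{2k}}$ against the affinity of the right-hand side, so that the maximum of the defect occurs at a Hermitian involution. This is correct and potentially useful, since it replaces the soft constraint $\norm{G}_\op\le1$ by the algebraic relation $G^2=I$, which in principle allows word-by-word simplification. By contrast, the paper's partial progress attacks the raw expansion Eq.~\eqref{eq:jensen-2k-sim} and proves a Cauchy--Schwarz-based disentangling bound (Lemma~\ref{lem:partial_result}) for words with an even $G$-count. The two approaches are complementary: your reduction trades a continuum of $G$'s for a discrete family but then faces the same interleaving-word combinatorics you flag; the paper's lemma handles half of those words but leaves the odd-count terms. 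Your observation that the two block summands after the $G^2=I$ reduction are not separately nonnegative, so any proof must exploit their exact cancellation, matches the paper's diagnosis that the difficulty is the disentanglement of general interleaving products, and your argument that $x^{2k}$ is outside the reach of Theorem~\ref{thm:main_technical} is essentially the same as the one the paper gives.
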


This is true for $k=1$ and $2$, but the analysis with larger $k$ faces the difficulty discussed after Example~\ref{example:x_four}. It is important to note that the proof does not follow from Theorem~\ref{thm:main_technical}, as $\Phi(x) = x^{2k}$ violates the required condition on its second derivative. To see this, suppose for the sake of contradiction that $\Phi''(x) = 2k(2k-1)x^{2k-2}$ could be expressed as the Laplace transform of a non-negative function $\mu$. The $(2k+2)$-th derivative of $\Phi(x)$ is zero, which implies the $2k$-th derivative of $\Phi''(x)$ is also zero:
$$
\Phi^{(2k+2)}(x) = \frac{\diff^{2k}}{\diff x^{2k}}\Phi''(x) = \int_{-\infty}^{\infty} (-t)^{2k}\mu(t)e^{-tx} \diff t = \int_{-\infty}^{\infty} t^{2k}\mu(t)e^{-tx} \diff t = 0.
$$
Since $t^{2k}\mu(t)$ is a non-negative function, its Laplace transform can only be the zero function if $t^{2k}\mu(t)=0$ for almost every $t$. This implies $\mu(t)=0$ almost everywhere, which in turn means $\Phi''(x)=0$. This would require $\Phi(x)$ to be an affine function, which contradicts $\Phi(x)=x^{2k}$ for $k \ge 1$. Thus, a different proof technique is required for the above conjecture.

Conjecture~\ref{prop:even_poly_conjecture} is significant for two reasons. First, if true, it would imply that Eq.~\eqref{eq:def-jensen} holds for any $\Phi(x)$ that can be expressed as a power series of even powers with non-negative coefficients, such as $\Phi(x) = \exp(x^2)$. In the context of online learning, this is the foundation of parameter-free potential functions. Second, Conjecture~\ref{prop:even_poly_conjecture} could provide valuable insights for the disentanglement of interleaving matrix products, which is a major theme of matrix analysis \cite{lee2021golden,tian2025MatrixAnalysis}. More concretely, with $\eps=1$ and $\Phi(x)=x^{2k}$, Eq.~\eqref{eq:def-jensen} reduces to
\begin{align}
\label{eq:jensen-2k-sim} 
    \tr[(S+G)^{2k}] \le \tr\left[ \sum_{\text{odd }j=1}^{2k-1} \binom{2k}{j} G S^{2k-j} + \sum_{\text{even }j=0}^{2k} \binom{2k}{j} S^{2k-j} \right].
\end{align}
The right-hand side might be viewed as upper-bounding the binomial expansion of $\tr[(S+G)^{2k}]$, where terms are regrouped and the nonlinear dependence on $G$ is eliminated using the condition $\|G\|_\op\le 1$.

As for proving Conjecture~\ref{prop:even_poly_conjecture}, a natural direction is therefore extending the disentangling lemmas in the literature \cite{lee2021golden,tian2025MatrixAnalysis} (i.e., generalizations of Lemma~\ref{lemma:disentangle}) to more complicated entanglement settings. However, such an extension appears elusive. Here is a notable negative result: even when only considering real PSD matrices $S$ and $G$, Plevnik~\cite{plevnik2016matrix} provided a counterexample where $\tr[S^4GSG^4] > \tr[S^5G^5]$, thereby showing that an inequality of the form $\mathrm{Re}\,\tr[S^{p_1}G^{q_1}\cdots S^{p_k}G^{q_k}] \le \tr[S^{\sum p_i}G^{\sum q_i}]$ does not hold for all arrangements of non-negative exponents.

On the positive side, we establish the following disentangling lemma for interleaving matrix products, where the total exponent of each matrix is even. To the best of our knowledge, we are unaware of such result in the literature. The proof is based on a possibly interesting inductive argument. 

\begin{lemma}\label{lem:partial_result}
For any Hermitian matrices $S$ and $G$ with $\|G\|_\op\le 1$ and integers $k\ge 1$, $0< l \le k$, let $X_0, X_1, \ldots, X_{2k-1}\in\{ G,S\}$ such that the number of $G$ matrices is $\#\{j\mid X_j = G\}=2l$. Then we have
\[
|\tr[X_0X_1\cdots X_{2k-1}]| \le \tr[S^{2k-2l}].
\]
\end{lemma}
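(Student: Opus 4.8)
\textbf{Proof proposal for Lemma~\ref{lem:partial_result}.}

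The plan is to bypass the combinatorics of disentangling interleaving matrix products entirely and instead route the bound through classical singular-value majorization, which never reorders the word. Write $W \defeq X_0 X_1 \cdots X_{2k-1}$ and let $\sigma_1(W) \ge \cdots \ge \sigma_d(W) \ge 0$ denote its singular values. Since $|\tr[M]| \le \norm{M}_{\tr} = \sum_{j=1}^d \sigma_j(M)$ for any square matrix $M$, it suffices to prove $\norm{W}_{\tr} \le \tr[S^{2k-2l}]$.

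The engine will be the classical multiplicative majorization inequality for singular values of products (Gel'fand--Naimark / Horn): for any $d \times d$ matrices $A_1, \ldots, A_m$ and every $i \in \{1, \ldots, d\}$,
\[
\prod_{j=1}^i \sigma_j(A_1 \cdots A_m) \le \prod_{s=1}^m \prod_{j=1}^i \sigma_j(A_s),
\]
which follows by iterating the two-factor case $\prod_{j\le i}\sigma_j(AB)\le\big(\prod_{j\le i}\sigma_j(A)\big)\big(\prod_{j\le i}\sigma_j(B)\big)$. First I would apply this to $W$, bounding $\sigma_j(G) \le \norm{G}_\op \le 1$ for each of the $2l$ factors equal to $G$ and collecting the $2k-2l$ factors equal to $S$, to get for every $i$
\[
\prod_{j=1}^i \sigma_j(W) \le \Big( \prod_{j=1}^i \sigma_j(S) \Big)^{2k-2l} = \prod_{j=1}^i \sigma_j\big( S^{2k-2l} \big),
\]
using that $S$ is Hermitian and $t \mapsto t^{2k-2l}$ is nondecreasing on $[0,\infty)$, so $\sigma_j(S^m) = \sigma_j(S)^m$ for $m = 2k - 2l \ge 0$. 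This is exactly the assertion that the singular-value vector of $W$ is weakly log-majorized by that of $S^{2k-2l}$. I would then invoke the standard fact that weak log-majorization implies weak majorization (take logarithms and push the relation through the convex increasing map $\exp$) to conclude $\sum_{j=1}^d \sigma_j(W) \le \sum_{j=1}^d \sigma_j(S^{2k-2l})$. Finally, since $2k-2l$ is even, $S^{2k-2l} = (S^{k-l})^2 \succeq 0$, hence $\sum_j \sigma_j(S^{2k-2l}) = \tr[S^{2k-2l}]$; chaining everything yields $|\tr[W]| \le \norm{W}_{\tr} \le \tr[S^{2k-2l}]$.

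The hard part — and the reason this indirect route is worth it — is that a direct combinatorial proof would want to disentangle $W$ into a product of powers of $S$ and $G$, but as Plevnik's counterexample \cite{plevnik2016matrix} shows, an inequality of the form $\mathrm{Re}\,\tr[S^{p_1}G^{q_1}\cdots S^{p_k}G^{q_k}] \le \tr[S^{\sum p_i}G^{\sum q_i}]$ fails for general arrangements, so the naive approach is genuinely blocked. The majorization argument sidesteps this because it never reorders the word, and the parity of the total $S$-exponent is precisely what makes $\sum_j \sigma_j(S^{2k-2l})$ collapse to the trace $\tr[S^{2k-2l}]$. The only remaining technical nuisance is handling zero singular values in the log-to-weak-majorization step, which is routine via the standard majorization machinery or an $\varepsilon$-perturbation of $S$. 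If a self-contained argument is preferred, one can instead induct on $l$: cyclically rotate $W$ to expose an adjacent pair $GG$ (always possible unless the $G$'s are mutually nonadjacent) and delete it using $G^2 \preceq I$, thereby lowering $l$ by one, and dispatch the remaining ``all $G$'s isolated'' patterns by Cauchy--Schwarz in the Frobenius inner product together with Lemma~\ref{lemma:disentangle} applied to blocks of the form $S^a (\cdot) S^a$; this is more case-heavy but avoids external majorization results.
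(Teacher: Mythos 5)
Your proposal is correct and takes a genuinely different route from the paper. The paper's proof picks the word $P_* = X_0 \cdots X_{2k-1}$ that maximizes $|\tr|$ over all arrangements, cyclically rotates it so each half of the word contains exactly $l$ copies of $G$ with $X_0 = G$, applies Cauchy--Schwarz in the Frobenius inner product to the two halves, uses the extremality of $P_*$ to absorb the factor $\tr[P_2 P_2^*]$ back into $|\tr[P_*]|$, and finally peels off a central $G^2 \preceq I$; this reduces $l$ by one and closes by induction. Your argument instead bypasses all combinatorial rearrangement and proceeds through classical singular-value theory: bound $|\tr[W]| \le \norm{W}_{\tr}$, invoke Gel'fand--Naimark / Horn's multiplicative majorization for products, discard the $G$ factors via $\sigma_j(G) \le 1$ to obtain $\sigma(W) \prec_{w\log} \sigma(S^{2k-2l})$, and pass from weak log-majorization to weak majorization (Weyl) to conclude $\norm{W}_{\tr} \le \sum_j \sigma_j(S^{2k-2l}) = \tr[S^{2k-2l}]$, the last equality being exactly where the even total exponent is used.

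Each approach buys something. Your route is shorter, modular, and actually proves the strictly stronger bound $\norm{W}_{\tr} \le \tr[S^{2k-2l}]$, at the cost of invoking two nontrivial (though classical) external facts and a minor $\varepsilon$-perturbation to handle zero singular values cleanly. The paper's route is self-contained and elementary — only Cauchy--Schwarz, cyclicity of trace, and $G^2 \preceq I$ — which fits the authors' stated interest in developing disentangling lemmas for interleaving products from first principles, though its induction-on-extremizers structure is more delicate. Both correctly avoid the trap that Plevnik's counterexample blocks: neither attempts to commute the word into the form $S^{2k-2l}G^{2l}$.
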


\begin{proof}
Let $\mathcal{P}_{2l}$ denote the set of all matrix products of length $2k$ containing $2l$ instances of $G$ and $2k-2l$ instances of $S$. Let $P_* = X_0X_1\cdots X_{2k-1}$ be a product in $\mathcal{P}_{2l}$ that achieves the maximum absolute value of the trace. If $|\tr[P_*]| = 0$, the lemma is trivially true as $\tr[S^{2k-2l}] \ge 0$ (since $S$ is Hermitian, $S^{2k-2l}$ is PSD). We therefore proceed with the case $|\tr[P_*]| > 0$.

First, we establish that the product can be arranged appropriately without changing its trace. Let $c(m)$ be the number of $G$ matrices in the window of $k$ consecutive matrices starting at index $m$ (indices are mod $2k$). As this window slides, the count $c(m)$ changes by at most 1 at each step. Since the average count is $l$, there must exist an index $m$ where $c(m)=l$. Due to the trace's cyclic invariance, we can consider the product starting from $X_m$. Thus, we can assume the first half, $P_1 = X_0 \cdots X_{k-1}$, and the second half, $P_2 = X_k \cdots X_{2k-1}$, each contains exactly $l$ instances of $G$.

Furthermore, if both $X_0=S$ and $X_k=S$, such an $l$ - $l$ split of all $2l$ instances of $G$ is maintained if we cyclically shift the matrix product to begin with $X_1$, and meanwhile, the trace stays unchanged. We can repeat this shifting process. Since there are $G$ matrices in the product, this process must terminate, returning a sequence where either $X_0=G$ or $X_k=G$. The proof proceeds by assuming the former case, as the argument for the latter is symmetric. Therefore, we can assume without loss of generality that our sequence $X_0, \dots, X_{2k-1}$ is arranged such that the first and the second half each has $l$ matrices $G$, and $X_0=G$.

By the Cauchy-Schwarz inequality for the trace inner product $\langle A, B \rangle = \tr(A^* B)$, we have $|\tr(A^*B)|^2 \le \tr(A^*A)\tr(B^*B)$. Let $A = P_2^*$ and $B = P_1$. Then $A^*B = P_2P_1$, and by the cyclic property, $\tr(P_2P_1) = \tr(P_1P_2) = \tr(P_*)$. The inequality becomes:
\begin{align*}
    |\tr[P_*]|^2 &= |\tr[P_1 P_2]|^2 \\
    &\le \tr[P_1^* P_1] \tr[P_2 P_2^*] \\
    &= \tr[(X_0\cdots X_{k-1})^*(X_0\cdots X_{k-1})]\tr[(X_k\cdots X_{2k-1})(X_k\cdots X_{2k-1})^*].
\end{align*}
Since all matrices are Hermitian ($X_j^* = X_j$), this becomes:
\begin{align*}
    |\tr[P_*]|^2 &\le \tr[X_{k-1}\cdots X_1 X_0^2 X_1 \cdots X_{k-1}] \tr[X_k \cdots X_{2k-1} X_{2k-1} \cdots X_k].
\end{align*}
Now, consider the term $P'_2 = (X_k \cdots X_{2k-1})(X_{2k-1} \cdots X_k)$. This is a product of length $2k$ containing $2l$ instances of $G$, so $P'_2 \in \mathcal{P}_{2l}$. Since $P'_2 = P_2P_2^*$, it is PSD and its trace is real and non-negative. Because $P_*$ was chosen to have the maximum absolute trace, we must have $\tr[P'_2] = |\tr[P'_2]| \le |\tr[P_*]|$. This gives:
\[
|\tr[P_*]|^2 \le \tr[X_{k-1}\cdots X_1 G^2 X_1 \cdots X_{k-1}] |\tr[P_*]|.
\]
Since we consider the case where $|\tr[P_*]| > 0$, we can divide by it to obtain:
\begin{align*}
    |\tr[P_*]| &\le \tr[X_{k-1}\cdots X_1 G^2 X_1 \cdots X_{k-1}]\\
    &= \tr[G^2(X_1 \cdots X_{k-1})(X_{k-1}\cdots X_1)]\\
    &\le \tr[(X_1 \cdots X_{k-1})(X_{k-1}\cdots X_1)],
\end{align*}
where the second line follows from $G^2 \preceq I$ and $(X_1 \cdots X_{k-1})(X_{k-1}\cdots X_1)$ is PSD. As $(X_1 \cdots X_{k-1})(X_{k-1}\cdots X_1)$ contains $2(l-1)$ instances of $G$, we can apply this reduction argument inductively on $l$. At each step, we remove two instances of $G$. After applying the reduction $l$ times, we are left with a product of $2k-2l$ instances of $S$. Therefore,
\[
|\tr[X_0X_1\cdots X_{2k-1}]| \le \tr[S^{2k-2l}]. \qedhere
\]
\end{proof}

A simple corollary is that for Hermitian PSD matrices, we can apply Lemma~\ref{lem:partial_result} to their square roots to obtain an inequality of the Lieb–Thirring type. 
\begin{corollary}
For any Hermitian PSD matrices $S$ and $G$ with $\|G\|_\op\le 1$ and integers $k\ge 1$, $0< l \le k$, let $X_0, X_1, \ldots, X_{k-1}\in\{ G,S\}$ such that the number of $G$ matrices is $\#\{j\mid X_j = G\}=l$. Then we have
\[
|\tr[X_0X_1\cdots X_{k-1}]| \le \tr[S^{k-l}].
\]
\end{corollary}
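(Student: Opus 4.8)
The plan is to deduce this corollary directly from Lemma~\ref{lem:partial_result} by passing to square roots, in the same spirit in which the one-sided inequality for the exponential (Lemma~\ref{lemma:exp-case}) reduces to Golden--Thompson. Since $S$ and $G$ are PSD, their principal square roots $A\defeq S^{1/2}$ and $B\defeq G^{1/2}$ are well-defined Hermitian PSD matrices, and crucially $\norm{B}_\op=\norm{G}_\op^{1/2}\le 1$. For each index $j$, set $Y_j\defeq B$ if $X_j=G$ and $Y_j\defeq A$ if $X_j=S$, so that $X_j=Y_j^2=Y_jY_j$.

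First I would rewrite the target trace as a length-$2k$ interleaving product of $A$'s and $B$'s by substituting $X_j=Y_jY_j$:
\[
\tr[X_0X_1\cdots X_{k-1}]=\tr[Y_0Y_0\,Y_1Y_1\cdots Y_{k-1}Y_{k-1}].
\]
The word on the right has length $2k$ and contains exactly $2l$ copies of $B$ --- each of the $l$ factors $X_j=G$ contributes two --- so it is one of the arrangements covered by Lemma~\ref{lem:partial_result} under the substitution $(S,G,k,l)\mapsto(A,B,k,l)$; the hypotheses $k\ge 1$, $0<l\le k$, and $\norm{B}_\op\le 1$ all carry over verbatim. Applying the lemma then gives
\[
\bigl|\tr[Y_0Y_0\cdots Y_{k-1}Y_{k-1}]\bigr|\le\tr[A^{2k-2l}]=\tr\bigl[(S^{1/2})^{2(k-l)}\bigr]=\tr[S^{k-l}],
\]
which is exactly the claimed bound.

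I do not expect a genuine obstacle here --- all the real content lives in Lemma~\ref{lem:partial_result}, and this corollary is merely its instantiation at a ``doubled-up'' word. The only points that need a line of care are: (i) that the operator-norm bound survives the square root, i.e.\ $\norm{G^{1/2}}_\op=\norm{G}_\op^{1/2}\le 1$; (ii) the bookkeeping that doubling the $l$ factors equal to $G$ produces exactly $2l$ factors equal to $B$, so the even-count condition required by Lemma~\ref{lem:partial_result} is met; and (iii) the identity $A^{2k-2l}=S^{k-l}$. I would additionally remark, as is implicit in the phrase ``inequality of the Lieb--Thirring type'', that the same substitution applies verbatim to products of integer powers: for PSD $S,G$ with $\norm{G}_\op\le 1$ and any non-negative integers $p_i,q_i$, one obtains $\bigl|\tr[S^{p_1}G^{q_1}\cdots S^{p_m}G^{q_m}]\bigr|\le\tr[S^{\sum_i p_i}]$ (the $G$-free case being immediate), and the displayed statement is the special case with all exponents equal to one.
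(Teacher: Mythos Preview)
Your proposal is correct and follows exactly the paper's approach: the paper's one-line proof is ``apply Lemma~\ref{lem:partial_result} to their square roots,'' and you have written out precisely this argument with the substitution $(S,G)\mapsto(S^{1/2},G^{1/2})$ turning the length-$k$ word into a length-$2k$ word with $2l$ copies of $G^{1/2}$.
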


We remark that while Lemma~\ref{lem:partial_result} provides an upper bound for a large class of interleaving matrix products, it alone is insufficient to prove Conjecture~\ref{prop:even_poly_conjecture}, as the binomial expansion of $\tr[(S+G)^{2k}]$ also contains terms with an odd number of $G$ matrices.

\section{Algorithm and Analysis}\label{sec:algorithm}

Based on the techniques from Section~\ref{sec:matrix_Jensen}, this section introduces our matrix LEA algorithm as well as its regret bound. We will show that the Laplace transform characterization of the one-sided Jensen's trace inequality (Theorem~\ref{thm:main_technical} and Corollary~\ref{thm:phi_itself}) connects nicely to the inherent structure of parameter-free potential functions. Omitted proofs for this section are presented in Appendix~\ref{section:reg_upper_proofs}.

\vspace{0.5em}\noindent\textbf{Lifting the constraint. }Recall that matrix LEA is essentially OLO on the spectraplex. As the preparatory step, we first present a rate-preserving reduction (Algorithm~\ref{alg:reduction}) to an unconstrained OLO problem on $\mathbb{H}_{d\times d}$, the space of all Hermitian matrices. This is a mild generalization of relevant techniques in vector LEA \cite{luo2015achieving,orabona2016coin,cutkosky2018black} to the matrix setting. 

\begin{algorithm*}[!ht]
\caption{Reducing matrix LEA to OLO on $\mathbb{H}_{d\times d}$.\label{alg:reduction}}
\begin{algorithmic}[1]
\REQUIRE A base OLO algorithm $\A$ on the domain $\mathbb{H}_{d\times d}$, which at each time step outputs a prediction $\tilde X_t\in\mathbb{H}_{d\times d}$ and then takes in a loss matrix $G_t\in\mathbb{H}_{d\times d}$. 
\FOR{$t=1,2,\ldots,$}
\STATE Query the base algorithm $\A$ for its $t$-th prediction $\tilde X_t\in\mathbb{H}_{d\times d}$, and perform the eigen-decomposition $\tilde X_t=\sum_{i=1}^d\lambda_{t,i}v_{t,i}v_{t,i}^\herm$, where $\lambda_{t,i}\in\R$ and $v_{t,i}\in\C^d$.

\STATE For matrix LEA, predict
\begin{equation*}
X_t=\frac{\sum_{i=1}^d\max\{0,\lambda_{t,i}\} v_{t,i} v_{t,i}^\herm}{\sum_{i=1}^d\max\{0,\lambda_{t,i}\}}\in\Delta_{d\times d}.
\end{equation*}
\STATE Receive the loss matrix $G_t\in\mathbb{H}_{d\times d}$, and process it by the following two-step projection:
\begin{itemize}
\item Compute an intermediate quantity $\bar G_t=G_t-\inner{G_t}{X_t}I_d$.
\item If the eigenvalues $\lambda_{t,1},\ldots,\lambda_{t,d}\geq 0$, define the surrogate loss matrix $\tilde G_t=\bar G_t$. Otherwise, define
\begin{equation}\label{eq:second_projection}
\tilde G_t=
\bar G_t-\min\left\{0,\inner{\bar G_t}{U_t}\right\}U_t,
\end{equation}
where $U_t=\frac{\sum_{i=1}^d\min\{0,\lambda_{t,i}\}v_{t,i} v_{t,i}^\herm}{\abs{\sum_{i=1}^d\min\{0,\lambda_{t,i}\}}}$. In both cases $\tilde G_t\in\mathbb{H}_{d\times d}$ by construction.
\end{itemize}
\STATE Send $\tilde G_t$ to the base algorithm $\A$ as its $t$-th loss matrix. 
\ENDFOR
\end{algorithmic}
\end{algorithm*}

\begin{restatable}{lemma}{reduction}\label{lemma:reduction}
Algorithm~\ref{alg:reduction} satisfies the following two conditions: $\norm{\tilde G_t}_\op\leq 2\norm{G_t}_\op$, and $\inner{G_t}{X_t-X}\leq \inner{\tilde G_t}{\tilde X_t-X}$ for all $X\in\Delta_{d\times d}$. 
\end{restatable}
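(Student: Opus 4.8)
\medskip
\noindent\textbf{Proof plan.} The plan is to unpack the two-step projection in Algorithm~\ref{alg:reduction} and verify the two claims by elementary matrix manipulations. Through the spectral decomposition of the base algorithm's output, write $\tilde X_t = P_t - N_t$, where $P_t = \sum_{i:\lambda_{t,i}>0}\lambda_{t,i}v_{t,i}v_{t,i}^\herm \succeq 0$ and $N_t = \sum_{i:\lambda_{t,i}<0}(-\lambda_{t,i})v_{t,i}v_{t,i}^\herm \succeq 0$ have mutually orthogonal ranges. Then $X_t = P_t/\tr(P_t)$ is the normalized positive part (with $X_t = I_d/d$ as a convention when $P_t = 0$), $U_t$ is a normalized version of $-N_t$ supported on $\mathrm{range}(N_t)$, and the surrogate $\tilde G_t$ is obtained from $\bar G_t$ by adding a correction matrix that is also supported on $\mathrm{range}(N_t)$.

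First I would record two identities that the centering step is designed to produce. Since $X_t$ has unit trace, $\inner{\bar G_t}{X_t} = \inner{G_t}{X_t} - \inner{G_t}{X_t}\tr(X_t) = 0$, and hence $\inner{\bar G_t}{P_t} = \tr(P_t)\inner{\bar G_t}{X_t} = 0$. Likewise, since every comparator $X \in \Delta_{d\times d}$ has unit trace, $\inner{G_t}{X_t - X} = \inner{\bar G_t}{X_t - X} = -\inner{\bar G_t}{X}$. Thus the target inequality is equivalent to $\inner{\tilde G_t}{\tilde X_t} \geq \inner{\tilde G_t - \bar G_t}{X}$.

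Next I would split on whether the second projection step triggers. If all $\lambda_{t,i} \geq 0$, then $N_t = 0$, $\tilde G_t = \bar G_t$, and $\tilde X_t = \tr(\tilde X_t)X_t$, so $\inner{\tilde G_t}{\tilde X_t} = \tr(\tilde X_t)\inner{\bar G_t}{X_t} = 0 = \inner{\tilde G_t - \bar G_t}{X}$ and the reduction is lossless. Otherwise, substituting $\tilde X_t = P_t - N_t$ and the formula from Eq.~\eqref{eq:second_projection}, the cross terms $\inner{\bar G_t}{P_t}$ and $\inner{N_t}{P_t}$ both vanish; when $\inner{\bar G_t}{N_t} > 0$ (the case in which the correction is nonzero) the correction along $U_t$ is scaled exactly so as to cancel the otherwise-problematic term $-\inner{\bar G_t}{N_t}$, reducing $\inner{\tilde G_t}{\tilde X_t}$ to $0$, while the comparator contribution $\inner{\tilde G_t - \bar G_t}{X}$ becomes a negative scalar times $\inner{N_t}{X} \geq 0$ and is therefore $\leq 0$; when $\inner{\bar G_t}{N_t} \leq 0$ there is no correction and $\inner{\tilde G_t}{\tilde X_t} = -\inner{\bar G_t}{N_t} \geq 0 = \inner{\tilde G_t - \bar G_t}{X}$ already. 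For the operator-norm bound I would track $\norm{\cdot}_\op$ through the two steps: because $\inner{G_t}{X_t}$ is a convex combination of the eigenvalues of $G_t$, it lies in $[\lambda_{\min}(G_t),\lambda_{\max}(G_t)]$, so the centering step keeps the norm within a constant factor of $\norm{G_t}_\op$; the second-step correction has coefficient at most $|\inner{\bar G_t}{U_t}| \leq \norm{\bar G_t}_\op$ and is supported on $\mathrm{range}(N_t)$, so a direct eigenvalue estimate shows it adds only a comparable amount.

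I expect the main obstacle to be the non-PSD case of the regret argument: one must check that the precise normalization of $U_t$ in Eq.~\eqref{eq:second_projection} simultaneously neutralizes the negative contribution of $N_t$ to $\inner{\tilde G_t}{\tilde X_t}$ and leaves behind only a comparator term of a fixed sign, uniformly over all $X \in \Delta_{d\times d}$. This relies crucially on the orthogonality of $\mathrm{range}(P_t)$ and $\mathrm{range}(N_t)$ and on the exact scaling of $U_t$; pinning down the constant in the operator-norm bound of the second step is the other place requiring a careful estimate.
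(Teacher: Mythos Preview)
Your plan is essentially the paper's own argument, organized a little differently. The paper writes $X_t^+$ for your $P_t$, observes that $X_t^+$ is the Frobenius projection of $\tilde X_t$ onto the PSD cone (so that $U_t\propto\tilde X_t-X_t^+$ and first-order optimality gives $\inner{U_t}{X_t^+-X}\ge 0$ for every PSD $X$), and then proves the regret inequality in two steps: first the equality $\inner{G_t}{X_t-X}=\inner{\bar G_t}{X_t^+-X}$ (exactly your centering identity, just expanded differently), then $\inner{\bar G_t}{X_t^+-X}\le\inner{\tilde G_t}{\tilde X_t-X}$ via a sign split on $\inner{\bar G_t}{U_t}$, which is the same as your split on the sign of $\inner{\bar G_t}{N_t}$. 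For the norm bound the paper gets $\norm{\bar G_t}_\op\le 2\norm{G_t}_\op$ by the triangle inequality and then argues the second step does not increase the operator norm because $\tilde G_t$ is a projection of $\bar G_t$; your eigenvalue-based sketch is in the same spirit but you should pin down the constant.

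One technical caution when you fill in the details. Your central claim that the correction along $U_t$ ``is scaled exactly so as to cancel'' the term $-\inner{\bar G_t}{N_t}$, reducing $\inner{\tilde G_t}{\tilde X_t}$ to $0$, amounts to $\inner{\bar G_t}{U_t}\inner{U_t}{N_t}=\inner{\bar G_t}{N_t}$, which holds if and only if $\norm{U_t}_F=1$. The paper's proof relies on the same identity: its final ``definition of $U_t$'' step writes $U_t=(\tilde X_t-X_t^+)/\norm{\tilde X_t-X_t^+}_F$. However, Algorithm~\ref{alg:reduction} as literally stated normalizes by $\abs{\sum_i\min\{0,\lambda_{t,i}\}}=\tr(N_t)$, the trace norm; with that normalization the cancellation is not exact and in fact the desired inequality can fail (try $\tilde X_t=\mathrm{diag}(1,-1,-2)$, $G_t=\mathrm{diag}(0,1,0)$, $X=\mathrm{diag}(0,0,1)$). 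So when writing the proof, use the Frobenius normalization of $U_t$ that the paper's proof uses; that is what makes both your cancellation step and the ``projection to a subspace'' reading of the norm bound go through.
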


The proof of Lemma~\ref{lemma:reduction} is given in Appendix~\ref{app:proof_reduction}. The implication is that the regret of Algorithm~\ref{alg:reduction} is bounded by the regret of the underlying base algorithm $\A$. Consequently, the rest of the section will focus on the domain $\mathbb{H}_{d\times d}$.

\subsection{Unconstrained algorithm on \texorpdfstring{$\mathbb{H}_{d\times d}$}{H_d}}\label{sec:unconstraint_algo}

Next, we design the unconstrained base algorithm $\A$ required by Algorithm~\ref{alg:reduction}. For notational convenience we will drop the tilde from this point, meaning that the $X_t$ and $G_t$ next are rigorously $\tilde X_t$ and $\tilde G_t$ in Algorithm~\ref{alg:reduction}. By Lemma~\ref{lemma:reduction} and the problem setting (Definition~\ref{def:matrix_lea}), such a $G_t$ satisfies $\norm{G_t}_\op\leq 2l$. We will also write $I_d$ as $I$ since the dimensionality is clear.  

Our unconstrained algorithm is based on the matrix version of the potential method (Algorithm~\ref{alg:unconstrained}). Here, the function $\Phi_t$ defined on $\R$ is applied to Hermitian matrices in the standard spectral manner: for any $X\in\mathbb{H}_{d\times d}$ with eigen-decomposition $X=\sum_{i=1}^d\lambda_iv_iv_i^\herm$, we define $\Phi_t(X)\defeq \sum_{i=1}^d\Phi_t(\lambda_i)v_iv_i^\herm\in\mathbb{H}_{d\times d}$. 

\begin{algorithm*}[!ht]
\caption{Unconstrained algorithm on $\mathbb{H}_{d\times d}$.\label{alg:unconstrained}}
\begin{algorithmic}[1]
\REQUIRE A convex potential function $\Phi_t:\R\rightarrow\R$, dependent on $t\in\N_+$. 
\STATE Define the constant $\eps=2l$. Initialize the matrix $S_1=0\in\mathbb{H}_{d\times d}$.
\FOR{$t=1,2,\ldots,$}
\STATE Output
\begin{equation}\label{eq:potential_update}
X_t=\frac{1}{2\eps}\spar{\Phi_t\rpar{S_t+\eps I}-\Phi_t\rpar{S_t-\eps I}}\in\mathbb{H}_{d\times d}. 
\end{equation}

\STATE Receive the loss matrix $G_t\in\mathbb{H}_{d\times d}$ satisfying $\norm{G_t}_\op\leq \eps$. 
\STATE Let $S_{t+1}=S_t-G_t$.
\ENDFOR
\end{algorithmic}
\end{algorithm*}

The intuition is the following. The $X_t$ defined in Eq.~\eqref{eq:potential_update} approximates the evaluation of the derivative function $\Phi'_t$ at $S_t=-\sum_{i=1}^{t-1}G_i$, which is essentially the dual update of FTRL discussed in Section~\ref{sec:prelim_lea_mmwu}. In fact, the MMWU update Eq.~\eqref{eq:mmwu_update} can be recovered by choosing $\Phi_t(s)=\exp(\eta_t s)$ in Algorithm~\ref{alg:unconstrained}, replacing the discrete derivative in Eq.~\eqref{eq:potential_update} by the exact derivative, and passing the obtained algorithm through the $\Delta_{d\times d}$-to-$\mathbb{H}_{d\times d}$ reduction (Algorithm~\ref{alg:reduction}). As shown in a number of earlier works \cite{luo2015achieving,orabona2016coin,harvey2020optimal,zhang2022pde}, the discrete derivative is crucial for the use of general potential functions in this framework. Roughly speaking, the reason is that online learning algorithms can be regarded as the discretization of certain continuous-time decision rules against stochastic processes, and the discrete derivative provides the right amount of ``robustness'' against the more challenging discrete-time non-stochastic adversaries, through the use of the Jensen's inequality. 

Without specifying the potential function $\Phi_t$, we provide the following master theorem on Algorithm~\ref{alg:unconstrained}. Two conditions on $\Phi_t$ are required. The idea is that the first condition only concerns the one-dimensional behavior of $\Phi_t$ which is traditionally the bottleneck in (scalar or diagonal) parameter-free online learning, but now this is standard in the literature for the potential functions we consider. The second condition is the main bottleneck for matrix LEA, which highlights the crucial role of our one-sided Jensen's trace inequality from Section~\ref{sec:matrix_Jensen}. 

\begin{theorem}[Master regret bound]\label{thm:olo_master}
In Algorithm~\ref{alg:unconstrained}, assume the potential function $\Phi_t:\R\rightarrow\R$ satisfies the following two conditions: 

\begin{enumerate}
\item For all $t\in\N_+$ and $s\in\R$, 
\begin{equation}\label{eq:olo_master_1}
\frac{1}{2}\spar{\Phi_{t+1}(s+\eps)+\Phi_{t+1}(s-\eps)}\leq\Phi_t(s).
\end{equation}
\item For all $t\in\N_+$, $S\in\mathbb{H}_{d\times d}$ and $G\in\mathbb{H}_{d\times d}$ satisfying $\norm{G}_\op\leq \eps$,
\begin{equation}\label{eq:olo_master_2}
\tr\spar{\Phi_{t}(S-G)}\leq \tr\spar{\frac{\eps I-G}{2\eps}\Phi_{t}(S+\eps I)+\frac{\eps I+G}{2\eps }\Phi_{t}(S-\eps I)}.
\end{equation}
\end{enumerate}

\noindent Then, Algorithm~\ref{alg:unconstrained} guarantees that for all $T\in\geq 2$ and $X\in\mathbb{H}_{d\times d}$ with eigenvalues $\lambda_1,\ldots,\lambda_d$, 
\begin{equation*}
\sum_{t=1}^T\inner{G_t}{X_t-X}\leq \inner{G_1}{X_1}+\tr\spar{\Phi_{1}(-G_1)}+\sum_{i=1}^d\Phi^*_T(\lambda_i).
\end{equation*}
Here $\Phi_T^*(\lambda)\defeq\sup_{g\in\R} \spar{g\lambda-\Phi_T(g)}$ denotes the Fenchel conjugate of the function $\Phi_T$. 
\end{theorem}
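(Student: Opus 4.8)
\textbf{Proof proposal for Theorem~\ref{thm:olo_master}.}

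The plan is to establish a telescoping inequality for the cumulative loss $\sum_t \inner{G_t}{X_t}$ by treating $\tr[\Phi_t(S_{t+1})]$ as a potential that decreases (up to the loss increment) at each step, and then convert the resulting total-loss bound into a regret bound via Fenchel duality. First I would fix a round $t$ and aim to show the one-step inequality
\begin{equation*}
\inner{G_t}{X_t} + \tr\spar{\Phi_{t+1}(S_{t+1})}\leq \tr\spar{\Phi_t(S_t)},
\end{equation*}
where $S_{t+1}=S_t-G_t$. Plugging in the definition $X_t=\frac{1}{2\eps}[\Phi_t(S_t+\eps I)-\Phi_t(S_t-\eps I)]$ and using linearity/cyclicity of the trace, the left side's first term becomes $\tr[\frac{\eps I - G_t}{2\eps}\Phi_t(S_t-\eps I)] - \tr[\frac{\eps I - G_t}{2\eps}\Phi_t(S_t+\eps I)] + \text{(terms combining to }\tr[\cdots])$; more precisely I would reorganize so that $\inner{G_t}{X_t} = \tr[\frac{1}{2\eps}(G_t)(\Phi_t(S_t+\eps I)-\Phi_t(S_t-\eps I))]$, and then write the target as
\begin{equation*}
\tr\spar{\Phi_{t+1}(S_t-G_t)}\leq \tr\spar{\frac{\eps I - G_t}{2\eps}\Phi_{t+1}(S_t+\eps I)+\frac{\eps I + G_t}{2\eps}\Phi_{t+1}(S_t-\eps I)},
\end{equation*}
which is precisely condition~\eqref{eq:olo_master_2} applied to $\Phi_{t+1}$. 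Wait — this requires care: condition~\eqref{eq:olo_master_2} is stated for $\Phi_t$, but the increment involves $\Phi_{t+1}$; here is where condition~\eqref{eq:olo_master_1} enters, letting us pass from $\frac{1}{2}[\Phi_{t+1}(s+\eps)+\Phi_{t+1}(s-\eps)]$ up to $\Phi_t(s)$ on the appropriate matrix arguments (applied spectrally, since the bound is pointwise in the eigenvalues of $S_t$). So the correct one-step chain is: apply the one-sided Jensen inequality~\eqref{eq:olo_master_2} with $\Phi_t$ to get $\tr[\Phi_t(S_t-G_t)]$ bounded by the one-sided average, then the update definition makes $\inner{G_t}{X_t}$ the ``correction term'' so that $\inner{G_t}{X_t}+\tr[\Phi_t(S_{t+1})]\leq \frac{1}{2}\tr[\Phi_t(S_t+\eps I)+\Phi_t(S_t-\eps I)]$, and finally condition~\eqref{eq:olo_master_1} (applied spectrally with $s$ ranging over eigenvalues of $S_{t}$, shifting index $t\to t+1$) would need to be invoked in the form that controls $\tr[\Phi_{t+1}(S_{t+1})]$ — I would set up the bookkeeping so that the telescoping variable is $\tr[\Phi_t(S_t+\eps I)]$-type quantities or directly $\tr[\Phi_{t}(S_t)]$ with the stability condition absorbing the time-dependence of $\Phi$.

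Next I would telescope from $t=2$ to $t=T$ (handling $t=1$ separately since $S_1=0$ and $X_1$ contributes the explicit $\inner{G_1}{X_1}+\tr[\Phi_1(-G_1)]$ boundary term), obtaining
\begin{equation*}
\sum_{t=1}^T\inner{G_t}{X_t}\leq \inner{G_1}{X_1}+\tr\spar{\Phi_1(-G_1)} - \tr\spar{\Phi_T\rpar{-\textstyle\sum_{t=1}^T G_t}}+\tr\spar{\Phi_2(S_2)}\text{-type residuals},
\end{equation*}
which after careful alignment of indices leaves exactly $-\tr[\Phi_T(S_{T+1})]$ plus the boundary constant. Then, for any comparator $X$ with eigenvalues $\lambda_1,\dots,\lambda_d$, I subtract $\sum_t\inner{G_t}{X}=\inner{S_{T+1}}{-X}\cdot(-1)=\inner{-S_{T+1}}{X}$... more carefully, $-\sum_t \inner{G_t}{X} = \inner{S_{T+1}}{X}$, and I would use the elementary trace inequality $\inner{S_{T+1}}{X}-\tr[\Phi_T(S_{T+1})]\leq \tr[\Phi_T^*(X)] = \sum_i \Phi_T^*(\lambda_i)$, which is just the Fenchel–Young inequality applied spectrally (valid because $S_{T+1}$ and $\Phi_T(S_{T+1})$ share eigenvectors, so the Frobenius inner product decomposes, and von Neumann's trace inequality, Lemma~\ref{lemma:von_neumann}, handles the cross term between $S_{T+1}$ and $X$). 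Combining yields the claimed bound.

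The main obstacle I anticipate is the index bookkeeping in the telescoping step — specifically, reconciling the fact that condition~\eqref{eq:olo_master_2} is about $\Phi_t$ at the ``current'' matrix $S_t$ while the potential we want to telescope is $\tr[\Phi_{t+1}(S_{t+1})]$, so condition~\eqref{eq:olo_master_1} must be threaded in at exactly the right place (and applied spectrally to the eigenvalues of $S_{t+1}$, not $S_t$). A secondary subtlety is the final Fenchel step: one must verify that $\inner{S_{T+1}}{X}\leq \sum_i \Phi_T^*(\lambda_i) + \tr[\Phi_T(S_{T+1})]$ genuinely follows, which needs von Neumann's trace inequality to pair the sorted eigenvalues of $S_{T+1}$ with those of $X$ and then the scalar Fenchel–Young inequality eigenvalue-by-eigenvalue; this is routine but requires stating it precisely. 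Everything else — the definition of $X_t$ collapsing the one-sided average into a loss correction, and the boundary term at $t=1$ — is direct computation once the telescoping skeleton is fixed.
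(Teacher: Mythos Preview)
Your approach is correct and essentially identical to the paper's: apply condition~\eqref{eq:olo_master_2} with $\Phi_t$ at $(S_t,G_t)$, collapse the right side using the definition of $X_t$ to get $\inner{G_t}{X_t}+\tr[\Phi_t(S_{t+1})]\leq \tfrac{1}{2}\tr[\Phi_t(S_t+\eps I)+\Phi_t(S_t-\eps I)]$, then invoke condition~\eqref{eq:olo_master_1}, telescope from $t=2$, and finish with von Neumann plus scalar Fenchel--Young. The one place your bookkeeping is off is the index shift: condition~\eqref{eq:olo_master_1} should be applied with $t\to t-1$ (not $t\to t+1$), giving $\tfrac{1}{2}\tr[\Phi_t(S_t\pm\eps I)]\leq \tr[\Phi_{t-1}(S_t)]$, so the telescoping variable is $\tr[\Phi_{t-1}(S_t)]$ and the sum over $t\geq 2$ collapses to $\tr[\Phi_1(-G_1)]-\tr[\Phi_T(S_{T+1})]$ exactly as needed --- this resolves the difficulty you flagged.
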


\begin{proof}[Proof of Theorem~\ref{thm:olo_master}] Consider $t\geq 2$. We start by plugging $S\leftarrow S_t$ and $G\leftarrow G_t$ into Eq.~\eqref{eq:olo_master_2}, which yields
\begin{align*}
\tr\spar{\Phi_{t}(S_{t+1})}&\leq \tr\spar{\frac{\eps I-G_t}{2\eps}\Phi_{t}(S_t+\eps I)+\frac{\eps I+G_t}{2\eps }\Phi_{t}(S_t-\eps I)}\\
&=\frac{1}{2}\tr\spar{\Phi_t(S_t+\eps I)+\Phi_t(S_t-\eps I)}-\inner{G_t}{X_t}\tag{Eq.~\eqref{eq:potential_update}}\\
&\leq \tr\spar{\Phi_{t-1}(S_t)}-\inner{G_t}{X_t}.\tag{Eq.~\eqref{eq:olo_master_1}; $S_t$ commutes with $S_t\pm\eps I$}
\end{align*}
Taking the summation over $t$, we obtain the total loss bound
\begin{equation*}
\sum_{t=1}^T\inner{G_t}{X_t}\leq \underbrace{\inner{G_1}{X_1}+\tr\spar{\Phi_{1}(-G_1)}}_{\eqdef \bigstar}-\tr\spar{\Phi_T\rpar{-\sum_{t=1}^TG_t}}.
\end{equation*}

Next, we use a simple convex duality technique due to Ref.~\cite{mcmahan2014unconstrained}. For all $X\in\mathbb{H}_{d\times d}$,
\begin{align*}
\sum_{t=1}^T\inner{G_t}{X_t-X}&\leq \bigstar+\inner{-\sum_{t=1}^TG_t}{X}-\tr\spar{\Phi_T\rpar{-\sum_{t=1}^TG_t}}\\
&\leq \bigstar+\sup_{G\in\mathbb{H}_{d\times d}}\left\{\inner{G}{X}-\tr\spar{\Phi_T\rpar{G}}\right\}.
\end{align*}
Due to von Neumann's trace inequality (Lemma~\ref{lemma:von_neumann}), the supremum on the RHS is obtained when $G$ and $X$ commute, which translates the optimization over the matrix domain $\mathbb{H}_{d\times d}$ to the domain $\R$ of eigenvalues. That is, with the eigenvalues of $X$ denoted by $\lambda_1,\ldots,\lambda_d$, 
\begin{equation*}
\sup_{G\in\mathbb{H}_{d\times d}}\left\{\inner{G}{X}-\tr\spar{\Phi_T\rpar{G}}\right\}=\sum_{i=1}^d\sup_{g\in\R} \left\{g\lambda_i-\Phi_T(g)\right\}=\sum_{i=1}^d\Phi^*_T(\lambda_i).
\end{equation*}
Plugging it back completes the proof. 
\end{proof}

\subsection{Potentials and their regret bounds}

Now we consider the instantiation of our algorithm with parameter-free potential functions. We start from a simpler but suboptimal choice, and then extend this argument to the related ``erfi potential'' which is our main result. 

\vspace{0.5em}\noindent\textbf{Exp-square potential. }The following is a classical parameter-free potential function due to Ref.~\cite{mcmahan2014unconstrained} and further studied by a number of subsequent works \cite{luo2015achieving,orabona2016coin,mhammedi2020lipschitz}. 
\begin{equation}
\Phi^\expsq_t(s)\defeq\frac{\eps}{d\sqrt{t}}\exp\rpar{\frac{s^2}{2\eps^2 t}},\quad\forall s\in\R.\label{eq:exp_square}
\end{equation}
It satisfies the first condition in Theorem~\ref{thm:olo_master} due to Ref.~\cite[Lemma~B.3]{zhang2022pde}. 

A key intermediate result of this paper is the following characterization of $\Phi^\expsq_t$ as the Laplace transform of a dilated Gaussian density. 

\begin{lemma}[$\Phi^\expsq_t$ as Laplace transform]\label{lemma:phi_laplace}
The function $\Phi^\expsq_t$ defined in Eq.~\eqref{eq:exp_square} satisfies
\begin{equation*}
\Phi^\expsq_t(s)=\int_{-\infty}^\infty\mu(z)\exp(-zs)\diff z,
\end{equation*}
where
\begin{equation*}
\mu(z)=\frac{\eps^2}{\sqrt{2\pi}d}\exp\rpar{-\frac{1}{2}\eps^2tz^2}.
\end{equation*}
\end{lemma}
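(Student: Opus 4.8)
The plan is a direct computation: the claimed identity is a Gaussian integral in disguise, so I would evaluate the right-hand side explicitly by completing the square. Since $\eps,t>0$, the function $\mu$ decays like a Gaussian, so the two-sided Laplace transform $\int_{-\infty}^\infty\mu(z)\exp(-zs)\diff z$ converges absolutely for every $s\in\R$ and no restriction on the domain of $s$ is needed. Writing it out,
\begin{equation*}
\int_{-\infty}^\infty\mu(z)\exp(-zs)\diff z=\frac{\eps^2}{\sqrt{2\pi}\,d}\int_{-\infty}^\infty\exp\rpar{-\frac{1}{2}\eps^2 t z^2-zs}\diff z,
\end{equation*}
and I would rewrite the exponent by completing the square as $-\frac{1}{2}\eps^2 t\rpar{z+\frac{s}{\eps^2 t}}^2+\frac{s^2}{2\eps^2 t}$, pulling the $s$-dependent factor $\exp\rpar{\frac{s^2}{2\eps^2 t}}$ outside the integral.

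Next I would apply the standard Gaussian integral $\int_{-\infty}^\infty\exp(-au^2)\diff u=\sqrt{\pi/a}$ with $a=\frac{1}{2}\eps^2 t>0$, after the translation $u=z+\frac{s}{\eps^2 t}$ (which leaves the value of the integral over $\R$ unchanged). This produces $\sqrt{2\pi/(\eps^2 t)}=\sqrt{2\pi}/(\eps\sqrt{t})$, and multiplying the prefactors gives $\frac{\eps^2}{\sqrt{2\pi}\,d}\cdot\frac{\sqrt{2\pi}}{\eps\sqrt{t}}=\frac{\eps}{d\sqrt{t}}$. Hence the right-hand side equals $\frac{\eps}{d\sqrt{t}}\exp\rpar{\frac{s^2}{2\eps^2 t}}$, which is exactly $\Phi^\expsq_t(s)$ as defined in Eq.~\eqref{eq:exp_square}, completing the proof.

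There is essentially no obstacle here — the only points requiring a word of care are that $\eps,t>0$ so that $\mu$ is integrable and the completion of the square is legitimate, and that $\mu$ is manifestly non-negative, which is precisely the property that makes this lemma useful downstream: it lets us invoke Corollary~\ref{thm:phi_itself} directly for $\Phi^\expsq_t$, and (after multiplying by $t^2$ inside the integral, as in the proof of that corollary) also Theorem~\ref{thm:main_technical} for any function whose second derivative is $\Phi^\expsq_t$.
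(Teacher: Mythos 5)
Your proof is correct and is essentially the same computation as the paper's: the paper evaluates the Laplace transform of $\exp(-cz^2)$ abstractly by completing the square and then specializes $c\leftarrow\eps^2 t/2$, whereas you carry out the identical completion of the square directly on the specific $\mu$; the algebra and the Gaussian integral invoked are identical. The prefactor bookkeeping checks out ($\frac{\eps^2}{\sqrt{2\pi}d}\cdot\frac{\sqrt{2\pi}}{\eps\sqrt{t}}=\frac{\eps}{d\sqrt{t}}$), so no gaps.
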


\begin{proof}[Proof of Lemma~\ref{lemma:phi_laplace}]
Recall the classical Gaussian integral: for all $a>0$, $b\in\R$, we have
\begin{equation*}
\int_{-\infty}^\infty\exp\rpar{-a(x+b)^2}\diff x=\sqrt{\frac{\pi}{a}}. 
\end{equation*}

Next, consider the two-sided Laplace transform of $\mu(z)=\exp(-cz^2)$ for an arbitrary $c>0$,
\begin{align*}
\int_{-\infty}^\infty\mu(z)\exp\rpar{-zx}\diff z&=\int_{-\infty}^\infty\exp\rpar{-cz^2-zx}\diff z\\
&=\exp\rpar{\frac{x^2}{4c}}\int_{-\infty}^\infty\exp\rpar{-c\rpar{z+\frac{x}{2c}}^2}\diff z\\
&=\sqrt{\frac{\pi}{c}}\exp\rpar{\frac{x^2}{4c}},
\end{align*}
where the last line follows from the above Gaussian integral. The proof is complete by letting $c\leftarrow \frac{\eps^2 t}{2}$ and scaling both sides by $\frac{\eps^2}{\sqrt{2\pi}d}$. 
\end{proof}

Since the dilated Gaussian density is positive, we can then invoke Corollary~\ref{thm:phi_itself} to show that $\Phi^\expsq_t$ also satisfies the second condition in Theorem~\ref{thm:olo_master}. Combining it with the $\Delta_{d\times d}$-to-$\mathbb{H}_{d\times d}$ reduction (Lemma~\ref{lemma:reduction}) gives us the final regret bound induced by $\Phi^\expsq_t$. 

\begin{restatable}[Regret bound from $\Phi^\expsq_t$]{theorem}{expsquare}\label{thm:lea_worse}
Consider the instantiation of Algorithm~\ref{alg:unconstrained} with the potential function $\Phi^\expsq_t$ from Eq.~\eqref{eq:exp_square}, and use that as the base algorithm to define an instance of Algorithm~\ref{alg:reduction}. This is a matrix LEA algorithm that guarantees
\begin{equation*}
\reg_T(X)\leq 2\sqrt{2}l\sqrt{T\cdot S(X||d^{-1}I_d)}+4\sqrt{2}l\sqrt{T\log T}+2\sqrt{e}l,
\end{equation*}
for all $T\geq 1$ and $X\in\Delta_{d\times d}$. 
\end{restatable}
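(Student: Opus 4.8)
The plan is to apply the master regret bound (Theorem~\ref{thm:olo_master}) to the potential $\Phi^\expsq_t$, which requires verifying its two hypotheses and then bounding the three terms appearing in the conclusion of Theorem~\ref{thm:olo_master}, followed by passing through the reduction of Lemma~\ref{lemma:reduction}. Condition~1 of Theorem~\ref{thm:olo_master}, namely $\frac12[\Phi^\expsq_{t+1}(s+\eps)+\Phi^\expsq_{t+1}(s-\eps)]\leq \Phi^\expsq_t(s)$, is already asserted to follow from \cite[Lemma~B.3]{zhang2022pde} (a purely scalar statement about the exp-square function; one needs to check the dilation by $\eps$ and the $1/(d\sqrt{t})$ normalization match up, but this is routine). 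Condition~2, the one-sided Jensen's trace inequality for $\Phi^\expsq_t$, follows from Lemma~\ref{lemma:phi_laplace} together with Corollary~\ref{thm:phi_itself}: since $\Phi^\expsq_t$ is the two-sided Laplace transform of the non-negative function $\mu(z)=\frac{\eps^2}{\sqrt{2\pi}d}\exp(-\frac12\eps^2 t z^2)$, Corollary~\ref{thm:phi_itself} applies verbatim. (One should note that Eq.~\eqref{eq:olo_master_2} has $\pm G$ with the roles of the weighting matrices swapped relative to Eq.~\eqref{eq:def-jensen}, but replacing $G$ by $-G$ is harmless since $\norm{-G}_\op=\norm{G}_\op\leq\eps$.)

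Having verified both conditions, Theorem~\ref{thm:olo_master} gives, for the unconstrained base algorithm on $\mathbb{H}_{d\times d}$ with loss matrices satisfying $\norm{G_t}_\op\leq\eps=2l$,
\begin{equation*}
\sum_{t=1}^T\inner{G_t}{X_t-X}\leq \inner{G_1}{X_1}+\tr[\Phi^\expsq_1(-G_1)]+\sum_{i=1}^d(\Phi^\expsq_T)^*(\lambda_i),
\end{equation*}
where $\lambda_1,\dots,\lambda_d$ are the eigenvalues of the comparator $X$. The next step is to bound each piece. For the first two terms: $X_1=\frac{1}{2\eps}[\Phi^\expsq_1(\eps I)-\Phi^\expsq_1(-\eps I)]=0$ since $\Phi^\expsq_1$ is even, so $\inner{G_1}{X_1}=0$; and $\tr[\Phi^\expsq_1(-G_1)]=\sum_i\frac{\eps}{d}\exp(\mu_i^2/(2\eps^2))$ where $\mu_i$ are the eigenvalues of $G_1$, each with $|\mu_i|\leq\eps$, so this is at most $d\cdot\frac{\eps}{d}\sqrt{e}=\eps\sqrt{e}=2l\sqrt{e}$. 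For the conjugate term, one computes the Fenchel conjugate of the scalar function $\Phi^\expsq_T(g)=\frac{\eps}{d\sqrt{T}}\exp(g^2/(2\eps^2 T))$; a standard calculation (cf.\ the parameter-free literature, e.g.\ \cite{mcmahan2014unconstrained,orabona2016coin}) gives, for $\lambda\geq 0$, an upper bound of the form $(\Phi^\expsq_T)^*(\lambda)\leq \eps\sqrt{2T}\left(\lambda\sqrt{\log(d\lambda\cdot\text{stuff})}+\text{lower order}\right)$, and more cleanly one uses the elementary bound $\Phi^\expsq_T{}^*(\lambda)\leq \eps\sqrt{2T}\,\lambda\sqrt{\log\frac{d^2\lambda^2 2T \cdots}{\cdots}}$; summing over $i$ and using $\sum_i\lambda_i=1$, Jensen/concavity of $\sqrt{\cdot}$, and $\sum_i\lambda_i\log\lambda_i = -S(X)$ with $S(X\|d^{-1}I_d)=\log d - S(X) \Leftrightarrow \sum_i\lambda_i\log\lambda_i = -\log d + S(X\|d^{-1}I_d)$ converts $\sum_i(\Phi^\expsq_T)^*(\lambda_i)$ into $\mathcal{O}(\eps\sqrt{T\cdot S(X\|d^{-1}I_d)} + \eps\sqrt{T\log T})$.

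Finally, the reduction: Lemma~\ref{lemma:reduction} says the surrogate losses fed to the base algorithm satisfy $\norm{\tilde G_t}_\op\leq 2\norm{G_t}_\op\leq 2l=\eps$ (consistent with the base algorithm's requirement), and that $\reg_T(X)$ for the constrained problem is bounded by $\sum_t\inner{\tilde G_t}{\tilde X_t - X}$, which is exactly the quantity bounded above. Tracking the constants with $\eps=2l$ yields $\reg_T(X)\leq 2\sqrt2\,l\sqrt{T\cdot S(X\|d^{-1}I_d)} + 4\sqrt2\,l\sqrt{T\log T} + 2\sqrt{e}\,l$. The main obstacle I anticipate is the Fenchel-conjugate computation and the subsequent summation: getting clean constants out of $(\Phi^\expsq_T)^*$ and correctly handling the $\log$ inside the square root (so that summing over the eigenvalues and invoking concavity produces precisely the $S(X\|d^{-1}I_d)$ term plus a separate $\sqrt{T\log T}$ term, rather than a messier coupled expression) requires care — this is the standard but somewhat delicate part of parameter-free regret analyses, and the rest of the argument is essentially bookkeeping on top of the already-established Theorems~\ref{thm:main_technical}, \ref{thm:olo_master} and Lemmas~\ref{lemma:reduction}, \ref{lemma:phi_laplace}.
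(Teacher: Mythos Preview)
Your proposal is correct and follows essentially the same route as the paper's proof: verify the two hypotheses of Theorem~\ref{thm:olo_master} via \cite[Lemma~B.3]{zhang2022pde} and Corollary~\ref{thm:phi_itself}+Lemma~\ref{lemma:phi_laplace}, bound $\tilde X_1=0$ and $\tr[\Phi^\expsq_1(-\tilde G_1)]\leq\sqrt{e}\eps$, control the Fenchel conjugate, sum over eigenvalues, and pass through Lemma~\ref{lemma:reduction}. The step you flag as delicate is handled in the paper by invoking \cite[Lemma~18]{orabona2016coin}, which gives $(\Phi^\expsq_T)^*(\lambda)\leq \eps\lambda\bigl(\sqrt{2T\log(1+\lambda d)}+\sqrt{2T\log T}\bigr)$; the summation then uses Cauchy--Schwarz (equivalently, your Jensen-for-concave-$\sqrt{\cdot}$) together with $\log(1+x)\leq x^{-1}+\log x$ to obtain $\sum_i\lambda_i\sqrt{\log(1+\lambda_i d)}\leq\sqrt{1+S(X\|d^{-1}I_d)}$.
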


Despite the desirable $\sqrt{T\cdot S(X||d^{-1}I_d)}$ term, such a result does not exactly achieve our goal due to the additional $\sqrt{T\log T}$ in the bound, meaning that it is only an improvement over MMWU when $d\gg T$. To fix this issue we consider the following closely related potential function due to Ref.~\cite{harvey2020optimal} and further studied by Refs.~\cite{zhang2022pde,harvey2024continuous}. 

\vspace{0.5em}\noindent\textbf{Erfi potential. }Given $\Phi^\expsq_t$ from Eq.~\eqref{eq:exp_square}, define
\begin{align}
\Phi^\erfi_t(s)&\defeq\frac{\eps\sqrt{t}}{d}\spar{2\int_0^{\frac{s}{\sqrt{2\eps^2t}}}\rpar{\int_0^u\exp(x^2)\diff x }\diff u-1}\label{eq:erfi_potential}\\
&=\frac{\sqrt{2}s}{\eps d}\int_0^{\frac{s}{\sqrt{2\eps^2 t}}}\exp(u^2)\diff u-\frac{\sqrt{t}}{d}\exp\rpar{\frac{s^2}{2\eps^2 t}}.\nonumber\tag{integration by parts}
\end{align}
Notice that the function $f(x)=\int_0^x\exp(u^2)\diff u$ is the imaginary error function (scaled by a constant). Just like $\Phi^\expsq_t$, $\Phi^\erfi_t$ also satisfies the first condition in Theorem~\ref{thm:olo_master} due to Ref.~\cite[Lemma~3.10]{harvey2023optimal}. 

Importantly, it is known that the second derivative $\left\{\Phi^\erfi_t\right\}'(s)=\frac{1}{\eps^2}\Phi^\expsq_t(s)$ \cite[Appendix~B.3]{zhang2022pde}, therefore by Theorem~\ref{thm:main_technical} and Lemma~\ref{lemma:phi_laplace}, $\Phi^\erfi_t$ satisfies the second condition in Theorem~\ref{thm:olo_master}. 
Combining it with the Fenchel conjugate computation \cite[Theorem~4]{zhang2022pde} leads to the main result of this paper. 

\begin{restatable}[Main; regret bound from $\Phi^\erfi_t$]{theorem}{main}\label{thm:lea_main}
Consider the instantiation of Algorithm~\ref{alg:unconstrained} with the potential function $\Phi^\erfi_t$ from Eq.~\eqref{eq:erfi_potential}, and use that as the base algorithm to define an instance of Algorithm~\ref{alg:reduction}. This is a matrix LEA algorithm that guarantees
\begin{equation*}
\reg_T(X)\leq l\sqrt{T}\spar{\sqrt{8 S(X||d^{-1}I_d)}+6+2\sqrt{2}}=\calO\rpar{\sqrt{T\cdot S(X||d^{-1}I_d)}},
\end{equation*}
for all $T\geq 1$ and $X\in\Delta_{d\times d}$. 
\end{restatable}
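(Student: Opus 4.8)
The plan is to obtain Theorem~\ref{thm:lea_main} by applying the master regret bound (Theorem~\ref{thm:olo_master}) to the erfi potential and then passing the resulting guarantee through the lifting reduction (Lemma~\ref{lemma:reduction}). Throughout, the relevant value of the scale parameter is $\eps = 2l$, since the surrogate losses produced by Algorithm~\ref{alg:reduction} satisfy $\norm{\tilde G_t}_\op \le 2\norm{G_t}_\op \le 2l$. The first task is to check that $\Phi^\erfi_t$ from Eq.~\eqref{eq:erfi_potential} meets the two hypotheses of Theorem~\ref{thm:olo_master}. Hypothesis~\eqref{eq:olo_master_1} is the purely one-dimensional recursion $\frac12[\Phi^\erfi_{t+1}(s+\eps) + \Phi^\erfi_{t+1}(s-\eps)] \le \Phi^\erfi_t(s)$, which is precisely \cite[Lemma~3.10]{harvey2023optimal} and requires no new argument. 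Hypothesis~\eqref{eq:olo_master_2} is the one-sided Jensen's trace inequality, and this is where the Laplace-transform machinery of Section~\ref{sec:matrix_Jensen} is used: since the second derivative satisfies $\{\Phi^\erfi_t\}''(s) = \eps^{-2}\Phi^\expsq_t(s)$ and Lemma~\ref{lemma:phi_laplace} writes $\Phi^\expsq_t$ as the two-sided Laplace transform of the non-negative Gaussian-type density $\mu(z) = \frac{\eps^2}{\sqrt{2\pi}d}\exp(-\frac12\eps^2 t z^2)$, the hypothesis of Theorem~\ref{thm:main_technical} is satisfied, so Eq.~\eqref{eq:def-jensen} --- and hence \eqref{eq:olo_master_2} --- holds for $\Phi^\erfi_t$ (the $G \leftrightarrow -G$ sign discrepancy between the two displays is harmless because both quantify over all $G$ with $\norm{G}_\op \le \eps$).

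With both hypotheses in hand, I would invoke Theorem~\ref{thm:olo_master}, which for $T \ge 2$ and a comparator $X \in \Delta_{d\times d}$ with eigenvalues $\lambda_1,\dots,\lambda_d$ gives
\[
\sum_{t=1}^T \inner{\tilde G_t}{\tilde X_t - X} \le \inner{\tilde G_1}{\tilde X_1} + \tr[\Phi^\erfi_1(-\tilde G_1)] + \sum_{i=1}^d (\Phi^\erfi_T)^*(\lambda_i).
\]
The first term vanishes because $\Phi^\erfi_1$ is an even function, so $\tilde X_1 = \frac{1}{2\eps}[\Phi^\erfi_1(\eps I) - \Phi^\erfi_1(-\eps I)] = 0$; the second term, being $\Phi^\erfi_1$ evaluated on a spectrum contained in $[-\eps,\eps]$, contributes at most $\mathcal{O}(l)$ by the explicit formula \eqref{eq:erfi_potential}. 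The dominant term is the Fenchel-conjugate sum, which I would control with the closed-form estimate for $(\Phi^\erfi_T)^*$ from \cite[Theorem~4]{zhang2022pde}: a coordinate-wise bound whose leading part is of the form $(\Phi^\erfi_T)^*(\lambda) \lesssim \eps\sqrt{T}\,|\lambda|\sqrt{2\log(e\,d|\lambda|)} + \mathcal{O}(\eps\sqrt{T})$. Summing over $i$ and applying Jensen's inequality to the probability vector $(\lambda_i)$, together with the identity $S(X||d^{-1}I_d) = \sum_i \lambda_i \log(d\lambda_i)$, this becomes $\eps\sqrt{T}\sqrt{2(S(X||d^{-1}I_d)+1)} + \mathcal{O}(\eps\sqrt{T}) \le \eps\sqrt{T}\sqrt{2\,S(X||d^{-1}I_d)} + \mathcal{O}(\eps\sqrt{T})$.

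Finally, Lemma~\ref{lemma:reduction} says that running Algorithm~\ref{alg:unconstrained} (instantiated with $\Phi^\erfi_t$) as the base algorithm inside Algorithm~\ref{alg:reduction} yields a matrix-LEA algorithm whose regret against any $X \in \Delta_{d\times d}$ is at most $\sum_{t=1}^T \inner{\tilde G_t}{\tilde X_t - X}$, which is exactly the quantity bounded above (note that the surrogate losses indeed obey $\norm{\tilde G_t}_\op \le \eps$, as required by Algorithm~\ref{alg:unconstrained}). Substituting $\eps = 2l$ --- so that $\eps\sqrt{T}\sqrt{2\,S(X||d^{-1}I_d)} = l\sqrt{T}\sqrt{8\,S(X||d^{-1}I_d)}$ --- and collecting the remaining constants yields $\reg_T(X) \le l\sqrt{T}\big[\sqrt{8\,S(X||d^{-1}I_d)} + 6 + 2\sqrt{2}\big]$ for $T \ge 2$. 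The case $T = 1$ is handled directly by $\reg_1(X) = \inner{G_1}{X_1 - X} \le \norm{G_1}_\op\norm{X_1 - X}_\tr \le 2l$, which is dominated by the claimed expression.

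I expect the main obstacle to be the constant bookkeeping in the last two steps: carrying the precise numerical factors through the cited conjugate estimate of \cite{zhang2022pde}, through the concavity step $\sum_i \lambda_i\sqrt{2\log(ed\lambda_i)} \le \sqrt{2(S(X||d^{-1}I_d)+1)}$, and through the $\tr[\Phi^\erfi_1(-\tilde G_1)]$ term, so that they reassemble exactly into the coefficient $6 + 2\sqrt{2}$. A secondary, more structural checkpoint is verifying that the telescoping argument inside Theorem~\ref{thm:olo_master} applies cleanly with the $\Phi^\erfi_t$ family --- in particular that the index shift in \eqref{eq:olo_master_1} between $\Phi_t$ and $\Phi_{t+1}$, the commutation of $S_t$ with $S_t \pm \eps I$, and the boundary terms at $t = 1$ all line up --- but this is a matter of careful substitution, not new mathematics, since the one genuinely nontrivial ingredient (the one-sided Jensen's trace inequality) has already been established in Theorem~\ref{thm:main_technical} and Lemma~\ref{lemma:phi_laplace}.
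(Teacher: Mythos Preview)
Your proposal is correct and follows essentially the same route as the paper: verify the two hypotheses of the master bound via \cite[Lemma~3.10]{harvey2023optimal} and Theorem~\ref{thm:main_technical}$+$Lemma~\ref{lemma:phi_laplace}, apply Theorem~\ref{thm:olo_master}, bound the Fenchel conjugate sum with \cite[Theorem~4]{zhang2022pde}, collapse it to $S(X\|d^{-1}I_d)$ by concavity of $\sqrt{\cdot}$ (the paper phrases this as Cauchy--Schwarz, which is equivalent here), and finish with Lemma~\ref{lemma:reduction}. One sharpening worth noting for the constant bookkeeping you flagged: the boundary term $\tr[\Phi^\erfi_1(-\tilde G_1)]$ is not merely $\mathcal{O}(l)$ but actually $\le d\,\Phi^\erfi_1(\eps)\le 0$, since $\Phi^\erfi_1$ is even, convex, and satisfies $\Phi^\erfi_1(\eps)<0$ from the explicit formula; dropping this term entirely is what makes the final constant come out to exactly $6+2\sqrt{2}$.
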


Theorem~\ref{thm:lea_main} is a substantial improvement over the $\calO\rpar{\sqrt{T\log d}}$ regret bound of MMWU: in the worst case of $X$ our regret bound is also $\calO\rpar{\sqrt{T\log d}}$, but when $X$ is ``easy'', such as when $S(X||d^{-1}I_d)=\calO(1)$, our regret bound improves to $\calO(\sqrt{T})$. Downstream benefits in quantum learning theory are discussed in Section~\ref{sec:quantum}. 

Meanwhile, as discussed in the introduction, the computational complexity of a matrix LEA algorithm is an important consideration. 

\begin{remark}[Computational complexity]\label{remark:computation}
Both algorithms above have the same time and memory complexity as MMWU (up to constant multiplicative factors), even in the setting with an eigen-decomposition oracle. In particular, just like MMWU, in the $t$-th round they output a spectral function of the matrix $\sum_{i=1}^{t-1}G_i$, and with an eigen-decomposition oracle such a spectral function can be computed with $\calO(d)$ time and memory. 
\end{remark}

Another remark is that our $\Delta_{d\times d}$-to-$\mathbb{H}_{d\times d}$ reduction (Algorithm~\ref{alg:reduction}) consists of two independent steps, (i) relaxing the constraint of $\norm{X_t}_\tr=1$, and (ii) relaxing the constraint of $X_t\succeq 0$. If we only keep the PSD constraint, then the algorithm from Theorem~\ref{thm:lea_main} becomes a state-of-the-art algorithm for predicting Hermitian PSD matrices. Such a problem has been a canonical example of online learning on Banach spaces (equipped with the trace norm). Here, with the only constraint being $X_t\in\mathbb{H}_{d\times d},X_t\succeq 0$, Ref.~\cite[Theorem~6]{foster2017parameter} presents a model-selection-based algorithm achieving
\begin{equation*}
\reg_T(X)=\calO\rpar{\rpar{1+\norm{X}_\tr}\sqrt{T\log d\log\spar{\rpar{1+\norm{X}_\tr}T}}},\quad \forall X\in\mathbb{H}_{d\times d}, X\succeq 0,
\end{equation*}
and the per-round time complexity is $\calO(T)$. There has been substantial progress on Banach space online learning since then, and a current folklore is that one could instantiate the generic reduction of Ref.~\cite[Section~3]{cutkosky2018black} with (i) MMWU and (ii) the one-dimensional learner of Ref.~\cite[Section~4]{zhang2022pde}, which achieves
\begin{equation*}
\reg_T(X)=\calO\rpar{\sqrt{T}\rpar{1+\norm{X}_\tr\sqrt{\log \spar{d\rpar{1+\norm{X}_\tr}}}}},\quad \forall X\in\mathbb{H}_{d\times d}, X\succeq 0,
\end{equation*}
and the time complexity matches that of MMWU. This is the state of the art. 

The aforementioned PSD variant of our algorithm matches this state of the art. From the proof of Theorem~\ref{thm:lea_main}, we see that it guarantees
\begin{align*}
\reg_T(X)&=\calO\rpar{\sqrt{T}\rpar{1+\sum_{i=1}^d\lambda_i\sqrt{\log\rpar{1+d\lambda_i}}}}\tag{$\lambda_{1:d}$ are eigenvalues of $X$}\\
&=\calO\rpar{\sqrt{T}\rpar{1+\norm{X}_\tr\sqrt{\log\rpar{1+d\norm{X}_\op}}}},\quad \forall X\in\mathbb{H}_{d\times d}, X\succeq 0,
\end{align*}
which is the same as the above folklore. Besides, as the first line is required to achieve comparator adaptivity in matrix LEA, we also see that existing techniques in Banach space online learning are insufficient for the objectives of this paper. 

\subsection{Our algorithm as Gaussian ensemble}\label{sec:ensemble}

A byproduct of our Laplace transform technique is an interpretation of our algorithms as Gaussian ensembles. We now elucidate this interpretation which helps explain how the oddly looking parameter-free potential functions relate to the learning rate tuning issue of MMWU (Section~\ref{sec:prelim_lea_mmwu}). 

We will focus on $\Phi^\expsq_t$ from Eq.~\eqref{eq:exp_square}. First, by reformulating Lemma~\ref{lemma:phi_laplace}, $\Phi^\expsq_t$ is equivalent to the Gaussian expectation of a parameterized ``base'' potential: with $\phi(x)\defeq\sqrt{\frac{2}{\pi}}\exp\rpar{-\frac{x^2}{2}}$ being two times the probability density function (PDF) of the standard normal distribution, 
\begin{equation*}
\Phi^\expsq_t(s)=\int_{0}^\infty\phi(z)\Phi^\cosh_{t}(s;z)\diff z,
\end{equation*}
where the ``cosh potential'' $\Phi^\cosh_{t}(\cdot;z)$ parameterized by $z\geq 0$ is defined as
\begin{equation*}
\Phi^\cosh_{t}(s;z)\defeq\frac{\eps}{d\sqrt{t}}\cosh\rpar{\frac{zs}{\eps\sqrt{t}}}. 
\end{equation*}

This is closely related to the exponential potential that MMWU relies on. Suppose $zs\gg \eps\sqrt{t}$, then $\Phi^\cosh_{t}(s;z)\approx \frac{\eps}{2d\sqrt{t}}\exp\rpar{\frac{zs}{\eps\sqrt{t}}}$, and applying it to Algorithm~\ref{alg:reduction} and Algorithm~\ref{alg:unconstrained} with the discrete derivative in Eq.~\eqref{eq:potential_update} replaced by the exact derivative gives us the matrix LEA prediction
\begin{equation*}
X_t\approx\frac{\exp\rpar{-\frac{z}{\eps\sqrt{t}}\sum_{i=1}^{t-1}G_i}}{\tr\exp\rpar{-\frac{z}{\eps\sqrt{t}}\sum_{i=1}^{t-1}G_i}}.
\end{equation*}
Comparing this to the MMWU update Eq.~\eqref{eq:mmwu_update} shows that $z$ serves the role of a learning rate scalar, and the oracle tuning of MMWU would set $z=\sqrt{S(X||d^{-1}I_d)}$. Intuitively, the takeaway is that our algorithm with the $\Phi^\expsq_t$ potential is essentially the expectation of MMWU-like base algorithms with Gaussian distributed learning rates. 

Let us compare this idea to the literature. In the vector setting of LEA, Koolen and van Erven \cite{koolen2015second} studied how to achieve comparator adaptive regret bounds by aggregating non-adaptive algorithms with respect to certain carefully designed, non-Gaussian priors. Revisiting this idea ten years later, we offer an intriguing new observation: $\Phi^\erfi_t$ and $\Phi^\expsq_t$ are both expectations with respect to the Gaussian prior, and their difference lies in the base potential being integrated. Compared to Ref.~\cite{koolen2015second}, this suggests that besides improving the prior, improving the base potential can also lead to substantial quantitative benefits. 

\section{Matching Lower Bounds for Regret and Memory}\label{sec:lower_bounds}

This section supplements our main result (Theorem~\ref{thm:lea_main}) with optimality analysis. Section~\ref{sec:regret_lower} presents a matching $\Omega\rpar{\sqrt{T\cdot S(X||d^{-1}I_d)}}$ regret lower bound, while Section~\ref{sec:mem_lower} presents a matching $\Omega(d^2)$ memory lower bound. The time complexity is a subtle matter which the present work does not consider. 

\subsection{Matching regret lower bound}\label{sec:regret_lower}

As the problem of matrix LEA is a strict generalization of vector LEA, regret lower bounds of the latter are automatically applicable to our setting. Ref.~\cite[Theorem~1]{negrea2021minimax} provides a comparator-dependent regret lower bound for vector LEA, showing that with respect to any comparator $u$ in the probability simplex $\Delta_d$, $\Omega\rpar{\sqrt{T\cdot \kl(u||d^{-1}\bm{1}_d)}}$ regret is essentially unavoidable. While this is already sufficient for our need, here we present a different argument (albeit with a worse constant) in order to keep the present work self-contained. Omitted proofs are presented in Appendix~\ref{section:regret_lower_proofs}. 

\begin{restatable}[Regret lower bound]{theorem}{reglower}\label{thm:reg_lower}
For the matrix LEA problem (Definition~\ref{def:matrix_lea}) with $l=1$, there exists absolute constants $c_1,c_2,c_3,C>0$ such that the following statement holds. For any $d\geq c_1$, $T\geq c_2 d^2$, $r\in[c_3,\log d]$ and any algorithm $\calA$, there exist
\begin{itemize}
\item an adversary; and
\item a comparator $X\in\Delta_{d\times d}$ satisfying $S(X||d^{-1}I_d)\leq r$,
\end{itemize}
such that the regret of $\calA$ with respect to $X$ satisfies
\begin{equation*}
\reg_T(X)\ge \sqrt{\frac{1}{3}T}\rpar{\sqrt{2r}-C}.
\end{equation*}
\end{restatable}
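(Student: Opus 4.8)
The plan is to reduce to a diagonal (i.e.\ vector) instance and then to a classical order-statistics estimate for $\pm1$ random walks. First I would have the adversary play only \emph{diagonal} loss matrices $G_t=\mathrm{diag}(g_{t,1},\dots,g_{t,d})$ with entries $g_{t,i}\in\{-1,+1\}$ drawn independently and uniformly; since $\norm{G_t}_\op=1\le l$ these are admissible. As $\inner{G_t}{X_t}=\sum_i g_{t,i}(X_t)_{ii}$ depends on the learner's prediction only through its diagonal, which is always a probability vector, the learner is effectively solving vector LEA on $\Delta_d$, and I will only ever compare against diagonal comparators $X=\mathrm{diag}(p)$, for which $S(X\,\|\,d^{-1}I_d)=\kl(p\,\|\,d^{-1}\bm 1_d)$. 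The first elementary observation is that each $X_t$ is chosen before $G_t$ is revealed (and any internal randomness of $\calA$ is independent of $G_t$), so $\mathbb{E}[\inner{G_t}{X_t}]=0$; hence the learner's expected cumulative loss is exactly $0$, and all of the regret must come from the comparator term.

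\textbf{Choosing the comparator.} Write $S_{T,i}\defeq\sum_{t=1}^{T}g_{t,i}$ for the $i$-th coordinate's $\pm1$ walk, so $\sum_t\inner{G_t}{X}=\sum_i p_i S_{T,i}$. Given the target $r$, I would set $k=\lceil d e^{-r}\rceil$ and let the (data-dependent) comparator $X^\star$ be the uniform distribution on the $k$ indices $i$ with the smallest partial sums $S_{T,i}$. Then $S(X^\star\,\|\,d^{-1}I_d)=\log(d/k)\le r$ — using nothing beyond the entropy of a uniform distribution on $k$ atoms — and, combining with the previous paragraph,
\begin{equation*}
\mathbb{E}\big[\reg_T(X^\star)\big]=-\,\mathbb{E}\Big[\tfrac1k\textstyle\sum_{i\in\mathrm{bottom}\text{-}k}S_{T,i}\Big].
\end{equation*}
So it remains to show that the average of the $k$ smallest among $d$ independent length-$T$ random walks is, in expectation, at most $-\sqrt{\tfrac13 T}\,(\sqrt{2r}-C)$.

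\textbf{The tail estimate, and where $T\ge c_2d^2$ enters.} I would fix a threshold $\tau=\sqrt{2r}-C'$ chosen so that the standard Gaussian lower-tail (Mills-ratio) estimate gives $\Pr(Z\le-\tau)\ge 8e^{-r}$ for all $r\ge c_3$, where $Z$ is standard normal; the required shift $C'$ is only $O(\log r/\sqrt r)$, hence bounded by a universal constant on $r\in[c_3,\log d]$. The Berry--Esseen theorem then transfers this to the walk: $\Pr(S_{T,i}\le-\tau\sqrt T)\ge\Pr(Z\le-\tau)-O(T^{-1/2})\ge 2k/d$, where the $O(T^{-1/2})$ error is negligible compared with the probability scale $e^{-r}\ge 1/d$ precisely because $T\ge c_2 d^2$ — this is exactly why the quadratic lower bound on $T$ is needed, since the relevant tail event has probability as small as $\Theta(1/d)$ when $r$ approaches $\log d$. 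Letting $N$ be the number of coordinates with $S_{T,i}\le-\tau\sqrt T$, these coordinate events are independent with $\mathbb{E}[N]\ge 2k$, so a Chernoff/direct binomial bound gives $\Pr(N<k)\le\delta$ for a universal $\delta<1$ (with $\Pr(N<k)$ exponentially small once $k$ is large). On the event $\{N\ge k\}$ the bottom-$k$ average is at most $-\tau\sqrt T$; on the complement I would bound its magnitude by Cauchy--Schwarz against the second-moment estimate $\mathbb{E}[(\max_i|S_{T,i}|)^2]=O(T\log d)$ (union bound plus Hoeffding). Assembling these pieces — and using $r\le\log d$ to compare $\sqrt{T\log d}$ with $\sqrt{Tr}$ in the regime where $k$ is small — yields $\mathbb{E}[\reg_T(X^\star)]\ge\sqrt{\tfrac13 T}\,(\sqrt{2r}-C)$ after absorbing the Berry--Esseen error, the Mills-ratio shift, the ceiling in $k$, and the bad-event term into the constants $\tfrac13$ and $C$. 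Since this holds in expectation over the adversary's (and $\calA$'s) randomness, some realization of the losses attains at least this value against $\calA$, and the corresponding $X^\star$ is the claimed comparator.

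\textbf{Main obstacle.} The delicate step is the last paragraph: one must estimate the lower tail of a \emph{discrete} random walk at probability level $\Theta(k/d)$, which drops to $\Theta(1/d)$ as $r\to\log d$, so the Gaussian comparison must be quantitative and uniform over the whole range of $r$ (this is what forces $T=\Omega(d^2)$), and one must simultaneously prevent the ``not enough coordinates below threshold'' event from eroding more than a constant factor, which needs a mild case split on the size of $k$. The reduction to diagonal instances, the vanishing of the learner's expected loss, and the entropy bookkeeping for $X^\star$ are all routine.
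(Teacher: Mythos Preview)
Your approach is correct and would prove the theorem, but it differs from the paper's in both the adversary's distribution and the order-statistics argument.

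\textbf{What the paper does differently.} The paper has the adversary draw each diagonal entry from $\mathrm{Uniform}([-1,1])$ rather than Rademacher. The point of this choice is that the resulting IID sum (Irwin--Hall) is \emph{unimodal} in the continuous sense, which lets the paper invoke an elementary order-statistics inequality \cite[Section~4.5]{david2004order}: for a symmetric unimodal law with CDF $F$, the $j$-th order statistic among $d$ IID samples satisfies $\E[Z_{(j)}]\ge F^{-1}(j/(d+1))$ for $j\ge(d+1)/2$. Averaging over the top $k$ indices and applying Jensen to the convex $F^{-1}$, then Berry--Esseen and the Mills ratio, gives the bound directly---no threshold event, no Chernoff, no bad-event term, and hence no case split on $k$. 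The price is that $\mathrm{Uniform}([-1,1])$ has variance $1/3$, which is exactly where the $\sqrt{T/3}$ in the statement comes from.

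\textbf{What each approach buys.} Your threshold-plus-Chernoff route works with Rademacher losses ($\sigma^2=1$), so in principle it yields the stronger leading constant $\sqrt{T}$ rather than $\sqrt{T/3}$; your remark about ``absorbing into the constant $\tfrac13$'' is therefore a gratuitous weakening. On the other hand, you pay for this with the bad-event analysis and the case split on $k$: when $k=O(1)$ you only have $\Pr(N<k)\le\delta$ for a constant $\delta$, and the Cauchy--Schwarz term $\sqrt{\delta}\cdot O(\sqrt{T\log d})$ must be dominated by the main term $\tau\sqrt{T}\approx\sqrt{2Tr}$---which works because small $k$ forces $r$ close to $\log d$, but the constants need care. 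The paper's unimodality argument sidesteps all of this at the cost of the $\sqrt{3}$ factor, and the paper explicitly notes this tradeoff.
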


Comparing this to Theorem~\ref{thm:lea_main}, we conclude that our proposed matrix LEA algorithm has the order-optimal regret bound with respect to the comparator $X$. The gap to the lower bound is on the constants: the leading constants of our regret upper and lower bounds are $2\sqrt{2}$ and $\sqrt{\frac{2}{3}}$ respectively. 

We remark that Ref.~\cite[Theorem~1]{negrea2021minimax} has a better regret lower bound with the leading constant $\sqrt{2}$. This is also the optimal leading constant in the fixed-time setting, where the considered algorithm can possibly know the duration $T$ beforehand \cite{orabona2015optimal}. Our analysis follows a different, possibly simpler strategy assuming that the adversary samples actions from the continuous uniform distribution. Then, since the IID sum of such a distribution is unimodal, we directly invoke an elementary result to associate its order statistics with its tail probability \cite[Section~4.5]{david2004order}. Technically, the key intermediate result is the following. 

\begin{restatable}[Anti-concentration of unimodal order statistics]{lemma}{anti}\label{lemma:anti_concentration}
Let $\calD$ be a symmetric distribution on $\R$ with $\sigma\defeq \sqrt{\E_{X\sim\calD}[X^2]}>0$ and $\rho\defeq \E_{X\sim\calD}\spar{\abs{X}^3}<\infty$. In particular, we assume $\calD$ is unimodal: its cumulative distribution function (CDF) is convex on $\R_{<0}$ and concave on $\R_{>0}$. Let $\calD_n$ be the distribution of the sum of $n$ independent random variables, each with distribution $\calD$. 

Consider independent random variables $Z_1,\ldots,Z_d\sim\calD_n$, and for all $j\in[1:d]$, let $Z_{(j)}$ be their $j$-th order statistic, i.e., $Z_{(j)}$ is the $j$-th smallest element within $\{Z_1,\ldots,Z_d\}$. Then, for any positive integers $k$ satisfying $k\leq \frac{d+1}{\sqrt{2\pi}e^2}-1$ and $n$ satisfying $n\geq \frac{\rho^2}{\sigma^6}(d+1)^2$, we have
\begin{equation*}
\frac{1}{k}\sum_{j=d-k+1}^{d}\E\spar{Z_{(j)}}\geq \sigma\sqrt{n}\spar{\sqrt{2\log\frac{d}{\sqrt{2\pi}(k+1)}}-1}.
\end{equation*}
\end{restatable}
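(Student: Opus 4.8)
The core idea is to relate the expected top-$k$ average of the order statistics $Z_{(j)}$ to the upper tail probability of a single $Z_i\sim\calD_n$, and then to lower-bound that tail probability using a Berry--Esseen-type normal approximation together with the Gaussian tail bound. Concretely, I would proceed as follows.

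\textbf{Step 1 (order statistics to tail probability).} The plan is to invoke the elementary identity/inequality relating expected order statistics of i.i.d.\ samples to the quantile function, specifically the result in~\cite[Section~4.5]{david2004order} that for a unimodal (i.e.\ log-concave-ish, here: CDF convex then concave) distribution, $\E[Z_{(j)}]$ is controlled from below by the quantile at level roughly $j/(d+1)$. Averaging over $j=d-k+1,\ldots,d$, the top-$k$ average is at least the quantile at level $1-\tfrac{k+1}{d+1}$ (up to the unimodality correction, which is where convexity/concavity of the CDF enters to push the bound in the favorable direction). So it suffices to show that the $\bigl(1-\tfrac{k+1}{d+1}\bigr)$-quantile $q$ of $\calD_n$ satisfies $q \ge \sigma\sqrt{n}\bigl[\sqrt{2\log\tfrac{d}{\sqrt{2\pi}(k+1)}}-1\bigr]$.

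\textbf{Step 2 (normal approximation of the quantile).} The distribution $\calD_n$ has mean $0$, variance $n\sigma^2$, and third absolute moment $n\rho$ (after the standard i.i.d.\ bookkeeping $\E|{\sum X_i}|^3 \le $ something like $Cn^{3/2}\rho$ — I would use the cruder $\rho_n \le n\rho + 3 n(n-1)\sigma^2\cdot\E|X| \le Cn^{3/2}\rho$ bound, or just track the Berry--Esseen constant through). By the Berry--Esseen theorem, the CDF $F_n$ of $Z_i/(\sigma\sqrt n)$ is uniformly within $C_{\mathrm{BE}}\rho/(\sigma^3\sqrt n)$ of the standard normal CDF $\Phi$. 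The hypothesis $n \ge \tfrac{\rho^2}{\sigma^6}(d+1)^2$ makes this error at most $\tfrac{C_{\mathrm{BE}}}{d+1}$, which is smaller than (a constant times) the tail level $\tfrac{k+1}{d+1}$ we care about — this is exactly why the lower bound on $n$ is chosen this way. Hence $F_n$ of the normalized variable at a point $x$ is at least $\Phi(x) - \tfrac{C}{d+1}$, so the normalized $\bigl(1-\tfrac{k+1}{d+1}\bigr)$-quantile is at least the normalized $\bigl(1-\tfrac{k+1}{d+1}+\tfrac{C}{d+1}\bigr)$-quantile of the standard Gaussian, i.e.\ at least $\Phi^{-1}\bigl(1-\tfrac{C'(k+1)}{d+1}\bigr)$.

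\textbf{Step 3 (Gaussian quantile bound).} Finally use the standard lower bound on the Gaussian quantile: $\Phi^{-1}(1-p) \ge \sqrt{2\log\tfrac{1}{p}} - 1$ for $p$ small enough (which follows from the Gaussian tail estimate $1-\Phi(x)\ge \tfrac{1}{\sqrt{2\pi}}\cdot\tfrac{x}{1+x^2}e^{-x^2/2}$, or more simply $1-\Phi(x)\ge c e^{-x^2/2}$ for bounded $x$, and a direct check that $x=\sqrt{2\log(1/p)}-1$ gives $1-\Phi(x)\ge p$). With $p = \tfrac{C'(k+1)}{d+1}$, rescaling back by $\sigma\sqrt n$ yields $q \ge \sigma\sqrt{n}\bigl[\sqrt{2\log\tfrac{d+1}{C'(k+1)}} - 1\bigr]$, and the constraint $k \le \tfrac{d+1}{\sqrt{2\pi}e^2}-1$ ensures $p$ is small enough (and also that $\tfrac{d+1}{C'(k+1)}$ comparison with $\tfrac{d}{\sqrt{2\pi}(k+1)}$ works out so that the stated cleaner form $\sqrt{2\log\tfrac{d}{\sqrt{2\pi}(k+1)}}$ is dominated, after absorbing the $d$ vs $d+1$ and the constant $C'$ into the ``$-1$'' slack — or one keeps an honest absolute constant there).

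\textbf{Main obstacle.} The delicate part is Step~1: making the passage from $\tfrac1k\sum_{j=d-k+1}^d \E[Z_{(j)}]$ to a quantile rigorous with the right constants, because for general distributions $\E[Z_{(j)}]$ is \emph{not} simply the $j/(d+1)$-quantile — one needs the unimodality hypothesis (convex-then-concave CDF) to get an inequality in the correct direction via Jensen applied to the quantile transform of a Beta$(j,d-j+1)$-distributed level, exactly the estimate recorded in~\cite[Section~4.5]{david2004order}. Getting that lemma to apply to $\calD_n$ rather than $\calD$ also requires knowing $\calD_n$ is itself unimodal, which follows since the convolution of symmetric unimodal distributions is symmetric unimodal. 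Once Step~1 is set up cleanly, Steps~2--3 are routine Berry--Esseen plus Gaussian-tail bookkeeping, with the two numerical hypotheses on $k$ and $n$ being precisely what is needed to keep all the approximation errors subdominant.
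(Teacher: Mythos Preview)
Your proposal is correct and follows essentially the same route as the paper: unimodality of $\calD_n$ plus the David--Nagaraja bound $\E[Z_{(j)}]\ge F_n^{-1}(j/(d+1))$, then Jensen on the convex $F_n^{-1}$ to pass to a single quantile, then Berry--Esseen plus a Mills-ratio Gaussian tail lower bound. Two small clean-ups: the averaged quantile level is $1-\tfrac{k+1}{2(d+1)}$ (not $1-\tfrac{k+1}{d+1}$), and your Step~2 bookkeeping about third moments of the sum is unnecessary---the i.i.d.\ Berry--Esseen bound is already $C\rho/(\sigma^3\sqrt n)$ with $\rho$ the single-sample third absolute moment.
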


Lemma~\ref{lemma:anti_concentration} alone has the optimal leading constant $\sqrt{2}$, but the constraint of unimodality means that its conversion to the regret lower bound would suffer from $\sigma=\frac{1}{\sqrt{3}}$. 

We also remark that our regret upper bound is in the anytime setting. Here, the optimal leading constant of the regret bound remains an open problem in the literature \cite{harvey2024continuous}.

\subsection{Matching memory lower bound}\label{sec:mem_lower}

We prove the following matching memory lower bound for the matrix LEA problem. Note that recording each loss matrix from $G_1,\ldots,G_T$ or their sum already requires $\mathcal{O}(d^2)$ memory. For our result to be nontrivial, here we allow the learner to freely query the historical sequence of loss matrices $G_1,\ldots,G_{t-1}$ at each time step $t$, in order to bypass the recording memory overhead.

In our setting, the worst-case adversary can pick the loss matrix $G_t$ after observing the prediction $X_t$. In the case when $X_t$ is diagonal and the basis is known (computational basis), there exists a $\Omega(\min\{\eps^{-1}\log d,d\})$ memory lower bound for any vector LEA algorithm with $\mathcal{O}(\eps T)$ regret, due to~Ref.\cite{peng2023online}. Here we extend this result to the more general matrix LEA problem.

\begin{theorem}\label{thm:mem_lower}
For any $d,T\geq 1$, there always exists an adversarial strategy to pick the sequence of loss matrices $G_1,\ldots,G_T$ and a comparator $X$, such that any online algorithm with $\reg(X)=o(T)$ regret requires at least $\Omega(d^2)$ memory.
\end{theorem}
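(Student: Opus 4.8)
The plan is to lift the $\Omega(\min\{\eps^{-1}\log d, d\})$ memory lower bound of Ref.~\cite{peng2023online} for vector LEA to the matrix setting by choosing the adversary's loss matrices to be diagonal in a \emph{fixed, publicly known} basis, so that any matrix LEA algorithm achieving $o(T)$ regret must in particular solve the embedded vector LEA instance. The subtlety --- and why the bound improves from $\Omega(\log d)$-type to $\Omega(d^2)$ --- is that the matrix algorithm does not know in advance which basis the adversary will use, so the adversary can choose among many incompatible bases and force the algorithm to effectively store enough information to handle \emph{all} of them. Concretely, I would fix $\eps$ to a small absolute constant (e.g.\ $\eps = 1/4$), so that $o(T)$ regret on the vector instance is $o(\eps T)$ regret, and then show that a matrix learner with less than $\Omega(d^2)$ memory cannot simultaneously be competitive across a sufficiently rich family of rotated vector instances.

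The key steps, in order, are as follows. \textbf{(1) Vector subroutine.} Recall from Ref.~\cite{peng2023online} that there is an adversary for $d$-expert vector LEA (with losses in $[-1,1]$, equivalently diagonal Hermitian loss matrices with $\|G_t\|_\op \le 1$) forcing any learner with $o(T)$ regret to use $\Omega(\min\{\log d, d\}) = \Omega(d)$ memory when $\eps$ is a constant --- here we are in the regime $\eps^{-1}\log d \ge d$, which holds once $d$ is large relative to $1/\eps$; for small $d$ the $\Omega(d^2) = \Omega(1)$ claim is vacuous. \textbf{(2) Basis family.} Partition the $d$ coordinates into $\Theta(d)$ disjoint blocks of constant size (or pair them up into $\Theta(d)$ two-dimensional blocks), and within each block let the adversary choose either the computational basis or a rotated basis (say the Hadamard basis on that block). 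A prediction $X_t$ that is competitive must, restricted to each block, behave like a good vector-LEA predictor in whichever basis that block was assigned; since the blocks are orthogonal and the loss is additive across blocks, the algorithm must run $\Theta(d)$ "independent" copies of the vector subroutine, each costing $\Omega(d)$ bits when the hard instance of Step~(1) is invoked inside a block of size $\Theta(d)$ --- or, more carefully, one invokes the single-block hard instance of size $\Theta(d)$ and argues the cross terms force $\Omega(d)\times\Omega(d) = \Omega(d^2)$. \textbf{(3) Reduction / information argument.} Formalize the reduction: given a matrix LEA algorithm $\mathcal{A}$ with memory $M$ and regret $o(T)$ on the constructed adversary, build a vector LEA algorithm (or a collection of them) of memory $O(M)$ with the corresponding regret guarantee; invoking Step~(1) yields $M = \Omega(d^2)$. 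The diagonal projection step of Algorithm~\ref{alg:reduction} is not needed here --- we are attacking \emph{any} algorithm, not ours --- but it confirms the construction is faithful since on diagonal inputs our own algorithm reduces to the vector one.

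The main obstacle I expect is Step~(2)/(3): making precise why the algorithm cannot "share" memory across the different blocks or bases, i.e.\ ruling out a clever compressed representation that handles all $2^{\Theta(d)}$ basis choices with only $o(d^2)$ bits. The natural tool is a communication-complexity / fooling-set argument in the spirit of Ref.~\cite{peng2023online,srinivas2022memory}: show that the map from (adversary's basis choices, loss sequence) to the algorithm's internal state at some intermediate time must be injective on a set of size $2^{\Omega(d^2)}$, because otherwise two inputs with the same state could be distinguished by a later adversarial continuation that penalizes the algorithm differently on at least one block. One must be careful that the adversary is adaptive (picks $G_t$ after seeing $X_t$), which actually helps: the continuation can be tailored to the revealed state. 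The counting is $\Theta(d)$ blocks $\times$ $\Omega(d)$ bits of "which configuration is hard" per block $= \Omega(d^2)$, matching the $\Omega(d^2)$ memory and $\Omega(d^2)$ time of eigen-decomposition noted in Remark~\ref{remark:computation} and closing the loop with Theorem~\ref{thm:lea_main}.
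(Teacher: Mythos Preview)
Your proposal takes a fundamentally different and considerably more complicated route than the paper, and it has a real gap.

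The paper's proof is a one-paragraph packing argument that does not go through the vector LEA lower bound at all. It uses the fact (from Ref.~\cite{haah2016sample}) that $\Delta_{d\times d}$ contains a set $P=\{X_1,\ldots,X_N\}$ with $N=\exp(\Theta(d^2))$ and pairwise $\|\cdot\|_1$-separation at least $2^{-3}$. The adversary draws $X^*$ uniformly from $P$ and, upon seeing $X_t$, plays $G_t=\mathrm{sgn}(X_t-X^*)$, so the instantaneous regret is exactly $\|X_t-X^*\|_1$. An algorithm with $m$ bits of memory has only $2^m$ possible output matrices, and each such output can be $2^{-4}$-close in trace norm to at most one element of the packing; hence if $2^m\le 0.1\,|P|$ the algorithm is $2^{-4}$-far from $X^*$ at every step with probability $0.9$, forcing $\Omega(T)$ regret. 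That immediately gives $m=\Omega(d^2)$. No blocks, no rotated bases, no tensorization, no reduction to the vector bound.

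Your plan, by contrast, tries to amplify the vector bound of Ref.~\cite{peng2023online} via a basis-rotation / block argument, and this is where the gap lies. First, your regime claim in Step~(1) is inverted: with a fixed constant $\eps$, the quantity $\eps^{-1}\log d$ is $\Theta(\log d)$, so for large $d$ the minimum $\min\{\eps^{-1}\log d,\,d\}$ is $\Theta(\log d)$, not $\Theta(d)$. The vector bound therefore gives you at most $\Omega(\log d)$ bits per instance, not $\Omega(d)$. Second, even granting a per-block bound, the tensorization in Step~(2)/(3) is not established: with $\Theta(d)$ constant-size blocks you get $\Theta(d)\cdot O(1)$ bits, while with $O(1)$ blocks of size $\Theta(d)$ you get $O(1)\cdot\Omega(d)$ bits --- neither reaches $\Omega(d^2)$. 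The sentence ``argues the cross terms force $\Omega(d)\times\Omega(d)$'' is precisely the step that needs a proof, and you correctly flag it as the main obstacle; the paper sidesteps it entirely by working directly with the metric entropy of the spectraplex. I would recommend abandoning the reduction-to-vector approach and using the packing argument instead.
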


\begin{proof}[Proof of Theorem~\ref{thm:mem_lower}]
There exists a subset $P=\{X_1, \ldots, X_N\}$ of size $N = \exp(d^2)$ with $X_1,\ldots,X_N\in\calX$ and $\|X_i-X_j\|_1\ge 2^{-3}$ for any $i\neq j$ \cite{haah2016sample}. The loss sequence is constructed as follows. 
\begin{itemize}
    \item Choose $X^*$ uniformly at random from $\{X_1, \ldots, X_N\}$.
    \item If the algorithm commits $X_t$ at time $t$, the adversary constructs the loss function $\ell_{X^*, X_t}(X) = \inner{\sgn(X_t-X^*)}{ X}$.
\end{itemize}
The total regret of the algorithm is larger than \[
\sum_{t}\big(\ell_{X^*, X_t}(X_t) - \ell_{X^*, X_t}(X^*)\big)= \sum_t\braket{\sgn(X_t-X^*), X_t-X^*} = \sum_t \|X_t-X^*\|_1.
\]
Suppose that the algorithm uses $m$ bits of memory and hence has $2^m$ memory states in total. Denote the output matrix of the algorithm at memory state $s$ by $X_s$. For any output matrix $X_s$, there is at most one $X_i$ in the packing net $P$ such that $\|X_s-X_i\|_1< 2^{-4}$. For $m= \mathcal{O}(d^2)$ such that $2^S|P|\le 0.1$, the distance $\|X_s-X^*\|_1\ge 2^{-4}$ for all $s$ with probability at least $1-2^m|P| \ge 0.9$. Then the regret is larger than $2^{-4}T$, which contradicts the sublinear regret assumption. 
\end{proof}

We further generalize our memory overhead bounds in the case when we know $X$ is chosen from a subset $\mathcal{X}'\subseteq\mathcal{X}$ in Appendix~\ref{app:imp_memory_bound}. We show a lower bound given by the packing number of $\mathcal{X}'$ and provide an upper bound characterized by the covering number of $\mathcal{X}'$. When $\mathcal{X}'=\mathcal{X}=\Delta_{d\times d}$, this result reduce to Theorem~\ref{thm:mem_lower} as the packing number for $\Delta_{d\times d}$ is $2^{\Theta(d^2)}$~\cite{haah2016sample}.

\section{Applications in Online Learning of Quantum States}\label{sec:quantum}

In this section, we apply our algorithms for matrix LEA (Theorem~\ref{thm:lea_main}) and its extension to matrix OCO (Corollary~\ref{coro:nonlinear_reg}) to learning quantum states in the online setting (see Definition~\ref{def:online_quantum} for the formal definition).

\vspace{0.5em}\noindent\textbf{Matrix OCO. }To begin with, we note that our proposed algorithm can be further generalized to matrix Online Convex Optimization (OCO) \cite[Section~2.3]{orabona2025modern}. In this setting, after receiving a loss matrix $G_t$ from the adversary, the learner incurs a loss $\ell_t(G_t,X_t)$ which is nonlinear but \emph{convex} with respect to $X_t$. Here, we assume that the derivative of the loss function (with respect to the second argument) is bounded, i.e., $\norm{\nabla \ell_t(G_t,X_t)}_{\op}\leq L$ for some known $L$. The overall performance is again measured by the algorithm's cumulative excess loss against a comparator $X$.
\begin{align*}
\reg_T(X)\defeq \sum_{t=1}^T\ell_t(G_t,X_t)-\sum_{t=1}^T\ell_t(G_t,X).
\end{align*}
We have the following result as a standard generalization of Theorem~\ref{thm:lea_main}.
\begin{corollary}\label{coro:nonlinear_reg}
Let $d\geq 1$ and $L\geq 0$. Consider applying the algorithm from Theorem~\ref{thm:lea_main} to the above matrix OCO problem, with $l\leftarrow L$ and the $t$-th received loss matrix being $\nabla \ell_t(G_t,X_t)$. Then, it guarantees for all $T\geq 1$ and $X\in\Delta_{d\times d}$,
\begin{equation*}
\reg_T(X)=\mathcal{O}\rpar{L\sqrt{T\cdot S(X||d^{-1}I_d)}}.
\end{equation*}
\end{corollary}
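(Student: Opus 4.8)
The plan is to invoke the standard linearization reduction from online convex optimization to online linear optimization, which transfers the regret guarantee of Theorem~\ref{thm:lea_main} to the convex (but nonlinear) losses $\ell_t$. The only point requiring attention is that the reduction takes place over $\mathbb{H}_{d\times d}$ equipped with the Frobenius inner product; since this is a finite-dimensional real inner product space, the first-order characterization of convexity carries over verbatim, and no genuinely new ingredient is needed.

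Concretely, I would first note that for each $t$ the map $X\mapsto \ell_t(G_t,X)$ is convex and differentiable on $\mathbb{H}_{d\times d}$, and its gradient at any point is a Hermitian matrix (the gradient of a real-valued function on $\mathbb{H}_{d\times d}$, identified with $\mathbb{H}_{d\times d}$ via $\inner{\cdot}{\cdot}$). Writing $\tilde G_t\defeq \nabla\ell_t(G_t,X_t)$, the hypothesis $\norm{\nabla\ell_t(G_t,X_t)}_\op\le L$ says exactly that $\tilde G_t$ is an admissible loss matrix for matrix LEA with the norm bound $l\leftarrow L$. Second, the first-order convexity inequality gives, for every comparator $X\in\Delta_{d\times d}$,
\begin{equation*}
\ell_t(G_t,X_t)-\ell_t(G_t,X)\le \inner{\tilde G_t}{X_t-X}.
\end{equation*}
Summing over $t\in[T]$ yields
\begin{equation*}
\reg_T(X)=\sum_{t=1}^T\spar{\ell_t(G_t,X_t)-\ell_t(G_t,X)}\le \sum_{t=1}^T\inner{\tilde G_t}{X_t-X},
\end{equation*}
so the OCO regret is dominated by the matrix-LEA regret of the very same algorithm run on the linear losses $\inner{\tilde G_t}{\cdot}$. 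Since that algorithm, fed with $\tilde G_t$ and instantiated with $l\leftarrow L$, is precisely the one covered by Theorem~\ref{thm:lea_main}, its regret on these linear losses is at most $L\sqrt{T}\spar{\sqrt{8S(X||d^{-1}I_d)}+6+2\sqrt{2}}=\mathcal{O}\rpar{L\sqrt{T\cdot S(X||d^{-1}I_d)}}$, which is the claimed bound.

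The only thing to be careful about — hardly an obstacle — is soundness of the reduction in the matrix setting: $\tilde G_t=\nabla\ell_t(G_t,X_t)$ must be Hermitian so it can legitimately be passed to Algorithm~\ref{alg:reduction}, and the prediction $X_t$ produced by that algorithm must lie in $\Delta_{d\times d}$ (it does, by construction) so that the convexity inequality applies. Note also that $\tilde G_t$ depends on $X_t$, but this is permitted because Theorem~\ref{thm:lea_main} holds against adaptive adversaries (the adversary in Definition~\ref{def:matrix_lea} chooses $G_t$ after seeing $X_t$). If the $\ell_t$ were convex but not differentiable, the same argument goes through with any Hermitian subgradient in place of $\nabla\ell_t(G_t,X_t)$, with the stated operator-norm bound playing the role of the Lipschitz constant.
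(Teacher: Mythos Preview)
Your proposal is correct and follows essentially the same approach as the paper: the paper's proof is simply the first-order convexity inequality $\ell_t(G_t,X_t)-\ell_t(G_t,X)\le \inner{\nabla\ell_t(G_t,X_t)}{X_t-X}$ summed over $t$ and bounded via Theorem~\ref{thm:lea_main}. Your additional remarks on Hermitian gradients, adaptive adversaries, and subgradients are valid sanity checks but go beyond what the paper states.
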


The proof is due to the convexity of the losses: by Jensen's inequality,
\begin{equation*}
\ell_t(G_t,X_t)-\ell_t(G_t,X)\leq \inner{\nabla \ell_t(G_t,X_t)}{X_t-X},
\end{equation*}
and the summation of the RHS is bounded by Theorem~\ref{thm:lea_main}. 

\subsection{Online learning of noisy and random quantum states}\label{sec:quantum_noisy_random}

\vspace{0.5em}\noindent\textbf{Noisy states. } First, we consider quantum states corrupted by noises on near-term quantum devices~\cite{preskill2018quantum}. Although noises erase the useful information, which is harmful for computing, they also smooth the spectrum of the target quantum states, which simplifies the learning task and can be exploited by our algorithm (but not by MMWU). Specifically, we consider the local depolarization noise, which is a typical noise model for analyzing near-term quantum computation (see, e.g. Ref.~\cite{aharonov2023polynomial}). 

Given a quantum state $\rho$, the noise corruption can be described as a quantum process (also known as a completely positive trace-preserving map) $\mathcal{N}\colon\calH_d\to\calH_d$. The most standard noise is the depolarization noise. Given a quantum state $\rho\in\Delta_{d\times d}$, the noise acts as
\begin{align*}
\mathcal{D}_{d,\gamma}(\rho)=(1-\gamma)\rho+\gamma\frac{I_d}{d},
\end{align*}
where $\gamma$ is known as the noise rate. The local depolarization noise model $\mathcal{D}_{2,\gamma}^{\otimes n}$ is a direct product of ($2$-dimensional) single-qubit depolarization noise on each of the $n=\log d$ qubits. 

Here, we consider noisy quantum circuits of depth $D$ acting on $n=\log d$ qubits. In the quantum channel picture, the noisy circuit can be represented as
\begin{align*}
\Phi_{\mathcal{C},\gamma}=\mathcal{D}_{2,\gamma}^{\otimes n}\circ\mathcal{C}^{(D)}\circ \mathcal{D}_{2,\gamma}^{\otimes n}\circ\mathcal{C}^{(D-1)}\circ \mathcal{D}_{2,\gamma}^{\otimes n}\circ\cdots\circ\mathcal{D}_{2,\gamma}^{\otimes n}\circ \mathcal{C}^{(1)},
\end{align*}
where $\mathcal{C}^{(D)}\cdots\mathcal{C}^{(1)}$ are layers of unitary quantum gates. We refer to the states of the form $\rho=\Phi_{\mathcal{C},\gamma}(\ket{0}\bra{0})$ when we say $\rho$ is prepared by a noisy quantum circuit of depth $D$ with local depolarization noise at each layer with noise rate $\gamma$. 

\begin{corollary}\label{coro:noisy_state}
Let $d,T\geq 1$ and $l>0$. Assume that the underlying unknown quantum state $\rho$ is corrupted by global depolarization noise of rate $\gamma\in[0,1]$, our algorithm can learn $\rho$ with regret bound $\mathcal{O}(l\sqrt{T\log d(1-\gamma)})$. Moreover, if the $\rho$ is prepared by a noisy quantum circuit of depth $D$ with local depolarization noise at each layer, the regret bound for our algorithm is $\mathcal{O}(l(1-\gamma)^D\sqrt{T\log d})$.
\end{corollary}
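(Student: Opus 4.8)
The plan is to reduce both statements to a bound on the quantum relative entropy $S(\rho\|d^{-1}I_d)$ of the (noisy) target state, and then invoke Theorem~\ref{thm:lea_main} (or its OCO extension Corollary~\ref{coro:nonlinear_reg}), which gives $\reg_T(\rho)=\calO\rpar{l\sqrt{T\cdot S(\rho\|d^{-1}I_d)}}$ for the $L_1$ loss. Since $S(\rho\|d^{-1}I_d)=\log d+\inner{\rho}{\log\rho}$, the more mixed the target state is, the smaller this quantity; the whole content of the corollary is that both depolarization mechanisms provably mix $\rho$, and by how much.

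For the global depolarization part, write $\rho'=\mathcal{D}_{d,\gamma}(\rho)=(1-\gamma)\rho+\gamma d^{-1}I_d$. I would bound $S(\rho'\|d^{-1}I_d)$ using joint convexity of the quantum relative entropy, applied to the two branches $(\rho,d^{-1}I_d)$ and $(d^{-1}I_d,d^{-1}I_d)$ with weights $(1-\gamma,\gamma)$ — equivalently, by concavity of the von Neumann entropy — to get $S(\rho'\|d^{-1}I_d)\le(1-\gamma)S(\rho\|d^{-1}I_d)\le(1-\gamma)\log d$. Substituting into Theorem~\ref{thm:lea_main} yields the claimed $\calO\rpar{l\sqrt{T\log d(1-\gamma)}}$ regret. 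This part is routine.

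For the noisy-circuit part, $\rho=\Phi_{\mathcal{C},\gamma}(\ket{0}\bra{0})$, where $\Phi_{\mathcal{C},\gamma}$ alternates $D$ unitary layers with $D$ local-depolarization layers $\mathcal{D}_{2,\gamma}^{\otimes n}$. I would track $S(\,\cdot\,\|d^{-1}I_d)$ along the circuit, starting from $S(\ket{0}\bra{0}\|d^{-1}I_d)=\log d$: each unitary layer leaves $S(\,\cdot\,\|d^{-1}I_d)$ invariant (since $d^{-1}I_d$ is unchanged under unitary conjugation and relative entropy is invariant under reversible channels), and each local-depolarization layer contracts it by a factor $(1-\gamma)^2$; iterating over the $D$ noise layers gives $S(\rho\|d^{-1}I_d)\le(1-\gamma)^{2D}\log d$, and Theorem~\ref{thm:lea_main} then delivers the $\calO\rpar{l(1-\gamma)^D\sqrt{T\log d}}$ bound. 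The main obstacle is establishing the per-layer contraction $S\rpar{\mathcal{D}_{2,\gamma}^{\otimes n}(\sigma)\,\|\,d^{-1}I_d}\le(1-\gamma)^2\,S(\sigma\|d^{-1}I_d)$ for \emph{all} $\sigma$, i.e.\ that the relative-entropy strong-data-processing constant of the local depolarizing channel with respect to its maximally mixed fixed point equals the single-qubit value $(1-\gamma)^2$. The obvious attempts fail: joint convexity followed by the data-processing inequality under partial trace, or concavity of entropy followed by subadditivity, are exactly self-cancelling and only give the trivial contraction $1$; a $\chi^2$-divergence / Pauli-weight computation does cleanly give a $(1-\gamma)^2$ contraction of $\norm{\sigma-d^{-1}I_d}_F^2$, but converting that back to relative entropy costs a dimension factor $d$ rather than $\log d$, which is too lossy for the stated bound. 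What is really needed is the (nontrivial) tensorization of the relative-entropy contraction coefficient of the depolarizing channel, which is a known result in quantum information theory and which I would cite as the key external input; given it, the remaining argument is just a short induction on $D$ interleaved with the unitary-invariance step and one invocation of Theorem~\ref{thm:lea_main}.
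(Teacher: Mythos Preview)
Your proposal is correct and follows essentially the same route as the paper. Both parts use convexity of the quantum relative entropy for the global-depolarization bound and then invoke Theorem~\ref{thm:lea_main}; for the noisy-circuit part, the paper directly cites the strong data-processing inequality for local depolarization (exactly the tensorization result you identify as the key external input) to obtain $S(\Phi_{\mathcal{C},\gamma}(\ket{0}\bra{0})\|d^{-1}I_d)\le(1-\gamma)^{2D}\log d$, which is equivalent to your layer-by-layer induction using unitary invariance plus the per-layer contraction.
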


\begin{proof}[Proof of Corollary~\ref{coro:noisy_state}]
If the underlying state is a quantum state $\rho$ corrupted by a global depolarization noise of rate $\gamma$:
\begin{align*}
\rho_{\gamma}=\mathcal{D}_{d,\gamma}(\rho)=(1-\gamma)\rho+\gamma\frac{I_d}{d}.
\end{align*}
By the convexity of quantum relative entropy, we immediately have the improved regret bound $\mathcal{O}\rpar{l\sqrt{(1-\gamma)T\log d}}$ using Theorem~\ref{thm:lea_main}.

We then consider the state prepared by a noisy quantum circuit of depth $D$ with local depolarization noise. According to the strong data-processing inequality~\cite{hirche2022contraction,quek2024exponentially}, we have
\begin{align*}
S(\Phi_{\mathcal{C},\gamma}(\ket{0}\bra{0})||d^{-1}I_d)\leq(1-\gamma)^{2D}S(\ket{0}\bra{0}||d^{-1}I_d)=(1-\gamma)^{2D}\log d.
\end{align*}
We then have the improved regret bound $\mathcal{O}(l(1-\gamma)^D\sqrt{T\log d})$ using Theorem~\ref{thm:lea_main}.
\end{proof}

\vspace{0.5em}\noindent\textbf{Random states. }We then focus on online learning of random states. We consider two cases: Haar random states and random product states. Random states are essential resources in pseudorandomness (quantum cryptography)~\cite{ji2018pseudorandom}, where Haar random states or states chosen from ensembles that are close to Haar random are used to encrypt messages with security guarantees arising from the (pseudo)randomness. Recent advances in demonstrating quantum advantage in random sampling tasks~\cite{arute2019quantum,zhong2020quantum} also assume the underlying quantum state is random and satisfies the so-called anti-concentration property. For quantum learning, the widely used randomized quantum benchmarking~\cite{elben2023randomized} protocols also apply (approximate) Haar random rotations or random product rotations before making measurements. Here, we show that compared to MMWU, our algorithm can obtain a better regret bound \emph{in the average case} for the online learning of random quantum states.

Formally, we show the following corollary.

\begin{corollary}\label{coro:random_quantum}
Let $d,T\geq 1$ and $l>0$. We consider online learning of random quantum data.
\begin{itemize}
    \item Assume that the underlying unknown quantum state $\rho$ is a $d$-dimensional subsystem of a $d'$-dimensional Haar random quantum state, our algorithm can achieve regret bound $\mathcal{O}(l\sqrt{Td/d'})$ with high probability if $d\ll d'$.
    \item Assume that the underlying unknown quantum state $\rho$ is chosen from a product distribution $\rho\sim\mathcal{S}^{\otimes n}$ with the Pauli second moment matrix $S$ of bounded operator norm $\norm{S}_\op\leq 1-\eta$, our algorithm can achieve regret bound $\mathcal{O}(l\sqrt{T(1-\eta)\log d})$ with high probability.
\end{itemize}
\end{corollary}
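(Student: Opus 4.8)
Both parts use the same reduction. The $L_1$ loss $\ell_t(O_t,\rho_t)=\abs{\tr(O_t\rho_t)-\tr(O_t\rho)}$ is convex in $\rho_t$ with a subgradient ($\pm O_t$) of operator norm at most $l$, so Corollary~\ref{coro:nonlinear_reg} (equivalently Theorem~\ref{thm:lea_main} applied to the linearized loss) gives $\reg_T(\rho)=\mathcal{O}\bigl(l\sqrt{T\cdot S(\rho||d^{-1}I_d)}\bigr)$ for the realized target $\rho$. Since $S(\rho||d^{-1}I_d)=\log d-S(\rho)$, the whole task reduces to showing that, with high probability over the draw of $\rho$, the von Neumann entropy $S(\rho)$ is close to its maximum value $\log d$. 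The randomness lives only in $\rho$, so once we secure a high-probability bound $S(\rho||d^{-1}I_d)\le B$, the regret bound $\mathcal{O}(l\sqrt{TB})$ holds on the same event.

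\emph{Haar random subsystem.} By the Page formula recalled in Section~\ref{sec:prelim_quantum}, a $d$-dimensional subsystem $\rho$ of a $d'$-dimensional Haar random pure state with $d\ll d'$ satisfies $\E_\psi[S(\rho)]=\log d-\mathcal{O}(d/d')$, hence $\E_\psi[S(\rho||d^{-1}I_d)]=\mathcal{O}(d/d')$. I would then upgrade this expectation estimate to a high-probability one via concentration of measure on the unit sphere of $\C^{d'}$, together with the known concentration of the entanglement spectrum of Haar random states (see \cite{mele2024introduction}), obtaining $S(\rho||d^{-1}I_d)=\mathcal{O}(d/d')$ except on an event of probability $\exp(-\Omega(\mathrm{poly}(d')))$; substituting into the displayed regret bound yields $\reg_T(\rho)=\mathcal{O}(l\sqrt{Td/d'})$. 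The main obstacle is exactly this concentration step, since $\rho\mapsto S(\rho)$ is not globally Lipschitz near rank-deficient states. I would circumvent it either by conditioning on the high-probability event that the least eigenvalue of $\rho$ is bounded away from $0$ (which holds for random subsystems in the regime $d\ll d'$) and using Lipschitzness of the entropy there, or --- more cleanly --- by bounding $S(\rho||d^{-1}I_d)\le\log\bigl(d\,\tr(\rho^2)\bigr)$ via $S(\rho)\ge-\log\tr(\rho^2)$ and concentrating the degree-four polynomial $\tr(\rho^2)$ directly by L\'{e}vy's lemma, its mean being a routine second-moment computation.

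\emph{Random product states.} Here $\rho=\bigotimes_{i=1}^n\sigma_i$ with $\sigma_i\sim\mathcal{S}$ i.i.d.\ and $d=2^n$. Additivity of the von Neumann entropy over tensor factors gives $S(\rho)=\sum_{i=1}^n S(\sigma_i)$, so $S(\rho||d^{-1}I_d)=\sum_{i=1}^n S(\sigma_i||I_2/2)$. Writing $\sigma_i=\tfrac12(I+r_i\cdot\vec\sigma)$, the $i$-th term equals the classical KL divergence between the eigenvalue distribution $\bigl(\tfrac{1+\norm{r_i}_2}{2},\tfrac{1-\norm{r_i}_2}{2}\bigr)$ and the uniform one, which an elementary single-qubit calculation bounds by $\norm{r_i}_2^2=\sum_{j=1}^3\tr(P_j\sigma_i)^2$. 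Hence $S(\rho||d^{-1}I_d)\le\sum_{i=1}^n\norm{r_i}_2^2$, a sum of $n$ i.i.d.\ random variables taking values in $[0,1]$ with common mean $\E_{\sigma\sim\mathcal{S}}\norm{r}_2^2=\tr(S)=\mathcal{O}(1-\eta)$ (using $\tr(S)\le 3\norm{S}_\op$). A Bernstein bound (each summand is bounded and has variance at most its mean, hence $\mathcal{O}(1-\eta)$) then gives $S(\rho||d^{-1}I_d)=\mathcal{O}\bigl((1-\eta)\log d\bigr)$ with high probability, so that $\reg_T(\rho)=\mathcal{O}(l\sqrt{T(1-\eta)\log d})$. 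This part is routine; one only needs to observe that when $(1-\eta)\log d$ falls below the Bernstein fluctuation scale, the bound degrades to $\mathcal{O}(l\sqrt{T})$ up to logarithms and holds trivially.
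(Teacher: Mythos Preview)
Your plan is correct and follows the same skeleton as the paper's proof: apply the instance-optimal bound $\mathcal{O}\bigl(l\sqrt{T\cdot S(\rho\|d^{-1}I_d)}\bigr)$, use $S(\rho\|d^{-1}I_d)=\log d-S(\rho)$, and then control the relative entropy for each ensemble (Page formula in the Haar case; additivity plus the single-qubit estimate $S(\sigma\|I_2/2)\lesssim\|r\|_2^2$ together with $\E\|r\|_2^2=\tr(S)\le 3(1-\eta)$ in the product case). The only substantive difference is in the passage from expectation to high probability. For the Haar subsystem, the paper does not touch L\'evy's lemma or purity concentration: since $S(\rho\|d^{-1}I_d)\ge 0$ with mean $\mathcal{O}(d/d')$, a bare Markov inequality already gives $S(\rho\|d^{-1}I_d)=\mathcal{O}(d/d')$ with high probability, sidestepping the Lipschitz obstacle you correctly flag. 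For the product case, the paper pushes the expectation inside the square root via concavity and records only the average-case bound $\E[\reg_T(\rho)]=\mathcal{O}\bigl(l\sqrt{T(1-\eta)\log d}\bigr)$; your Bernstein argument is a genuine addition and actually delivers the high-probability statement as claimed, with the caveat you note when $(1-\eta)\log d$ is below the fluctuation scale. In short, your route is more careful on concentration, while the paper's is simpler (Markov in the first case, expectation-only in the second).
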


\begin{proof}[Proof of Corollary~\ref{coro:random_quantum}]
We consider the first case. As the state $\rho$ is a $d$-dimensional subsystem of a $d'$-dimensional Haar random quantum state, the average von Neumann entropy of $\rho$ is given by $\log d-\mathcal{O}(d/d')$ when $d\ll d'$ according to the Page formula~\cite{page1993average} (see Section~\ref{sec:prelim_quantum}). As $S(\rho||d^{-1}I_d)=\log d-S(\rho)$ where $S(\rho)$ is the von Neumann entropy, the regret of our algorithm in Corollary~\ref{coro:nonlinear_reg} is $\mathcal{O}(l\sqrt{Td/d'})$ with high probability according to the Markov inequality.

We then consider the second case. Recall from Section~\ref{sec:prelim_quantum} that given a random single-qubit state $\sigma\sim\mathcal{S}$, the Pauli second moment matrix $S$ is defined as
\begin{align*}
S_{i,j}=\mathbb{E}_{\sigma\sim\mathcal{S}}[\tr(P_i\sigma)\tr(P_j\sigma)],\quad i,j=1,2,3.
\end{align*}
Here, we consider a random product state $\rho\sim\mathcal{S}^{\otimes n}$. We can also write a random single-qubit state as $\sigma=\frac12(I+r\cdot\vec{\sigma})\sim\mathcal{S}$. We then have $S=\mathbb{E}_{r\sim\mathcal{S}}[rr^\top]$. As we assume $\norm{S}_{\op}\leq 1-\eta$, we have $\mathbb{E}_{r\sim\mathcal{S}}[\abs{r}^2]\leq 3(1-\eta)$. Note that we also have the quantum relative entropy of $\sigma$ written as
\begin{align*}
S(\sigma||I_2/2)=\ln 2-S(\sigma)\leq\frac{\abs{r}^2}{2\ln 2},
\end{align*}
we have
\begin{align*}
\mathbb{E}_{\sigma\sim\mathcal{S}}[S(\sigma||I_2/2)]\leq\frac{3(1-\eta)}{2\ln 2}.
\end{align*}

Choosing a $n$-qubit state $\rho=\sigma_1\otimes\cdots\otimes \sigma_n\sim\mathcal{S}^{\otimes n}$, we have $S(\rho||I_d/d)=\sum_{i=1}^nS(\sigma_i||I_2/2)$. We thus have the average-case regret bound as
\begin{align*}
\mathbb{E}_{\rho\sim\mathcal{S}^{\otimes n}}[\reg_T(\rho)]\leq O\rpar{l\sqrt{T\mathbb{E}_{\rho\sim\mathcal{S}^{\otimes n}}S(\rho||d^{-1}I_d)}}\leq O\rpar{\sqrt{T(1-\eta)n}},
\end{align*}
which follows from the concavity of the square root function.
\end{proof}

\subsection{Online learning of Gibbs states}\label{sec:quantum_gibbs}

Here, we consider online learning of Gibbs states of the form $\rho_\beta=e^{-\beta H}/\tr(e^{-\beta H})$ of a Hamiltonian $H$ at inverse temperature $\beta$. The Gibbs state tells us what the equilibrium state of the quantum system will be if it interacts with the environment at a particular temperature and reaches thermal equilibrium. It is widely considered in quantum Gibbs sampling~\cite{chowdhury2017quantum,kastoryano2016quantum,chen2025quantum}, which is the backbone of many quantum algorithms such as semidefinite programming solvers~\cite{brandao2017quantum,van2017quantum,brandao2019quantum,brandao2022faster}, quantum annealing~\cite{montanaro2015quantum}, quantum machine learning~\cite{wiebe2014quantum}, and quantum simulations at finite temperature~\cite{motta2020determining}.

We analyze the worst-case and average-case performance guarantees of our algorithm. Before providing the results, we need some results from the random matrix theory~\cite{tropp2015introduction,tropp2018second,bandeira2023matrix,brailovskaya2024universality,anderson2010introduction} to define random Hamiltonians. Here, we consider Wigner's Gaussian unitary ensemble (GUE)~\cite{wigner1958distribution}. A $d\times d$ GUE is a family of complex Hermitian random matrices specified by
\begin{align*}
H_{jj}&=\frac{g_{jj}}{\sqrt{d}},\\
H_{jk}&=\frac{g_{jk}+ig_{jk}'}{\sqrt{2d}},\ \text{for }k>j,
\end{align*}
where $g_{jj},g_{jk},g_{jk}'$ are independent standard Gaussian $\mathcal{N}(0,1)$.  The definition we consider here follows from~\cite{chen2024sparse} and has an additional $1/\sqrt{d}$ normalization factor from the standard definition of the Gaussian unitary ensemble. In the following, we denote a random Hamiltonian chosen from GUE as $\hgue$. We can also write a $H\sim\hgue$ in the Pauli basis as:
\begin{align*}
H=\sum_{P\in\cP_n}\frac{g_P}{d}P
\end{align*}
with each $g_P$ an independent standard Gaussian. We will also need the following fact.
\begin{lemma}[{\cite[Fact 8.13]{chen2025information}}]\label{fact:norm_GUE}
With probability at least $1-\exp(\Theta(n))$ for random Hamiltonian $\hgue$ from GUE, we have $\norm{\hgue}_\op\leq 3$.
\end{lemma}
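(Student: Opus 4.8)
This is a standard concentration estimate for the operator norm of a GUE matrix under the normalization fixed above (entries of variance $\Theta(1/d)$, so that the spectrum asymptotically occupies $[-2,2]$); I only sketch the argument and refer to \cite{chen2025information,anderson2010introduction,tropp2015introduction} for the details. The plan is to combine a bound on the expectation $\E\norm{\hgue}_\op$ with Gaussian concentration of measure. For the expectation, I would use the moment (trace) method: since $\norm{\hgue}_\op^{2k}\le\tr(\hgue^{2k})$, it suffices to control $\E\,\tr(\hgue^{2k})$, and the genus expansion for GUE gives $\E\,\tr(\hgue^{2k})=d\cdot C_k+\rpar{\text{terms of lower order in }d}$, where $C_k\le 4^k$ is the $k$-th Catalan number. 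Taking $(2k)$-th roots and letting $k\to\infty$ with $k/\log d\to\infty$ then yields $\E\norm{\hgue}_\op\le 2+o(1)$, hence $\E\norm{\hgue}_\op\le 5/2$ for all sufficiently large $n$.

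For the concentration step, I would express $\hgue$ through its $d^2$ i.i.d.\ $\mathcal{N}(0,1)$ entries $g=(g_{jj},g_{jk},g_{jk}')$; since $\norm{\hgue}_\op\le\norm{\hgue}_F=d^{-1/2}\norm{g}_2$, the map $g\mapsto\norm{\hgue}_\op$ is $d^{-1/2}$-Lipschitz in the Euclidean norm, so the Gaussian concentration inequality gives $\Pr[\norm{\hgue}_\op\ge\E\norm{\hgue}_\op+t]\le\exp(-dt^2/2)$ for every $t>0$. Taking $t=1/2$ together with $\E\norm{\hgue}_\op\le 5/2$ yields $\Pr[\norm{\hgue}_\op\ge 3]\le\exp(-d/8)=\exp(-2^n/8)\le\exp(-\Theta(n))$ for large $n$, with the finitely many small values of $n$ absorbed into the constant. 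Alternatively, one can bypass the concentration step and apply Markov directly to the $2k$-th moment at the scale $k=n$: with $\E\,\tr(\hgue^{2k})=\calO(d\cdot 4^k)$ (valid for $k$ up to $\mathrm{polylog}(d)$), one obtains $\Pr[\norm{\hgue}_\op\ge 3]\le 3^{-2k}\E\,\tr(\hgue^{2k})=\calO\rpar{d\,(4/9)^{k}}$, and $k=n$ makes this $\calO\rpar{(8/9)^n}=\exp(-\Theta(n))$, matching the claimed failure probability exactly.

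The only genuinely nontrivial ingredient, in either route, is the GUE moment bound $\E\,\tr(\hgue^{2k})\le\rpar{2+o(1)}^{2k}d$ for the relevant range of $k$ --- equivalently, the fact that the non-planar (higher-genus) Wick contractions are suppressed by powers of $d^{-2}$ and do not spoil the estimate; this is the step I expect to require the most care, and it is classical (the standard combinatorial count of closed walks / genus expansion for Wigner and GUE matrices). Everything else --- the bound $\norm{\cdot}_\op\le\norm{\cdot}_F$, Gaussian Lipschitz concentration, and Markov's inequality --- is routine.
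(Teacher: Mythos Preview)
The paper does not supply its own proof of this lemma; it is simply quoted as \cite[Fact~8.13]{chen2025information} and used as a black box. Your sketch is a correct and standard way to establish it: bounding $\E\norm{\hgue}_\op$ via the moment/genus expansion and then applying Gaussian Lipschitz concentration (or, alternatively, Markov on the $2k$-th moment) are exactly the textbook routes for Wigner/GUE operator-norm tails. One minor remark: your Gaussian-concentration route actually gives the much stronger failure probability $\exp(-\Omega(d))=\exp(-\Omega(2^n))$, so your second route via Markov at $k=n$ is the one that lands precisely on the $\exp(-\Theta(n))$ rate stated in the lemma; either way the claimed bound follows.
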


We also consider the random sparse Pauli string (RSPS) Hamiltonian ensemble $H_{\text{RSPS}}$, where each Hamiltonian is an independent sum of $J$ random Pauli strings with random sign coefficients~\cite{chen2024sparse}:
\begin{align*}
H=\sum_{a=1}^J\frac{r_a}{\sqrt{J}}P_a,
\end{align*}
where $P_a$ is IID chosen from $\cP_n$ and $r_a$ is chosen IID uniformly in $\{+1,-1\}$. The properties of random sparse Pauli string Hamiltonians are similar to $\hgue$. Specifically, we have the following fact
\begin{lemma}[{\cite[Theorem III.1]{chen2024sparse}}]\label{fact:norm_rsps}
When $J\geq \mathcal{O}(n^3/\epsilon^4)$, with probability at least $1-\exp(\Theta(n))$ for random Hamiltonian $H\sim H_{\mathrm{RSPS}}$, we have $\norm{H}_\op\leq 3$.
\end{lemma}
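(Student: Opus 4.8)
The plan is to prove Lemma~\ref{fact:norm_rsps} by the trace moment method, pinning down the spectral edge of $H=\frac{1}{\sqrt{J}}\sum_{a=1}^J r_aP_a$ directly. Write $d=2^n$, and note first that $\mathbb{E}[H^2]=\frac{1}{J}\sum_{a=1}^J\mathbb{E}[r_a^2]\,P_a^2=I$ since $r_a^2=1$ and $P_a^2=I$, so the ensemble is normalized for its largest eigenvalue to concentrate near $2$. Since $H$ is Hermitian, for every positive integer $k$ we have $\norm{H}_{\op}^{2k}\le\tr[H^{2k}]$, hence by Markov's inequality $\Pr[\norm{H}_{\op}\ge 2+\epsilon]\le(2+\epsilon)^{-2k}\,\mathbb{E}\tr[H^{2k}]$. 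The entire argument then reduces to the moment bound $\mathbb{E}\tr[H^{2k}]\le 2d\cdot 4^k$, valid whenever $J$ exceeds a fixed polynomial in $n$ and $\epsilon^{-1}$; taking $k=\Theta(n/\epsilon)$ and assuming $\epsilon\le 1$ turns the Markov bound into $2\cdot 2^n(1+\epsilon/2)^{-2k}=\exp(-\Theta(n))$, and since $2+\epsilon\le 3$ this is exactly what the lemma asserts.

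Second, I would expand $\mathbb{E}\tr[H^{2k}]=J^{-k}\sum_{a_1,\dots,a_{2k}\in[J]}\mathbb{E}\!\big[\prod_{i}r_{a_i}\big]\,\mathbb{E}_{P}\!\big[\tr(P_{a_1}\cdots P_{a_{2k}})\big]$ and organize the sum by which positions carry equal indices. The Rademacher factor vanishes unless each index value occurs an even number of times, so the positions $[2k]$ are grouped into a partition $\pi$ with all blocks of even size, whence $|\pi|\le k$; the number of ways to assign distinct index values to its blocks is $J(J-1)\cdots(J-|\pi|+1)\le J^{|\pi|}$, contributing a weight at most $J^{|\pi|-k}$. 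Thus partitions with $|\pi|<k$ are already damped by a factor $1/J$, and up to that error only the pair partitions ($|\pi|=k$, every index used exactly twice) matter; for these the Rademacher factor is $1$ and what remains is $\mathbb{E}_{P}[\tr(P_{a_1}\cdots P_{a_{2k}})]$ with each Pauli appearing twice. Using the Pauli twirl $\mathbb{E}_{P\sim\mathrm{Unif}(\cP_n)}[PXP]=\frac{1}{d}\tr(X)\,I$, one resolves the pairs successively: a pair whose two occurrences are cyclically adjacent contracts via $P_aP_a=I$ with no cost, so a non-crossing pair partition contracts all the way down to $\tr(I)=d$. There are $C_k\le 4^k$ non-crossing pair partitions, yielding the leading contribution $d\cdot 4^k$.

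The control of the remaining terms is the crux. The crude bound $|\tr(P_{b_1}\cdots P_{b_{2k}})|\le d$ is far from tight: applying the Pauli twirl to a \emph{crossed} pair splits one trace into a product of two, and (via identities such as $\mathbb{E}_{P\sim\mathrm{Unif}(\cP_n)}[\tr(PX)\tr(PY)]=\frac{1}{d}\tr(X^*Y)$) every crossing forces an extra factor $1/d$; likewise a partition with a block of size $\ge 4$ picks up additional inverse powers of $d$. Since $d=2^n$ is exponentially large, all such corrections are negligible, and combining this with the $1/J$ damping of partitions with $|\pi|<k$ one obtains $\mathbb{E}\tr[H^{2k}]\le 2d\cdot 4^k$ as soon as $J$ is at least a suitable polynomial in $k$ — hence in $n$ and $\epsilon^{-1}$, matching $J\ge\mathcal{O}(n^3/\epsilon^4)$ — which by the first paragraph completes the proof. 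I expect this combinatorial estimate on the non-Wick contributions to be the main obstacle: for a Gaussian (GUE) ensemble the crossing pair partitions vanish identically by Wick's theorem, so only the Catalan/semicircle terms survive, whereas for the Rademacher--Pauli ensemble they are merely small, and quantifying ``small'' precisely is exactly the technical content of Ref.~\cite{chen2024sparse}. An alternative route is to invoke matrix concentration inequalities sharpened by free probability and compare $H$ directly to a GUE matrix (for which Fact~\ref{fact:norm_GUE} already gives the desired edge), but verifying the smallness of the variance-profile correction term there reduces to essentially the same Pauli-word estimates.
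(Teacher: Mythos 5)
The paper does not supply a proof of this lemma: it is quoted verbatim as \cite[Theorem~III.1]{chen2024sparse}, so there is no in-paper argument to compare your attempt against. What you have written is a reconstruction of the kind of trace-moment argument one would expect to find in \cite{chen2024sparse}, and at the level of strategy it is reasonable and internally consistent: the reduction to $\mathbb{E}\tr[H^{2k}]$ via Markov, the normalization $\mathbb{E}[H^2]=I$ placing the spectral edge near $2$, the vanishing of odd Rademacher moments forcing even partitions, the $J^{|\pi|-k}$ damping of partitions coarser than pairings, the contraction of non-crossing pairings to $d$, and the Pauli-twirl-induced $1/d$ suppression of crossings are all the right ingredients, and the arithmetic in the first paragraph (choosing $k=\Theta(n/\epsilon)$ and noting $2+\epsilon\le 3$ for $\epsilon\le 1$) correctly converts a putative bound $\mathbb{E}\tr[H^{2k}]\le 2d\cdot 4^k$ into the stated $\exp(-\Theta(n))$ failure probability.

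That said, as you yourself flag, the argument as written is not a proof: the two load-bearing quantitative steps are asserted rather than established. First, you need a uniform bound on $\bigl|\mathbb{E}_{P}\tr(P_{b_1}\cdots P_{b_{2k}})\bigr|$ over all crossing pairings and all higher even partitions that, when summed against their combinatorial multiplicities (which grow like $(2k)!/(2^k k!)$ for pairings and like Bell-type counts for general even partitions), is dominated by the Catalan term $d\cdot C_k$; the heuristic ``each crossing costs $1/d$'' is correct in spirit but needs to be made uniform and then summed, since for $k=\Theta(n/\epsilon)$ the naive count of crossing pairings is superexponential in $k$ and comparable to or larger than $d=2^n$ when $\epsilon$ is small. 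Second, the specific threshold $J\ge\mathcal{O}(n^3/\epsilon^4)$ is not derived: you observe that $J\ge\mathrm{poly}(k)$ suffices to damp the $|\pi|<k$ terms, but the exponents $3$ and $4$ come out of exactly the combinatorial bookkeeping you defer to the reference. Since the paper itself defers entirely to \cite{chen2024sparse} for this lemma, your sketch is a useful outline of how the cited result is proved, but it should not be mistaken for a self-contained verification.
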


Now, we are ready to present our results in the following corollary.

\begin{corollary}\label{coro:gibbs}
Let $d,T\geq 1$ and $l>0$. For any Hamiltonian $H$ with bounded normalized cumulants $\kappa_k\leq h$ with cumulant defined in Eq.~\eqref{eq:cumulant}, our algorithm achieves an $\mathcal{O}(lh\beta\sqrt{T})$ regret bound when the underlying quantum state $\rho$ is a Gibbs state at inverse temperature $\beta=\mathcal{O}(1)$. Furthermore, if the Hamiltonian is a random Gaussian Hamiltonian or a random sparse Hamiltonian in the Pauli basis, our algorithm achieves an $\mathcal{O}(l\beta\sqrt{T})$ regret bound with high probability when $\beta=\mathcal{O}(n)$.
\end{corollary}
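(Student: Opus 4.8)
The plan is to reduce the whole corollary to a single estimate — an upper bound on the quantum relative entropy $S(\rho_\beta\|d^{-1}I_d)$ of the Gibbs state $\rho_\beta=e^{-\beta H}/\tr(e^{-\beta H})$ relative to the maximally mixed state — after which Theorem~\ref{thm:lea_main} (more precisely its matrix-OCO extension, Corollary~\ref{coro:nonlinear_reg}, which applies since the $L_1$ loss is $l$-Lipschitz and convex in the prediction) delivers the regret bound. The key identity comes from the log-partition function $\Phi(s)\defeq\log\tr\!\rpar{\frac{1}{d}\,e^{-sH}}$, which is analytic and convex with $\Phi(0)=0$. Using $\log\rho_\beta=-\beta H-\log\tr(e^{-\beta H})\cdot I$, one computes $S(\rho_\beta\|d^{-1}I_d)=\log d-S(\rho_\beta)=\beta\Phi'(\beta)-\Phi(\beta)=\int_0^\beta s\,\Phi''(s)\,\diff s$, where $\Phi''(s)=\tr(\rho_s H^2)-\tr(\rho_s H)^2=\mathrm{Var}_{\rho_s}(H)\ge 0$ is the energy variance under the Gibbs state at inverse temperature $s$. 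Everything then reduces to controlling $\Phi''$.

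For the first claim ($\beta=\mathcal{O}(1)$, arbitrary $H$) I would work directly with the Taylor series of $\Phi''$. Taking $H$ traceless without loss of generality (shifting $H$ by a scalar leaves $\rho_\beta$ unchanged), the normalized cumulants $\kappa_k$ of Eq.~\eqref{eq:cumulant} are read off from $\Phi(s)=\sum_{k\ge 2}\frac{(-s)^k}{k!}c_k$, and the hypothesis $\kappa_k\le h$ translates, through the normalization in Eq.~\eqref{eq:cumulant}, to an effective geometric bound on the raw cumulants $c_k$. Then $\Phi''(s)=\sum_{j\ge 0}\frac{(-1)^j c_{j+2}}{j!}s^j$ is seen to satisfy $|\Phi''(s)|=\mathcal{O}(h^2)$ whenever $h\beta=\mathcal{O}(1)$ — in particular when $\beta=\mathcal{O}(1)$. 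Integrating gives $S(\rho_\beta\|d^{-1}I_d)=\int_0^\beta s\,\Phi''(s)\,\diff s=\mathcal{O}(h^2\beta^2)$, and Corollary~\ref{coro:nonlinear_reg} (with $L\leftarrow l$) then yields $\reg_T(\rho)=\mathcal{O}\rpar{l\sqrt{T\cdot h^2\beta^2}}=\mathcal{O}(lh\beta\sqrt{T})$.

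For the random-Hamiltonian claim I would bypass the cumulant bookkeeping and use the crude operator-norm control these ensembles enjoy. By Lemma~\ref{fact:norm_GUE} for $\hgue$, and by Lemma~\ref{fact:norm_rsps} for $H_{\mathrm{RSPS}}$ with $J$ at least of order $n^3$, we have $\|H\|_\op\le 3$ with probability at least $1-\exp(-\Theta(n))$. On that event $\mathrm{Var}_{\rho_s}(H)\le\|H\|_\op^2\le 9$ for every $s\ge 0$, so $\Phi''(s)\le 9$ and hence $S(\rho_\beta\|d^{-1}I_d)\le\int_0^\beta 9s\,\diff s=\frac{9}{2}\beta^2$; together with the trivial cap $S(\rho_\beta\|d^{-1}I_d)\le\log d$, the bound remains meaningful over the whole stated range $\beta=\mathcal{O}(n)$. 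Feeding $S(\rho_\beta\|d^{-1}I_d)\le\frac{9}{2}\beta^2$ into Corollary~\ref{coro:nonlinear_reg} gives $\reg_T(\rho)=\mathcal{O}\rpar{l\sqrt{T\cdot \frac{9}{2}\beta^2}}=\mathcal{O}(l\beta\sqrt{T})$ with probability at least $1-\exp(-\Theta(n))$.

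I expect the first claim to be the crux: the passage from $S(\rho_\beta\|d^{-1}I_d)$ to the cumulant series must be made rigorous, and one has to extract from the exact normalization in Eq.~\eqref{eq:cumulant} that the tail $\sum_{j\ge 1}$ of $\Phi''$ is of the same or lower order than its leading term — this is precisely what pins the bound $\mathcal{O}(h^2\beta^2)$ to the regime $h\beta=\mathcal{O}(1)$, since for larger $\beta$ the higher cumulants of $H$ need not be negligible and the clean estimate can fail. The random-ensemble claim is then routine once Lemmas~\ref{fact:norm_GUE} and~\ref{fact:norm_rsps} are invoked; the only care needed there is keeping the failure probability $\exp(-\Theta(n))$ on the Hamiltonian event and recalling the cap $S(\rho_\beta\|d^{-1}I_d)\le\log d$. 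A last housekeeping point is checking the hypotheses of Corollary~\ref{coro:nonlinear_reg}: for the $L_1$ loss $\ell_t(O_t,\rho_t)=\abs{\tr(O_t\rho_t)-\tr(O_t\rho)}$ this is immediate, being convex in $\rho_t$ with $\norm{\nabla\ell_t}_\op\le\norm{O_t}_\op\le l$.
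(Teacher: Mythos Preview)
Your plan is essentially the paper's: both arguments reduce everything to bounding $S(\rho_\beta\|d^{-1}I_d)$ and then invoke the main regret theorem, and for the first (worst-case) claim both arrive at the identical cumulant expansion $S(\rho_\beta\|d^{-1}I_d)=\sum_{k\ge 2}\frac{(-\beta)^k}{k!}(k-1)\kappa_k$ --- your integral identity $\int_0^\beta s\,\Phi''(s)\,ds$ is just a tidier repackaging of the paper's direct series manipulation of $\log Z_\beta$ and $S(\rho_\beta)$.

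For the random-Hamiltonian claim there is one small but real difference. The paper does not integrate the variance; once $\|H\|_\op\le 3$ is secured from Lemmas~\ref{fact:norm_GUE}--\ref{fact:norm_rsps}, it uses the direct estimate $S\bigl(e^{-\beta H}/\tr e^{-\beta H}\,\big\|\,d^{-1}I_d\bigr)=\mathcal{O}(\beta\|H\|_\op)=\mathcal{O}(\beta)$ (the observation highlighted after Theorem~\ref{thm:lea_main_informal}), which is what produces the $\mathcal{O}(l\sqrt{\beta T})$ rate recorded in Table~\ref{tab:quantum}. Your route through $\Phi''(s)\le\|H\|_\op^2\le 9$ gives only $S\le\frac{9}{2}\beta^2$, which still recovers the $\mathcal{O}(l\beta\sqrt{T})$ as literally stated in the corollary but is a factor $\sqrt{\beta}$ looser than what the paper's proof actually obtains. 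If you want to match the sharper bound, replace the variance integral by the one-line estimate $S(\rho_\beta\|d^{-1}I_d)=\tr(\rho_\beta(-\beta H))-\log\tr e^{-\beta H}+\log d\le \beta\|H\|_\op+\beta\|H\|_\op=\mathcal{O}(\beta)$.
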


\begin{proof}[Proof of Corollary~\ref{coro:gibbs}]
We start with the worst-case guarantee. Given a Hamiltonian $H$ and the temperature parameter $\beta$. We consider the underlying state to be a Gibbs state $\rho_{\beta}=e^{-\beta H}/Z_\beta$ where $Z_\beta=\tr(e^{-\beta H})$ is the partition function. 

We write the exponential as its power series and take the trace term-by-term
\begin{align*}
e^{-\beta H}=I+\sum_{k=1}^\infty\frac{(-1)^k}{k!}(\beta H)^k,\qquad Z_\beta=d+\sum_{k=1}^\infty\frac{(-1)^k\beta^k}{k!}\tr(H^k).
\end{align*}
For symbolic simplicity, we denote $Z_\beta=d(1+\eps_\beta)$. We thus have
\begin{align*}
\eps_\beta=\sum_{k=1}^\infty\frac{(-1)^k\beta^k}{dk!}\tr(H^k).
\end{align*}
Here, we define the normalized $k$-th order moment of $H$ as $\mu_k:=\frac{1}{d}\tr(H^k)$. Using the standard relation between cumulants and moments via partition sets, we define the $k$-th order normalized cumulant of $H$
\begin{align}\label{eq:cumulant}
\kappa_k=\sum_{\pi\in\Pi(k)} (|\pi|-1)!(-1)^{|\pi|-1}\prod_{B\in\pi}\mu_{|B|}.
\end{align}
This leads to the following power series of the free energy:
\begin{align*}
\log Z_\beta=\log d+K(\beta)=\log d+\sum_{n=1}^{\infty}\frac{(-\beta)^k}{k!}\kappa_k
\end{align*}
By thermodynamics, we have $S(\rho_\beta)=\beta\tr(\rho_\beta H)+\log Z_\beta$. We have
\begin{align*}
S(\rho_\beta)=\log d-\sum_{k=2}^{\infty}\frac{(-\beta)^k}{k!}(k-1)\kappa_k
\end{align*}
Therefore, the quantum relative entropy is given by
\begin{align*}
S(\rho_\beta||I_d/d)=\log d-S(\rho_\beta)=\sum_{k=2}^{\infty}\frac{(-\beta)^k}{k!}(k-1)\kappa_k.
\end{align*}
When $\beta=\mathcal{O}(1)$, our online algorithm gives an $\mathcal{O}(lh\beta\sqrt{T})$ regret bound in the \emph{worst case}, which improved over the general bound for any quantum states.

Next, we consider regret bounds in the average case for Gibbs states of random Hamiltonians from $\hgue$ or $H_{\text{RSPS}}$. We denote the Gibbs state of a randomly chosen $\hgue$ ($H_{\text{RSPS}}$) as $\rho_{\beta,\hgue}$ ($\rho_{\beta,H_{\text{RSPS}}}$). We have with probability $1-\exp(-\Theta(n))$
\begin{align*}
S(\rho_{\beta,\hgue}||I_d/d)\leq \mathcal{O}(\beta),\quad S(\rho_{\beta,H_{\text{RSPS}}}||I_d/d)\leq \mathcal{O}(\beta).
\end{align*}
Thus, we have the regret bounded by:
\begin{align*}
\reg_T(\rho_{\beta,\hgue})=\mathcal{O}\rpar{l\sqrt{T\beta}},\quad \reg_T(\rho_{\beta,H_{\text{RSPS}}})=\mathcal{O}\rpar{l\sqrt{T\beta}}
\end{align*}
with probability $1-\exp(-\Theta(n))$.
\end{proof}

\subsection{Online learning of nonlinear properties}
Finally, we consider using our extended algorithm in Corollary~\ref{coro:nonlinear_reg} to predict nonlinear loss functions in quantum information. We consider two loss functions, $\ell_t(O_t,\rho_t)=\tr(O_t\rho_t^2)$ and $\ell_t(O_t,\rho_t)=\tr(O_t\rho_t O_t\rho_t)$. 

The first loss function reduces to purity estimation of the given quantum state in an online setting at $O_t\equiv I$. For a general $O_t$, this task captures quantum virtual cooling~\cite{cotler2019quantum}. These two quantities play an important role in quantum benchmarking~\cite{eisert2020quantum}, experimental and theoretical quantum (entanglement) entropy (purity) estimation~\cite{islam2015measuring,kaufman2016quantum,brydges2019probing,zhang2021experimental,shaw2024benchmarking,gong2024sample,liu2024exponential}, quantum error mitigation~\cite{cai2023quantum,koczor2021exponential,huggins2021virtual}, quantum principal component analysis~\cite{lloyd2014quantum,huang2020predicting,huang2022quantum,liu2024exponential}, and quantum metrology~\cite{giovannetti2011advances}.

The second loss function potentially applies to discovering strong-to-weak spontaneous symmetry breaking (SWSSB) in mixed states~\cite{lessa2025strong}. As $\rho$'s under this scenario are mixed, our algorithm potentially benefits as the spectra of these states are more evenly distributed.

Note that we have $\norm{\nabla\ell_t(O_t,\rho_t)}_{\op}\leq 2\norm{G_t}\leq 2l$ and $\norm{\nabla\ell_t(O_t,\rho_t)}_{\op}\leq 2\norm{G_t}^2\leq 2l^2$ for the two cases, we can obtain the following corollary from Corollary~\ref{coro:nonlinear_reg}.

\begin{corollary}\label{coro:nonlinear_quantum}
Let $d,T\geq 1$ and $l>0$, $O_t\succeq0$, and $\norm{O_t}_\op\leq l$.
\begin{itemize}
    \item When $\ell_t(O_t,\rho_t)=\tr(O_t\rho_t^2)$ (also known as the quantum virtual cooling) , our algorithm achieves an $\mathcal{O}(l\sqrt{T\cdot S(\rho||I_d/d)})$ regret bound.
    \item When $\ell_t(O_t,\rho_t)=\tr(O_t\rho_t O_t\rho_t)$ (also known as the R\'{e}nyi-$2$ correlation function), our algorithm achieves an $\mathcal{O}(l^2\sqrt{T\cdot S(\rho||I_d/d)})$ regret bound.
\end{itemize}
\end{corollary}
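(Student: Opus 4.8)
The plan is to obtain both statements as direct instances of Corollary~\ref{coro:nonlinear_reg}, the matrix OCO extension of our main theorem. Recall that Corollary~\ref{coro:nonlinear_reg} applies whenever (i) each loss $\ell_t(O_t,\cdot)$ is convex on $\Delta_{d\times d}$, and (ii) its gradient obeys $\norm{\nabla\ell_t(O_t,\rho_t)}_\op\le L$ for a known constant $L$, in which case the regret is $\mathcal{O}(L\sqrt{T\cdot S(\rho||I_d/d)})$. Hence it suffices to verify these two conditions for $\ell_t(O_t,\rho_t)=\tr(O_t\rho_t^2)$ with $L=2l$, and for $\ell_t(O_t,\rho_t)=\tr(O_t\rho_tO_t\rho_t)$ with $L=2l^2$; the claimed bounds then follow by substitution.

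First, convexity. For $\ell_t(O_t,\rho)=\tr(O_t\rho^2)$, we use that the scalar map $x\mapsto x^2$ is operator convex, so $(\lambda\rho_1+(1-\lambda)\rho_2)^2\preceq\lambda\rho_1^2+(1-\lambda)\rho_2^2$ in the Loewner order for $\lambda\in[0,1]$; since $O_t\succeq 0$, applying $\tr(O_t\,\cdot\,)$ preserves the inequality (the trace of a product of two PSD matrices is nonnegative), which gives convexity of $\rho\mapsto\tr(O_t\rho^2)$. For $\ell_t(O_t,\rho)=\tr(O_t\rho O_t\rho)$, using $O_t\succeq 0$ we may write $\tr(O_t\rho O_t\rho)=\tr\big((O_t^{1/2}\rho O_t^{1/2})^2\big)=\norm{O_t^{1/2}\rho O_t^{1/2}}_F^2$, which is the composition of the linear map $\rho\mapsto O_t^{1/2}\rho O_t^{1/2}$ with the convex function $\norm{\cdot}_F^2$, hence convex. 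In both cases the PSD assumption $O_t\succeq 0$ is precisely what makes the loss convex, and this is why it is imposed.

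Second, the gradient bounds. A directional-derivative computation gives $\nabla_\rho\tr(O_t\rho^2)=O_t\rho+\rho O_t$ and $\nabla_\rho\tr(O_t\rho O_t\rho)=2O_tO_t\rho$ wait$\;=2O_t\rho O_t$, both Hermitian. Since $\rho_t\in\Delta_{d\times d}$ implies $\norm{\rho_t}_\op\le\tr(\rho_t)=1$, the triangle inequality and submultiplicativity of the operator norm give $\norm{O_t\rho_t+\rho_tO_t}_\op\le 2\norm{O_t}_\op\norm{\rho_t}_\op\le 2l$ and $\norm{2O_t\rho_tO_t}_\op\le 2\norm{O_t}_\op^2\norm{\rho_t}_\op\le 2l^2$. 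Plugging $L=2l$ (respectively $L=2l^2$) into Corollary~\ref{coro:nonlinear_reg} yields regret $\mathcal{O}(l\sqrt{T\cdot S(\rho||I_d/d)})$ (respectively $\mathcal{O}(l^2\sqrt{T\cdot S(\rho||I_d/d)})$), as claimed.

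There is no serious obstacle in this argument; all of the content lies in checking the hypotheses of Corollary~\ref{coro:nonlinear_reg}. The only step requiring genuine care is the convexity verification, which relies essentially on $O_t\succeq 0$: without it, $\tr(O_t\rho^2)$ need not be convex in $\rho$, and the potential-based analysis underlying Corollary~\ref{coro:nonlinear_reg} would no longer apply. Everything else is the same reduction already used for the $L_1$ loss.
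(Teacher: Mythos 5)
Your argument is correct and follows the same route as the paper, which likewise derives the corollary directly from Corollary~\ref{coro:nonlinear_reg} by bounding $\norm{\nabla_\rho \ell_t}_\op$ by $2l$ (via $\nabla_\rho\tr(O_t\rho^2)=O_t\rho+\rho O_t$) and by $2l^2$ (via $\nabla_\rho\tr(O_t\rho O_t\rho)=2O_t\rho O_t$) using $\norm{\rho_t}_\op\le 1$; you also make explicit the convexity check that the paper leaves implicit, and your two justifications (operator convexity of $x^2$ paired with $O_t\succeq 0$, and the rewriting $\tr(O_t\rho O_t\rho)=\norm{O_t^{1/2}\rho O_t^{1/2}}_F^2$) are both sound. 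The only blemish is the stray ``wait'' inside the gradient computation for the second loss, which should be removed; the corrected gradient $2O_t\rho O_t$ you settle on is the right one.
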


\bibliography{Instance-optimal-MMWU}

\begin{thebibliography}{100}

\bibitem{aaronson2018shadow}
Scott Aaronson.
\newblock Shadow tomography of quantum states.
\newblock In {\em Proceedings of the 50th Annual ACM SIGACT Symposium on Theory of Computing}, pages 325--338, 2018.

\bibitem{aaronson2018online}
Scott Aaronson, Xinyi Chen, Elad Hazan, Satyen Kale, and Ashwin Nayak.
\newblock Online learning of quantum states.
\newblock {\em Advances in Neural Information Processing Systems}, 31, 2018.

\bibitem{aaronson2019gentle}
Scott Aaronson and Guy~N. Rothblum.
\newblock Gentle measurement of quantum states and differential privacy.
\newblock In {\em Proceedings of the 51st Annual ACM SIGACT Symposium on Theory of Computing}, pages 322--333, 2019.

\bibitem{aharonov2023polynomial}
Dorit Aharonov, Xun Gao, Zeph Landau, Yunchao Liu, and Umesh Vazirani.
\newblock A polynomial-time classical algorithm for noisy random circuit sampling.
\newblock In {\em Proceedings of the 55th Annual ACM Symposium on Theory of Computing}, pages 945--957, 2023.

\bibitem{ali1965some}
Mir~M Ali and Lai~K Chan.
\newblock Some bounds for expected values of order statistics.
\newblock {\em The Annals of Mathematical Statistics}, pages 1055--1057, 1965.

\bibitem{allen2017follow}
Zeyuan Allen-Zhu and Yuanzhi Li.
\newblock Follow the compressed leader: Faster online learning of eigenvectors and faster {MMWU}.
\newblock In {\em International Conference on Machine Learning}, pages 116--125. PMLR, 2017.

\bibitem{allen2015spectral}
Zeyuan Allen-Zhu, Zhenyu Liao, and Lorenzo Orecchia.
\newblock Spectral sparsification and regret minimization beyond matrix multiplicative updates.
\newblock In {\em Proceedings of the 47th Annual ACM SIGACT Symposium on Theory of Computing}, pages 237--245, 2015.

\bibitem{anderson2010introduction}
Greg~W. Anderson, Alice Guionnet, and Ofer Zeitouni.
\newblock {\em An introduction to random matrices}.
\newblock Cambridge University Press, 2010.

\bibitem{anshu2024survey}
Anurag Anshu and Srinivasan Arunachalam.
\newblock A survey on the complexity of learning quantum states.
\newblock {\em Nature Reviews Physics}, 6(1):59--69, 2024.

\bibitem{arora2007combinatorial}
Sanjeev Arora and Satyen Kale.
\newblock A combinatorial, primal-dual approach to semidefinite programs.
\newblock In {\em Proceedings of the 39th Annual ACM SIGACT Symposium on Theory of Computing}, pages 227--236, 2007.

\bibitem{arute2019quantum}
Frank Arute, Kunal Arya, Ryan Babbush, Dave Bacon, Joseph~C. Bardin, Rami Barends, Rupak Biswas, Sergio Boixo, Fernando G. S.~L. Brandao, David~A. Buell, Brian Burkett, Yu~Chen, Zijun Chen, Ben Chiaro, Roberto Collins, William Courtney, Andrew Dunsworth, Edward Farhi, Brooks Foxen, Austin Fowler, Craig Gidney, Marissa Giustina, Rob Graff, Keith Guerin, Steve Habegger, Matthew~P. Harrigan, Michael~J. Hartmann, Alan Ho, Markus Hoffmann, Trent Huang, Travis~S. Humble, Sergei~V. Isakov, Evan Jeffrey, Zhang Jiang, Dvir Kafri, Kostyantyn Kechedzhi, Julian Kelly, Paul~V. Klimov, Sergey Knysh, Alexander Korotkov, Fedor Kostritsa, David Landhuis, Mike Lindmark, Erik Lucero, Dmitry Lyakh, Salvatore Mandrà, Jarrod~R. McClean, Matthew McEwen, Anthony Megrant, Xiao Mi, Kristel Michielsen, Masoud Mohseni, Josh Mutus, Ofer Naaman, Matthew Neeley, Charles Neill, Murphy~Yuezhen Niu, Eric Ostby, Andre Petukhov, John~C. Platt, Chris Quintana, Eleanor~G. Rieffel, Pedram Roushan, Nicholas~C. Rubin, Daniel Sank, Kevin~J.
  Satzinger, Vadim Smelyanskiy, Kevin~J. Sung, Matthew~D. Trevithick, Amit Vainsencher, Benjamin Villalonga, Theodore White, Z.~Jamie Yao, Ping Yeh, Adam Zalcman, Hartmut Neven, and John~M. Martinis.
\newblock Quantum supremacy using a programmable superconducting processor.
\newblock {\em Nature}, 574(7779):505--510, 2019.
\newblock \href {https://doi.org/10.1038/s41586-019-1666-5} {\path{doi:10.1038/s41586-019-1666-5}}.

\bibitem{buadescu2021improved}
Costin B{\u{a}}descu and Ryan O'Donnell.
\newblock Improved quantum data analysis.
\newblock In {\em Proceedings of the 53rd Annual ACM SIGACT Symposium on Theory of Computing}, pages 1398--1411, 2021.

\bibitem{bandeira2023matrix}
Afonso~S. Bandeira, March~T. Boedihardjo, and Ramon van Handel.
\newblock Matrix concentration inequalities and free probability.
\newblock {\em Inventiones Mathematicae}, 234(1):419--487, 2023.

\bibitem{banks2023pseudospectral}
Jess Banks, Jorge Garza-Vargas, Archit Kulkarni, and Nikhil Srivastava.
\newblock Pseudospectral shattering, the sign function, and diagonalization in nearly matrix multiplication time.
\newblock {\em Foundations of Computational Mathematics}, 23(6):1959--2047, 2023.

\bibitem{bansal2025online}
Akshay Bansal, Ian George, Soumik Ghosh, Jamie Sikora, and Alice Zheng.
\newblock Online learning of a panoply of quantum objects.
\newblock {\em Quantum Machine Intelligence}, 7(2):1--22, 2025.

\bibitem{boyd2004convex}
Stephen~P. Boyd and Lieven Vandenberghe.
\newblock {\em Convex optimization}.
\newblock Cambridge University Press, 2004.

\bibitem{brailovskaya2024universality}
Tatiana Brailovskaya and Ramon van Handel.
\newblock Universality and sharp matrix concentration inequalities.
\newblock {\em Geometric and Functional Analysis}, 34(6):1734--1838, 2024.

\bibitem{brandao2019quantum}
Fernando~G.S.L. Brand{\~a}o, Amir Kalev, Tongyang Li, Cedric Yen-Yu Lin, Krysta~M Svore, and Xiaodi Wu.
\newblock Quantum {SDP} solvers: Large speed-ups, optimality, and applications to quantum learning.
\newblock In {\em Proceedings of the 46th International Colloquium on Automata, Languages, and Programming}, pages 27--1, 2019.

\bibitem{brandao2022faster}
Fernando~G.S.L. Brandao, Richard Kueng, and Daniel~Stilck Fran{\c{c}}a.
\newblock Faster quantum and classical {SDP} approximations for quadratic binary optimization.
\newblock {\em Quantum}, 6:625, 2022.

\bibitem{brandao2017quantum}
Fernando~GSL Brandao and Krysta~M. Svore.
\newblock Quantum speed-ups for solving semidefinite programs.
\newblock In {\em Proceedings of the 58th Annual Symposium on Foundations of Computer Science}, pages 415--426. IEEE, 2017.

\bibitem{brydges2019probing}
Tiff Brydges, Andreas Elben, Petar Jurcevic, Beno{\^\i}t Vermersch, Christine Maier, Ben~P. Lanyon, Peter Zoller, Rainer Blatt, and Christian~F. Roos.
\newblock Probing {R}{\'e}nyi entanglement entropy via randomized measurements.
\newblock {\em Science}, 364(6437):260--263, 2019.

\bibitem{cai2023quantum}
Zhenyu Cai, Ryan Babbush, Simon~C. Benjamin, Suguru Endo, William~J. Huggins, Ying Li, Jarrod~R. McClean, and Thomas~E. O’Brien.
\newblock Quantum error mitigation.
\newblock {\em Reviews of Modern Physics}, 95(4):045005, 2023.

\bibitem{carmon2019rank}
Yair Carmon, John~C. Duchi, Sidford Aaron, and Tian Kevin.
\newblock A rank-1 sketch for matrix multiplicative weights.
\newblock In {\em Conference on Learning Theory}, pages 589--623. PMLR, 2019.

\bibitem{cesa1997use}
Nicolo Cesa-Bianchi, Yoav Freund, David Haussler, David~P. Helmbold, Robert~E. Schapire, and Manfred~K. Warmuth.
\newblock How to use expert advice.
\newblock {\em Journal of the ACM}, 44(3):427--485, 1997.

\bibitem{cesa2006prediction}
Nicolo Cesa-Bianchi and G{\'a}bor Lugosi.
\newblock {\em Prediction, learning, and games}.
\newblock Cambridge University Press, 2006.

\bibitem{chaudhuri2009parameter}
Kamalika Chaudhuri, Yoav Freund, and Daniel~J Hsu.
\newblock A parameter-free hedging algorithm.
\newblock {\em Advances in Neural Information Processing Systems}, 22, 2009.

\bibitem{chen2025quantum}
Chi-Fang Chen.
\newblock {\em Quantum {G}ibbs Sampling}.
\newblock PhD thesis, California Institute of Technology, 2025.

\bibitem{chen2024sparse}
Chi-Fang Chen, Alexander~M. Dalzell, Mario Berta, Fernando~GSL Brand{\~a}o, and Joel~A. Tropp.
\newblock Sparse random {H}amiltonians are quantumly easy.
\newblock {\em Physical Review X}, 14(1):011014, 2024.

\bibitem{chen2021impossible}
Liyu Chen, Haipeng Luo, and Chen-Yu Wei.
\newblock Impossible tuning made possible: A new expert algorithm and its applications.
\newblock In {\em Conference on Learning Theory}, pages 1216--1259. PMLR, 2021.

\bibitem{chen2025efficient}
Sitan Chen and Weiyuan Gong.
\newblock Efficient {P}auli channel estimation with logarithmic quantum memory.
\newblock {\em PRX Quantum}, 6(2):020323, 2025.

\bibitem{chen2025information}
Sitan Chen, Weiyuan Gong, Jonas Haferkamp, and Yihui Quek.
\newblock Information-computation gaps in quantum learning via low-degree likelihood.
\newblock {\em arXiv:2505.22743}, 2025.

\bibitem{chen2024adaptive}
Xinyi Chen, Elad Hazan, Tongyang Li, Zhou Lu, Xinzhao Wang, and Rui Yang.
\newblock Adaptive online learning of quantum states.
\newblock {\em Quantum}, 8:1471, 2024.

\bibitem{chen2020more}
Yifang Chen and Xin Wang.
\newblock More practical and adaptive algorithms for online quantum state learning.
\newblock {\em arXiv:2006.01013}, 2020.

\bibitem{chernov2010prediction}
Alexey Chernov and Vladimir Vovk.
\newblock Prediction with advice of unknown number of experts.
\newblock In {\em Proceedings of the Twenty-Sixth Conference on Uncertainty in Artificial Intelligence}, pages 117--125, 2010.

\bibitem{chowdhury2017quantum}
Anirban~Narayan Chowdhury and Rolando~D Somma.
\newblock Quantum algorithms for gibbs sampling and hitting-time estimation.
\newblock {\em Quantum Information \& Computation}, 17(1-2):41--64, 2017.

\bibitem{cotler2019quantum}
Jordan Cotler, Soonwon Choi, Alexander Lukin, Hrant Gharibyan, Tarun Grover, M.~Eric Tai, Matthew Rispoli, Robert Schittko, Philipp~M. Preiss, Adam~M. Kaufman, Markus Greiner, Hannes Pichler, and Patrick Hayden.
\newblock Quantum virtual cooling.
\newblock {\em Physical Review X}, 9(3):031013, 2019.

\bibitem{cutkosky2024fully}
Ashok Cutkosky and Zak Mhammedi.
\newblock Fully unconstrained online learning.
\newblock {\em Advances in Neural Information Processing Systems}, 37:10148--10201, 2024.

\bibitem{cutkosky2018black}
Ashok Cutkosky and Francesco Orabona.
\newblock Black-box reductions for parameter-free online learning in {B}anach spaces.
\newblock In {\em Conference on Learning Theory}, pages 1493--1529. PMLR, 2018.

\bibitem{david2004order}
Herbert~A David and Haikady~N Nagaraja.
\newblock {\em Order statistics}.
\newblock John Wiley \& Sons, 2004.

\bibitem{eisert2020quantum}
Jens Eisert, Dominik Hangleiter, Nathan Walk, Ingo Roth, Damian Markham, Rhea Parekh, Ulysse Chabaud, and Elham Kashefi.
\newblock Quantum certification and benchmarking.
\newblock {\em Nature Reviews Physics}, 2(7):382--390, 2020.

\bibitem{elben2023randomized}
Andreas Elben, Steven~T. Flammia, Hsin-Yuan Huang, Richard Kueng, John Preskill, Beno{\^\i}t Vermersch, and Peter Zoller.
\newblock The randomized measurement toolbox.
\newblock {\em Nature Reviews Physics}, 5(1):9--24, 2023.

\bibitem{foster2017parameter}
Dylan~J. Foster, Satyen Kale, Mehryar Mohri, and Karthik Sridharan.
\newblock Parameter-free online learning via model selection.
\newblock {\em Advances in Neural Information Processing Systems}, 30, 2017.

\bibitem{foster2015adaptive}
Dylan~J. Foster, Alexander Rakhlin, and Karthik Sridharan.
\newblock Adaptive online learning.
\newblock {\em Advances in Neural Information Processing Systems}, 28, 2015.

\bibitem{foster2018online}
Dylan~J Foster, Alexander Rakhlin, and Karthik Sridharan.
\newblock Online learning: Sufficient statistics and the {B}urkholder method.
\newblock In {\em Conference On Learning Theory}, pages 3028--3064. PMLR, 2018.

\bibitem{giovannetti2011advances}
Vittorio Giovannetti, Seth Lloyd, and Lorenzo Maccone.
\newblock Advances in quantum metrology.
\newblock {\em Nature Photonics}, 5(4):222--229, 2011.

\bibitem{golden1965lower}
Sidney Golden.
\newblock Lower bounds for the {H}elmholtz function.
\newblock {\em Physical Review}, 137(4B):B1127, 1965.

\bibitem{gong2023learning}
Weiyuan Gong and Scott Aaronson.
\newblock Learning distributions over quantum measurement outcomes.
\newblock In {\em International Conference on Machine Learning}, pages 11598--11613. PMLR, 2023.

\bibitem{gong2024sample}
Weiyuan Gong, Jonas Haferkamp, Qi~Ye, and Zhihan Zhang.
\newblock On the sample complexity of purity and inner product estimation.
\newblock {\em arXiv:2410.12712}, 2024.

\bibitem{gordon1941values}
Robert~D. Gordon.
\newblock Values of {M}ills' ratio of area to bounding ordinate and of the normal probability integral for large values of the argument.
\newblock {\em The Annals of Mathematical Statistics}, 12(3):364--366, 1941.

\bibitem{haah2016sample}
Jeongwan Haah, Aram~W. Harrow, Zhengfeng Ji, Xiaodi Wu, and Nengkun Yu.
\newblock Sample-optimal tomography of quantum states.
\newblock In {\em Proceedings of the 48th Annual ACM Symposium on Theory of Computing}, pages 913--925, 2016.

\bibitem{hansen2003jensen}
Frank Hansen and Gert~K. Pedersen.
\newblock Jensen's operator inequality.
\newblock {\em Bulletin of the London Mathematical Society}, 35(4):553--564, 2003.

\bibitem{harvey2023optimal}
Nicholas~JA Harvey, Christopher Liaw, Edwin Perkins, and Sikander Randhawa.
\newblock Optimal anytime regret with two experts.
\newblock {\em Mathematical Statistics and Learning}, 6(1):87--142, 2023.

\bibitem{harvey2020optimal}
Nicholas~JA Harvey, Christopher Liaw, Edwin~A Perkins, and Sikander Randhawa.
\newblock Optimal anytime regret for two experts.
\newblock In {\em Proceedings of the 61st Annual Symposium on Foundations of Computer Science}, pages 1404--1415. IEEE, 2020.

\bibitem{harvey2024continuous}
Nicholas~JA Harvey, Christopher Liaw, and Victor~S Portella.
\newblock Continuous prediction with experts' advice.
\newblock {\em Journal of Machine Learning Research}, 25(228):1--32, 2024.

\bibitem{hazan2023introduction}
Elad Hazan.
\newblock Introduction to online convex optimization.
\newblock {\em arXiv:1909.05207v3}, 2023.

\bibitem{hirche2022contraction}
Christoph Hirche, Cambyse Rouz{\'{e}}, and Daniel Stilck~Fran{\c{c}}a.
\newblock On contraction coefficients, partial orders and approximation of capacities for quantum channels.
\newblock {\em Quantum}, 6:862, 2022.
\newblock \href {https://doi.org/10.22331/q-2022-11-28-862} {\path{doi:10.22331/q-2022-11-28-862}}.

\bibitem{huang2022quantum}
Hsin-Yuan Huang, Michael Broughton, Jordan Cotler, Sitan Chen, Jerry Li, Masoud Mohseni, Hartmut Neven, Ryan Babbush, Richard Kueng, John Preskill, and Jarrod~R. McClean.
\newblock Quantum advantage in learning from experiments.
\newblock {\em Science}, 376(6598):1182--1186, 2022.

\bibitem{huang2020predicting}
Hsin-Yuan Huang, Richard Kueng, and John Preskill.
\newblock Predicting many properties of a quantum system from very few measurements.
\newblock {\em Nature Physics}, 16(10):1050--1057, 2020.

\bibitem{huggins2021virtual}
William~J. Huggins, Sam McArdle, Thomas~E. O’Brien, Joonho Lee, Nicholas~C. Rubin, Sergio Boixo, K.~Birgitta Whaley, Ryan Babbush, and Jarrod~R. McClean.
\newblock Virtual distillation for quantum error mitigation.
\newblock {\em Physical Review X}, 11(4):041036, 2021.

\bibitem{islam2015measuring}
Rajibul Islam, Ruichao Ma, Philipp~M. Preiss, M.~Eric~Tai, Alexander Lukin, Matthew Rispoli, and Markus Greiner.
\newblock Measuring entanglement entropy in a quantum many-body system.
\newblock {\em Nature}, 528(7580):77--83, 2015.

\bibitem{ji2018pseudorandom}
Zhengfeng Ji, Yi-Kai Liu, and Fang Song.
\newblock Pseudorandom quantum states.
\newblock In {\em Annual International Cryptology Conference}, pages 126--152. Springer, 2018.

\bibitem{kastoryano2016quantum}
Michael~J. Kastoryano and Fernando~G.S.L. Brandao.
\newblock Quantum gibbs samplers: The commuting case.
\newblock {\em Communications in Mathematical Physics}, 344(3):915--957, 2016.

\bibitem{kaufman2016quantum}
Adam~M Kaufman, M.~Eric Tai, Alexander Lukin, Matthew Rispoli, Robert Schittko, Philipp~M. Preiss, and Markus Greiner.
\newblock Quantum thermalization through entanglement in an isolated many-body system.
\newblock {\em Science}, 353(6301):794--800, 2016.

\bibitem{koczor2021exponential}
B{\'a}lint Koczor.
\newblock Exponential error suppression for near-term quantum devices.
\newblock {\em Physical Review X}, 11(3):031057, 2021.

\bibitem{koolen2015second}
Wouter~M. Koolen and Tim Van~Erven.
\newblock Second-order quantile methods for experts and combinatorial games.
\newblock In {\em Conference on Learning Theory}, pages 1155--1175. PMLR, 2015.

\bibitem{kuzmin2007online}
Dima Kuzmin and Manfred~K. Warmuth.
\newblock Online kernel {PCA} with entropic matrix updates.
\newblock In {\em International Conference on Machine Learning}, pages 465--472, 2007.

\bibitem{lee2021golden}
James~R Lee.
\newblock Lecture 3: Golden-thompson and the frobenius inner product.
\newblock \url{https://homes.cs.washington.edu/~jrl/teaching/cse599Isp21/notes/lecture3.pdf}, 2021.

\bibitem{lessa2025strong}
Leonardo~A. Lessa, Ruochen Ma, Jian-Hao Zhang, Zhen Bi, Meng Cheng, and Chong Wang.
\newblock Strong-to-weak spontaneous symmetry breaking in mixed quantum states.
\newblock {\em PRX Quantum}, 6(1):010344, 2025.

\bibitem{littlestone1994weighted}
Nick Littlestone and Manfred~K. Warmuth.
\newblock The weighted majority algorithm.
\newblock {\em Information and Computation}, 108(2):212--261, 1994.

\bibitem{liu2024exponential}
Zhenhuan Liu, Weiyuan Gong, Zhenyu Du, and Zhenyu Cai.
\newblock Exponential separations between quantum learning with and without purification.
\newblock {\em arXiv:2410.17718}, 2024.

\bibitem{lloyd2014quantum}
Seth Lloyd, Masoud Mohseni, and Patrick Rebentrost.
\newblock Quantum principal component analysis.
\newblock {\em Nature Physics}, 10(9):631--633, 2014.

\bibitem{lumbreras2022multi}
Josep Lumbreras, Erkka Haapasalo, and Marco Tomamichel.
\newblock Multi-armed quantum bandits: Exploration versus exploitation when learning properties of quantum states.
\newblock {\em Quantum}, 6:749, 2022.

\bibitem{luo2015achieving}
Haipeng Luo and Robert~E. Schapire.
\newblock Achieving all with no parameters: Adanormalhedge.
\newblock In {\em Conference on Learning Theory}, pages 1286--1304. PMLR, 2015.

\bibitem{mcmahan2013minimax}
Brendan McMahan and Jacob Abernethy.
\newblock Minimax optimal algorithms for unconstrained linear optimization.
\newblock {\em Advances in Neural Information Processing Systems}, 26:2724--2732, 2013.

\bibitem{mcmahan2014unconstrained}
Brendan McMahan and Francesco Orabona.
\newblock Unconstrained online linear learning in {H}ilbert spaces: Minimax algorithms and normal approximations.
\newblock In {\em Conference on Learning Theory}, pages 1020--1039. PMLR, 2014.

\bibitem{mele2024introduction}
Antonio~Anna Mele.
\newblock Introduction to {H}aar measure tools in quantum information: A beginner's tutorial.
\newblock {\em Quantum}, 8:1340, 2024.

\bibitem{mhammedi2020lipschitz}
Zakaria Mhammedi and Wouter~M. Koolen.
\newblock Lipschitz and comparator-norm adaptivity in online learning.
\newblock In {\em Conference on Learning Theory}, pages 2858--2887. PMLR, 2020.

\bibitem{mirsky1975trace}
Leon Mirsky.
\newblock A trace inequality of {J}ohn von {N}eumann.
\newblock {\em Monatshefte f{\"u}r mathematik}, 79(4):303--306, 1975.

\bibitem{montanaro2015quantum}
Ashley Montanaro.
\newblock Quantum speedup of {Monte Carlo} methods.
\newblock {\em Proceedings of the Royal Society A: Mathematical, Physical and Engineering Sciences}, 471(2181):20150301, 2015.

\bibitem{motta2020determining}
Mario Motta, Chong Sun, Adrian~T.K. Tan, Matthew~J. O’Rourke, Erika Ye, Austin~J Minnich, Fernando~G.S.L. Brandao, and Garnet Kin-Lic Chan.
\newblock Determining eigenstates and thermal states on a quantum computer using quantum imaginary time evolution.
\newblock {\em Nature Physics}, 16(2):205--210, 2020.

\bibitem{negrea2021minimax}
Jeffrey Negrea, Blair Bilodeau, Nicol{\`o} Campolongo, Francesco Orabona, and Dan Roy.
\newblock Minimax optimal quantile and semi-adversarial regret via root-logarithmic regularizers.
\newblock {\em Advances in Neural Information Processing Systems}, 34:26237--26249, 2021.

\bibitem{nielsen2010quantum}
Michael~A. Nielsen and Isaac~L. Chuang.
\newblock {\em Quantum {{Computation}} and {{Quantum Information}}}.
\newblock {Cambridge University Press}, {Cambridge}, 2010.
\newblock \href {https://doi.org/10.1017/CBO9780511976667} {\path{doi:10.1017/CBO9780511976667}}.

\bibitem{o2016efficient}
Ryan O'Donnell and John Wright.
\newblock Efficient quantum tomography.
\newblock In {\em Proceedings of the 48th Annual ACM Symposium on Theory of Computing}, pages 899--912, 2016.

\bibitem{orabona2013dimension}
Francesco Orabona.
\newblock Dimension-free exponentiated gradient.
\newblock In {\em Advances in Neural Information Processing Systems}, pages 1806--1814, 2013.

\bibitem{orabona2025modern}
Francesco Orabona.
\newblock A modern introduction to online learning.
\newblock {\em arXiv:1912.13213v7}, 2025.

\bibitem{orabona2015optimal}
Francesco Orabona and D{\'a}vid P{\'a}l.
\newblock Optimal non-asymptotic lower bound on the minimax regret of learning with expert advice.
\newblock {\em arXiv:1511.02176}, 2015.

\bibitem{orabona2016coin}
Francesco Orabona and D{\'a}vid P{\'a}l.
\newblock Coin betting and parameter-free online learning.
\newblock {\em Advances in Neural Information Processing Systems}, 29, 2016.

\bibitem{page1993average}
Don~N. Page.
\newblock Average entropy of a subsystem.
\newblock {\em Physical Review Letters}, 71(9):1291, 1993.

\bibitem{peng2023near}
Binghui Peng and Aviad Rubinstein.
\newblock Near optimal memory-regret tradeoff for online learning.
\newblock In {\em Proceedings of the 64th Annual Symposium on Foundations of Computer Science}, pages 1171--1194. IEEE, 2023.

\bibitem{peng2023online}
Binghui Peng and Fred Zhang.
\newblock Online prediction in sub-linear space.
\newblock In {\em Proceedings of the 2023 Annual ACM-SIAM Symposium on Discrete Algorithms}, pages 1611--1634. SIAM, 2023.

\bibitem{plevnik2016matrix}
Lucijan Plevnik.
\newblock On a matrix trace inequality due to {A}ndo, {H}iai and {O}kubo.
\newblock {\em Indian Journal of Pure and Applied Mathematics}, 47(3):491--500, 2016.

\bibitem{preskill2018quantum}
John Preskill.
\newblock Quantum computing in the {NISQ} era and beyond.
\newblock {\em Quantum}, 2:79, 2018.

\bibitem{quek2024exponentially}
Yihui Quek, Daniel Stilck~Fran{\c{c}}a, Sumeet Khatri, Johannes~Jakob Meyer, and Jens Eisert.
\newblock Exponentially tighter bounds on limitations of quantum error mitigation.
\newblock {\em Nature Physics}, 20(10):1648--1658, 2024.
\newblock \href {https://arxiv.org/abs/2210.11505} {\path{arXiv:2210.11505}}.

\bibitem{raza2024online}
Asad Raza, Matthias~C. Caro, Jens Eisert, and Sumeet Khatri.
\newblock Online learning of quantum processes.
\newblock {\em arXiv:2406.04250}, 2024.

\bibitem{shah2025hermitian}
Rikhav Shah.
\newblock Hermitian diagonalization in linear precision.
\newblock In {\em Proceedings of the 2025 Annual ACM-SIAM Symposium on Discrete Algorithms}, pages 5599--5615. SIAM, 2025.

\bibitem{shaw2024benchmarking}
Adam~L. Shaw, Zhuo Chen, Joonhee Choi, Daniel~K. Mark, Pascal Scholl, Ran Finkelstein, Andreas Elben, Soonwon Choi, and Manuel Endres.
\newblock Benchmarking highly entangled states on a 60-atom analogue quantum simulator.
\newblock {\em Nature}, 628(8006):71--77, 2024.

\bibitem{sobczyk2025deterministic}
Aleksandros Sobczyk.
\newblock Deterministic complexity analysis of hermitian eigenproblems.
\newblock In {\em 52nd International Colloquium on Automata, Languages, and Programming}, pages 131--1, 2025.

\bibitem{srinivas2022memory}
Vaidehi Srinivas, David~P. Woodruff, Ziyu Xu, and Samson Zhou.
\newblock Memory bounds for the experts problem.
\newblock In {\em Proceedings of the 54th Annual ACM SIGACT Symposium on Theory of Computing}, pages 1158--1171, 2022.

\bibitem{streeter2012no}
Matthew Streeter and H~Brendan McMahan.
\newblock No-regret algorithms for unconstrained online convex optimization.
\newblock In {\em Advances in Neural Information Processing Systems}, pages 2402--2410, 2012.

\bibitem{thompson1965inequality}
Colin~J. Thompson.
\newblock Inequality with applications in statistical mechanics.
\newblock {\em Journal of Mathematical Physics}, 6(11):1812--1813, 1965.

\bibitem{tian2025MatrixAnalysis}
Kevin Tian.
\newblock Cs395t: Continuous algorithms, part vi matrix analysis and concentration.
\newblock \url{https://kjtian.github.io/notes/CS%20395T%20(Spring%202025)/Part6_main.pdf}, 2025.

\bibitem{tian2025MMWU}
Kevin Tian.
\newblock Cs395t: Continuous algorithms, part viii matrix multiplicative weights.
\newblock \url{https://kjtian.github.io/notes/CS%20395T%20(Spring%202025)/Part8_main.pdf}, 2025.

\bibitem{tropp2015introduction}
Joel~A. Tropp.
\newblock An introduction to matrix concentration inequalities.
\newblock {\em Foundations and Trends{\textregistered} in Machine Learning}, 8(1-2):1--230, 2015.
\newblock \href {https://arxiv.org/abs/1501.01571} {\path{arXiv:1501.01571}}.

\bibitem{tropp2018second}
Joel~A. Tropp.
\newblock Second-order matrix concentration inequalities.
\newblock {\em Applied and Computational Harmonic Analysis}, 44(3):700--736, 2018.
\newblock \href {https://arxiv.org/abs/1305.0612} {\path{arXiv:1305.0612}}.

\bibitem{tsuda2005matrix}
Koji Tsuda, Gunnar R{\"a}tsch, and Manfred~K Warmuth.
\newblock Matrix exponentiated gradient updates for on-line learning and {B}regman projection.
\newblock {\em Journal of Machine Learning Research}, 6(Jun):995--1018, 2005.

\bibitem{van2017quantum}
Joran Van~Apeldoorn, Andr{\'a}s Gily{\'e}n, Sander Gribling, and Ronald de~Wolf.
\newblock Quantum {SDP}-solvers: Better upper and lower bounds.
\newblock In {\em Proceedings of the 58th Annual Symposium on Foundations of Computer Science}, pages 403--414. IEEE, 2017.

\bibitem{warmuth2006online}
Manfred~K. Warmuth and Dima Kuzmin.
\newblock Online variance minimization.
\newblock In {\em International Conference on Computational Learning Theory}, pages 514--528. Springer, 2006.

\bibitem{warmuth2008randomized}
Manfred~K. Warmuth and Dima Kuzmin.
\newblock Randomized online {PCA} algorithms with regret bounds that are logarithmic in the dimension.
\newblock {\em Journal of Machine Learning Research}, 9(10):2287--2320, 2008.

\bibitem{widder1946laplace}
D.V. Widder.
\newblock {\em The Laplace Transform}.
\newblock Princeton University Press, 1946.

\bibitem{wiebe2014quantum}
Nathan Wiebe, Ashish Kapoor, and Krysta~M Svore.
\newblock Quantum deep learning.
\newblock {\em arXiv:1412.3489}, 2014.

\bibitem{wigner1958distribution}
Eugene~P. Wigner.
\newblock On the distribution of the roots of certain symmetric matrices.
\newblock {\em Annals of Mathematics}, 67(2):325--327, 1958.

\bibitem{yang2020revisiting}
Feidiao Yang, Jiaqing Jiang, Jialin Zhang, and Xiaoming Sun.
\newblock Revisiting online quantum state learning.
\newblock In {\em Proceedings of the AAAI Conference on Artificial Intelligence}, volume~34, pages 6607--6614, 2020.

\bibitem{zhang2021experimental}
Ting Zhang, Jinzhao Sun, Xiao-Xu Fang, Xiao-Ming Zhang, Xiao Yuan, and He~Lu.
\newblock Experimental quantum state measurement with classical shadows.
\newblock {\em Physical Review Letters}, 127(20):200501, 2021.

\bibitem{zhang2022pde}
Zhiyu Zhang, Ashok Cutkosky, and Ioannis Paschalidis.
\newblock {PDE}-based optimal strategy for unconstrained online learning.
\newblock In {\em International Conference on Machine Learning}, pages 26085--26115. PMLR, 2022.

\bibitem{zhang2022optimal}
Zhiyu Zhang, Ashok Cutkosky, and Yannis Paschalidis.
\newblock Optimal comparator adaptive online learning with switching cost.
\newblock {\em Advances in Neural Information Processing Systems}, 35:23936--23950, 2022.

\bibitem{zhang2024improving}
Zhiyu Zhang, Heng Yang, Ashok Cutkosky, and Ioannis~C. Paschalidis.
\newblock Improving adaptive online learning using refined discretization.
\newblock In {\em International Conference on Algorithmic Learning Theory}, pages 1208--1233. PMLR, 2024.

\bibitem{zhao2024learning}
Haimeng Zhao, Laura Lewis, Ishaan Kannan, Yihui Quek, Hsin-Yuan Huang, and Matthias~C. Caro.
\newblock Learning quantum states and unitaries of bounded gate complexity.
\newblock {\em PRX Quantum}, 5(4):040306, 2024.

\bibitem{zhong2020quantum}
Han-Sen Zhong, Hui Wang, Yu-Hao Deng, Ming-Cheng Chen, Li-Chao Peng, Yi-Han Luo, Jian Qin, Dian Wu, Xing Ding, Yi~Hu, Peng Hu, Xiao-Yan Yang, Wei-Jun Zhang, Hao Li, Yuxuan Li, Xiao Jiang, Lin Gan, Guangwen Yang, Lixing You, Zhen Wang, Li~Li, Nai-Le Liu, Chao-Yang Lu, and Jian-Wei Pan.
\newblock Quantum computational advantage using photons.
\newblock {\em Science}, 370(6523):1460--1463, 2020.

\bibitem{zimmert2022pushing}
Julian Zimmert, Naman Agarwal, and Satyen Kale.
\newblock Pushing the efficiency-regret pareto frontier for online learning of portfolios and quantum states.
\newblock In {\em Conference on Learning Theory}, pages 182--226. PMLR, 2022.

\end{thebibliography}

\appendix
\section{Counterexample for the One-Sided Jensen's Trace Inequality}
\label{app:counterexample}

Here we show that the inequality \eqref{eq:def-jensen} does not hold for the convex function $\Phi(x)=|x|$. We choose $2 \times 2$ symmetric matrices that do not commute. Let $S = \begin{pmatrix} 0 & 1 \\ 1 & 0 \end{pmatrix}$ and $G = \begin{pmatrix} 1 & 0 \\ 0 & -1 \end{pmatrix}$. The operator norm of $G$ is $\|G\|_{\op} = 1$, which allows us to set $l=1$.

First, we evaluate the left-hand side of the inequality, $\tr[|S+G|]$. The sum is $S+G = \begin{pmatrix} 1 & 1 \\ 1 & -1 \end{pmatrix}$. Its characteristic equation is $\det(S+G-\lambda I) = (1-\lambda)(-1-\lambda)-1 = \lambda^2-2=0$, yielding eigenvalues $\lambda_{1,2} = \pm\sqrt{2}$. By the spectral theorem, the matrix $|S+G|$ has eigenvalues $|\lambda_{1,2}| = \sqrt{2}$. The trace, being the sum of the eigenvalues, is therefore
\[
 \tr[|S+G|] = \sqrt{2}+\sqrt{2} = 2\sqrt{2}.
\]

Next, we evaluate the right-hand side, $\tr\spar{\frac{I+G}{2}|S+I|+\frac{I-G}{2}|S-I|}$. The coefficient matrices are $\frac{I+G}{2} = \begin{pmatrix} 1 & 0 \\ 0 & 0 \end{pmatrix}$ and $\frac{I-G}{2} = \begin{pmatrix} 0 & 0 \\ 0 & 1 \end{pmatrix}$. For the absolute value terms, we analyze their spectra. The matrix $S+I = \begin{pmatrix} 1 & 1 \\ 1 & 1 \end{pmatrix}$ has eigenvalues $0$ and $2$. Since it is positive semidefinite, $|S+I| = S+I$. The matrix $S-I = \begin{pmatrix} -1 & 1 \\ 1 & -1 \end{pmatrix}$ has eigenvalues $0$ and $-2$. Thus, $|S-I| = -(S-I) = I-S = \begin{pmatrix} 1 & -1 \\ -1 & 1 \end{pmatrix}$.
Substituting these into the right-hand side expression, the argument of the trace becomes
\begin{align*}
\frac{I+G}{2}|S+I|+\frac{I-G}{2}|S-I| &= \begin{pmatrix} 1 & 0 \\ 0 & 0 \end{pmatrix}\begin{pmatrix} 1 & 1 \\ 1 & 1 \end{pmatrix} + \begin{pmatrix} 0 & 0 \\ 0 & 1 \end{pmatrix}\begin{pmatrix} 1 & -1 \\ -1 & 1 \end{pmatrix} \\
&= \begin{pmatrix} 1 & 1 \\ 0 & 0 \end{pmatrix} + \begin{pmatrix} 0 & 0 \\ -1 & 1 \end{pmatrix} = \begin{pmatrix} 1 & 1 \\ -1 & 1 \end{pmatrix}.
\end{align*}
The trace of this resulting matrix is
\[
\tr\spar{\begin{pmatrix} 1 & 1 \\ -1 & 1 \end{pmatrix}}=2.
\]
Comparing the two sides, we have $\text{LHS} =2\sqrt{2}  > 2= \text{RHS}$, which violates the inequality.

\section{Omitted Proofs for Regret Analysis}

\subsection{Proofs for regret upper bound}\label{section:reg_upper_proofs}

Below are the proofs omitted from Section~\ref{sec:algorithm}.

\reduction*

\begin{proof}[Proof of Lemma~\ref{lemma:reduction}]\label{app:proof_reduction}
The first condition is straightforward: $\norm{\tilde G_t}_\op\leq \norm{\bar G_t}_\op$ since $\tilde G_t$ either equals $\bar G_t$ or is its projection to a subspace. Then, by the triangle inequality, 
\begin{equation*}
\norm{\bar G_t}_{\op}\leq \norm{G_t}_{\op}+\abs{\inner{G_t}{X_t}}\leq \norm{G_t}_{\op}+\norm{G_t}_\op\norm{X_t}_\tr=2\norm{G_t}_{\op}. 
\end{equation*}

To prove the second condition in the lemma, define the intermediate quantity $X_t^+\defeq \sum_{i=1}^d\max\{0,\lambda_{t,i}\} v_{t,i} v_{t,i}^\herm$, which is PSD but not normalized (trace norm not equal to $1$). Due to standard facts in convex optimization \cite[Section~8.1.1]{boyd2004convex}, $X^+_t$ is the projection of $\tilde X_t$ to the PSD cone with respect to the Frobenius norm. By the first order optimality condition, $\inner{X^+_t-\tilde X_t}{X^+_t-X}\leq 0$ for all PSD matrix $X$. Since $U_t$ is the normalized version of $\tilde X_t-X^+_t$, we thus have $\inner{U_t}{X^+_t-X}\geq 0$ for all $X\in\Delta_{d\times d}$. 

The rest of the analysis has two steps. The first step is to show that $\inner{G_t}{X_t-X}= \inner{\bar G_t}{X^+_t-X}$ for all $X\in\Delta_{d\times d}$. This follows from
\begin{align*}
&\inner{G_t}{X_t-X}-\inner{\bar G_t}{X^+_t-X}\\
=~&\inner{G_t}{X_t-X^+_t}+\inner{G_t}{X_t}\inner{I}{X^+_t-X}\tag{definition of $\bar G_t$}\\
=~&\inner{G_t}{\frac{X^+_t}{\norm{X^+_t}_\tr}-X^+_t}+\inner{G_t}{\frac{X^+_t}{\norm{X^+_t}_\tr}}\inner{I}{X^+_t-X}\tag{definition of $X_t$}\\
=~&\inner{G_t}{\frac{X^+_t}{\norm{X^+_t}_\tr}-X^+_t}+\inner{G_t}{\frac{X^+_t}{\norm{X^+_t}_\tr}}\rpar{\norm{X^+_t}_\tr-\norm{X}_\tr}\\
=~&\inner{G_t}{\frac{X^+_t}{\norm{X^+_t}_\tr}-X^+_t}+\inner{G_t}{X^+_t}-\inner{G_t}{\frac{X^+_t}{\norm{X^+_t}_\tr}}\tag{$\norm{X}_\tr=1$}\\
=~&0.
\end{align*}

The second step is to show that $\inner{\bar G_t}{X^+_t-X}\leq \inner{\tilde G_t}{\tilde X_t-X}$ for all $X\in\Delta_{d\times d}$. To this end, there are two cases regarding $\tilde G_t$. 

\vspace{0.5em}\noindent\textbf{Case 1. }If $\tilde G_t=\bar G_t$, then by definition $\inner{\bar G_t}{U_t}\geq 0$. Since $U_t$ is the normalized version 
of $\tilde X_t-X^+_t$, we have $\inner{\bar G_t}{\tilde X_t-X^+_t}\geq 0$, therefore
\begin{equation*}
\inner{\bar G_t}{X^+_t-X}\leq \inner{\bar G_t}{\tilde X_t-X}=\inner{\tilde G_t}{\tilde X_t-X}.
\end{equation*}

\vspace{0.5em}\noindent\textbf{Case 2. }Otherwise, $\tilde G_t=
\bar G_t-\inner{\bar G_t}{U_t}U_t$ and $\inner{\bar G_t}{U_t}\leq 0$. Therefore
\begin{align*}
&\inner{\bar G_t}{X^+_t-X}- \inner{\tilde G_t}{\tilde X_t-X}\\
=~&\inner{\bar G_t}{X^+_t-X}- \inner{\bar G_t-\inner{\bar G_t}{U_t}U_t}{\tilde X_t-X}\tag{definition of $\tilde G_t$}\\
=~&\inner{\bar G_t}{X^+_t-\tilde X_t}+ \inner{\bar G_t}{U_t}\inner{U_t}{\tilde X_t-X}\\
=~&\inner{\bar G_t}{X^+_t-\tilde X_t}+ \inner{\bar G_t}{U_t}\inner{U_t}{\tilde X_t-X^+_t}+\inner{\bar G_t}{U_t}\inner{U_t}{X^+_t-X}\\
\leq~&\inner{\bar G_t}{X^+_t-\tilde X_t}+ \inner{\bar G_t}{U_t}\inner{U_t}{\tilde X_t-X^+_t}\tag{$\inner{\bar G_t}{U_t}\leq 0$, $\inner{U_t}{X^+_t-X}\geq 0$}\\
=~&\inner{\bar G_t}{X^+_t-\tilde X_t}+ \inner{\bar G_t}{\frac{\tilde X_t-X^+_t}{\norm{\tilde X_t-X^+_t}_F}}\inner{\frac{\tilde X_t-X^+_t}{\norm{\tilde X_t-X^+_t}_F}}{\tilde X_t-X^+_t}\tag{definition of $U_t$}\\
=~&0.
\end{align*}

Combining the two steps completes the proof. 
\end{proof}

\expsquare*

\begin{proof}[Proof of Theorem~\ref{thm:lea_worse}]
By Corollary~\ref{thm:phi_itself} and Lemma~\ref{lemma:phi_laplace}, $\Phi^\expsq_t$ satisfies both conditions in Theorem~\ref{thm:olo_master}. Therefore, for the intermediate quantities $\tilde G_t$ and $\tilde X_t$ in Algorithm~\ref{alg:reduction} we have for all comparators $X\in\Delta_{d\times d}$ with eigenvalues $\lambda_1,\ldots,\lambda_d$,
\begin{equation*}
\sum_{t=1}^T\inner{\tilde G_t}{\tilde X_t-X}\leq \inner{\tilde G_1}{\tilde X_1}+\tr\spar{\Phi^\expsq_{1}(-\tilde G_1)}+\sum_{i=1}^d\Phi^{\expsq,*}_T(\lambda_i),
\end{equation*}
where $\Phi^{\expsq,*}_T$ denotes the Fenchel conjugate of $\Phi^{\expsq}_T$. 

$\Phi^{\expsq}_t$ is an even function for all $t$, therefore $\tilde X_1=0\in\mathbb{H}_{d\times d}$. $\norm{\tilde G_1}_\op\leq \eps$ therefore $\tr\spar{\Phi^\expsq_{1}(-\tilde G_1)}\leq d\Phi^\expsq_{1}(\eps)=\sqrt{e}\eps$. The computation of the Fenchel conjugate is due to Ref.~\cite[Lemma~18]{orabona2016coin}: for all $\lambda\geq 0$,
\begin{equation*}
\Phi^{\expsq,*}_T(\lambda)\leq \eps\lambda\rpar{\sqrt{2T\log\rpar{1+\lambda d}}+\sqrt{2T\log T}}.
\end{equation*}
Combining the above and Lemma~\ref{lemma:reduction} leads to the following result: for all $X\in\Delta_{d\times d}$ with eigenvalues $\lambda_1,\ldots,\lambda_d\geq 0$ satisfying $\sum_{i=1}^d\lambda_i=1$, the considered algorithm guarantees
\begin{equation*}
\reg_T(X)\leq 2\sqrt{e}l+2\sqrt{2}l\sqrt{T\log T}+2\sqrt{2}l\sqrt{T}\sum_{i=1}^d\lambda_i\sqrt{\log\rpar{1+\lambda_id}},
\end{equation*}
where
\begin{align*}
\sum_{i=1}^d\lambda_i\sqrt{\log\rpar{1+\lambda_id}}&=\sum_{i=1}^d\sqrt{\lambda_i}\sqrt{\lambda_i\log\rpar{1+\lambda_id}}\\
&\leq \sqrt{\sum_{i=1}^d\lambda_i\log\rpar{1+\lambda_id}}\tag{Cauchy-Schwarz}\\
&\leq \sqrt{1+\sum_{i=1}^d\lambda_i\log\frac{\lambda_i}{d^{-1}}}\tag{$\log(1+x)\leq x^{-1}+\log x$}\\
&=\sqrt{1+S(X||d^{-1}I_d)}.
\end{align*}

The proof is complete by reorganizing the terms.
\end{proof}

\main*

\begin{proof}[Proof of Theorem~\ref{thm:lea_main}]
The proof mirrors that of Theorem~\ref{thm:lea_worse}. For the intermediate quantities $\tilde G_t$ and $\tilde X_t$ in Algorithm~\ref{alg:reduction} we have for all $X\in\Delta_{d\times d}$ with eigenvalues $\lambda_1,\ldots,\lambda_d$,
\begin{equation*}
\sum_{t=1}^T\inner{\tilde G_t}{\tilde X_t-X}\leq \inner{\tilde G_1}{\tilde X_1}+\tr\spar{\Phi^\erfi_{1}(-\tilde G_1)}+\sum_{i=1}^d\Phi^{\erfi,*}_T(\lambda_i),
\end{equation*}
where $\Phi^{\erfi,*}_T$ denotes the Fenchel conjugate of $\Phi^{\erfi}_T$. 

$\tilde X_1=0\in\mathbb{H}_{d\times d}$, and $\tr\spar{\Phi^\erfi_{1}(-\tilde G_1)}\leq d\Phi^\erfi_{1}(\eps)\leq 0$. The computation of the Fenchel conjugate is due to Ref.~\cite[Theorem~4]{zhang2022pde}: for all $\lambda\geq 0$,
\begin{equation*}
\Phi^{\erfi,*}_T(\lambda)\leq \eps\sqrt{T}\spar{d^{-1}+\sqrt{2}\lambda\rpar{\sqrt{\log\rpar{1+\frac{\lambda}{\sqrt{2}d^{-1}}}}+1}}.
\end{equation*}
Combining the above and Lemma~\ref{lemma:reduction}: for all $X\in\Delta_{d\times d}$ with eigenvalues $\lambda_1,\ldots,\lambda_d\geq 0$ satisfying $\sum_{i=1}^d\lambda_i=1$, the considered algorithm guarantees
\begin{equation*}
\reg_T(X)\leq 2(1+\sqrt{2})l\sqrt{T}+2\sqrt{2}l\sqrt{T}\sum_{i=1}^d\lambda_i\sqrt{\log\rpar{1+\frac{\lambda_i}{\sqrt{2}d^{-1}}}},
\end{equation*}
where similar to the proof of Theorem~\ref{thm:lea_worse},
\begin{equation*}
\sum_{i=1}^d\lambda_i\sqrt{\log\rpar{1+\frac{\lambda_i}{\sqrt{2}d^{-1}}}}\leq\sqrt{2+S(X||d^{-1}I_d)}.\qedhere
\end{equation*}
\end{proof}

\subsection{Proofs for regret lower bound}\label{section:regret_lower_proofs}

Below are the proofs omitted from Section~\ref{sec:regret_lower}. We first summarize the Berry-Esseen theorem. 

\begin{lemma}[Berry-Esseen theorem]\label{lemma:berry_esseen}
Let $X_1,\ldots,X_n$ be IID random variables satisfying $\E[X_1]=0$, $\E\spar{X_1^2}=\sigma^2>0$ and $\E\spar{\abs{X_1}^3}=\rho<\infty$. Let $Y_n=\frac{1}{n}\sum_{i=1}^nX_i$, and let $F_n$ be the CDF of $\frac{Y_n\sqrt{n}}{\sigma}$. Then, for all $x$ and $n$, we have
\begin{equation*}
\abs{F_n(x)-\Phi(x)}\leq\frac{\rho}{2\sigma^3\sqrt{n}},
\end{equation*}
where $\Phi$ is the standard normal CDF. 
\end{lemma}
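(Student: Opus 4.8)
This is the classical Berry--Esseen theorem, and rather than reproving it from scratch one could simply cite Esseen's original work (and a standard textbook, plus the subsequent literature that optimizes the absolute constant). For completeness, the plan would be to reproduce the standard characteristic-function argument. First I would normalize: replacing each $X_i$ by $X_i/\sigma$ leaves $Y_n\sqrt{n}/\sigma$ unchanged and reduces to the case $\sigma = 1$, with the third absolute moment becoming $\rho_* \defeq \rho/\sigma^3$, which satisfies $\rho_* \ge 1$ by Lyapunov's inequality; the target becomes $\sup_x |F_n(x) - \Phi(x)| \le \rho_*/(2\sqrt{n})$. Writing $\varphi(t) \defeq \E[e^{itX_1}]$ for the (rescaled) characteristic function, the characteristic function of $F_n$ is $\varphi_n(t) = \varphi(t/\sqrt{n})^n$, while the standard normal has characteristic function $e^{-t^2/2}$.

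The second step is Esseen's smoothing inequality: for any distribution function $F$ and any $T > 0$,
\[
\sup_x |F(x) - \Phi(x)| \;\le\; \frac{1}{\pi}\int_{-T}^{T}\left|\frac{\widehat{F}(t) - e^{-t^2/2}}{t}\right|\diff t \;+\; \frac{c_0}{T},
\]
where $c_0$ is an absolute constant determined by $\sup_x|\Phi'(x)| = (2\pi)^{-1/2}$ (via the Fej\'er kernel used in the smoothing). Applying this with $F = F_n$ reduces everything to a pointwise estimate of $|\varphi_n(t) - e^{-t^2/2}|$ on the window $|t|\le T$.

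The third, analytic step is a Taylor expansion of $\varphi$ near the origin. Since $\E[X_1]=0$, $\E[X_1^2]=1$, and $\E|X_1|^3 = \rho_* < \infty$, one has $\varphi(s) = 1 - s^2/2 + \theta(s)$ with $|\theta(s)| \le \rho_*|s|^3/6$, hence $|\varphi(s)| \le 1 - s^2/2 + \rho_*|s|^3/6 \le e^{-s^2/3}$ whenever $|s| \le 1/\rho_*$. Choosing $T \asymp \sqrt{n}/\rho_*$ controls $|\varphi(t/\sqrt n)|$ on the whole window, and the elementary telescoping bound $|a^n - b^n| \le n\max(|a|,|b|)^{n-1}|a-b|$ with $a = \varphi(t/\sqrt n)$ and $b = e^{-t^2/(2n)}$ gives $|\varphi_n(t) - e^{-t^2/2}| \le C_1\,(\rho_*/\sqrt{n})\,|t|^3 e^{-ct^2}$ on $|t|\le T$, for explicit $C_1, c > 0$ (the $|a-b|$ factor comes from the Taylor remainder, the $\max(|a|,|b|)^{n-1}$ factor from the above exponential bound). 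Dividing by $|t|$, integrating the resulting Gaussian-type integral over $\mathbb{R}$, and adding the $c_0/T = c_0\rho_*/\sqrt{n}$ term yields a bound of the form $(\text{const})\cdot\rho_*/\sqrt{n}$, i.e.\ $\mathcal{O}(\rho/(\sigma^3\sqrt{n}))$.

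The main obstacle is entirely quantitative: obtaining the \emph{sharp} leading constant $1/2$. The elementary chain above, with crude constants in the smoothing inequality and in the Taylor remainder, only delivers a constant of order unity; squeezing it down to $1/2$ requires the sharpened form of the smoothing inequality, a careful optimization of the truncation level $T$, and more delicate remainder estimates, essentially following Esseen and the later constant-improving literature. Since only the $\mathcal{O}(\rho/(\sigma^3\sqrt{n}))$ rate is actually used downstream — as the Berry--Esseen input to the anti-concentration argument for unimodal order statistics (Lemma~\ref{lemma:anti_concentration}) — in practice I would simply cite these established bounds rather than reprove the optimal constant from first principles.
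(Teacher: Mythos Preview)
The paper does not prove this lemma at all; it merely states the Berry--Esseen theorem as a classical result (``We first summarize the Berry--Esseen theorem'') and then uses it as a black box in the proof of Lemma~\ref{lemma:anti_concentration}. Your proposal is entirely consistent with this: you correctly identify it as a citation lemma, and your closing remark that one should simply cite the established bounds rather than reprove the sharp constant is exactly what the paper does. Your sketch of the characteristic-function/smoothing-inequality argument is accurate and goes beyond what the paper provides, and your caveat about the constant $1/2$ is well placed --- the elementary version of the argument indeed only yields an $O(1)$ constant, and the paper implicitly relies on the literature for the stated $1/2$.
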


Next is the regret lower bound, based on the anti-concentration result from Lemma~\ref{lemma:anti_concentration}.

\reglower*

\begin{proof}[Proof of Theorem~\ref{thm:reg_lower}]
Let $\Delta^r_{d\times d}\subseteq \Delta_{d\times d}$ be the collection of all $X$ satisfying the constraint $S(X||d^{-1}I_d)\leq r$, and similarly, let $\Delta^r_d\subseteq\Delta_d$ be the collection of all probability vectors $u$ satisfying $\kl(u||d^{-1}\bm{1}_d)\leq r$. Consider a randomized matrix LEA adversary whose output $G_t\in\mathbb{H}_{d\times d}$ is diagonal, and each diagonal entry of $G_t$ is sampled independently from $\mathrm{Uniform}([-1,1])$. For the regret of any algorithm $\calA$, we have
\begin{align*}
\E\spar{\max_{X\in\Delta^r_{d\times d}}\reg_T(X)}&=\sum_{t=1}^T\E\spar{\inner{G_t}{X_t}}+\E\spar{\max_{X\in\Delta^r_{d\times d}}\sum_{t=1}^T\inner{-G_t}{X}}\\
&=\E\spar{\max_{X\in\Delta^r_{d\times d}}\inner{-\sum_{t=1}^TG_t}{X}}\\
&\geq\E\Bigg[\max_{u\in\Delta^r_{d}}\inner{\underbrace{\mathrm{diag}\rpar{-\sum_{t=1}^TG_t}}_{\eqdef Y_T\in\R^d}}{u}\Bigg],\tag{consider diagonal $X$}
\end{align*}
where all expectations are with respect to the randomness of the adversary. Therefore it suffices to lower-bound the RHS, and after that, there exists a deterministic adversary and a comparator $X\in\Delta^r_{d\times d}$ inducing a regret at least this value. 

Next, we collect some basic facts: for $Z \sim \mathrm{Uniform}([-1, 1])$, we have $\E[Z]=0$, $\E[Z^2]=\frac{1}{3}$, and $\E[|Z^3|]=\frac{1}{4}$. For the vector $Y_T$ defined above, each coordinate is the sum of $T$ independent copies of $Z$, i.e., each coordinate follows a centered Irwin-Hall distribution which is unimodal in the sense of Ref.~\cite{ali1965some,david2004order}: its cumulative distribution function is convex on $\R_{<0}$, and concave on $\R_{>0}$. It means Lemma~\ref{lemma:anti_concentration} can be applied to characterize the anti-concentration of $Y_T$. 

Now we pick $u\in\Delta^r_d$ in a sample-dependent manner. Define a constant $k\defeq\lceil d \exp(-r) \rceil$, and let $u_k(Y_T)\in\Delta_d$ be the probability vector that places uniform mass on the indices of the top-$k$ largest entries of $Y_T$ (and zero mass elsewhere). Notice that in particular, $u_k(Y_T)\in\Delta^r_d$ as $\kl(u_k(Y_T)\|d^{-1}\bm{1}_d)=\log(d/k) \le r$. With $Y_{T,(j)}$ being the $j$-th largest entry of the vector $Y_T$, we then have
\begin{align*}
\E\spar{\max_{u\in\Delta^r_d}\inner{Y_T}{u}} \geq \E\spar{\inner{Y_T}{u_k(Y_T)}} = \E\spar{\frac{1}{k}\sum_{j=1}^k Y_{T,(j)}} = \frac{1}{k}\sum_{j=1}^k \E\spar{Y_{T,(j)}}.
\end{align*}

Observe that $r$ and $n$ larger than certain absolute constants would satisfy the requirement of Lemma~\ref{lemma:anti_concentration}, therefore applying it yields
\begin{align*}
\E\spar{\max_{u\in\Delta^c_d}\inner{Y_T}{u}}&\geq \sqrt{\frac{T}{3}}\spar{\sqrt{2\log\frac{d}{\sqrt{2\pi}(d\exp(-r)+2)}}-1}\\
&\geq \sqrt{\frac{T}{3}}\spar{\sqrt{2\rpar{\log\frac{1}{\sqrt{2\pi}\exp(-r)}}-\frac{2}{\sqrt{2\pi}\exp(-r)}\frac{2}{d}}-1}\tag{$\log\frac{1}{x}$ is convex}\\
&\geq \sqrt{\frac{T}{3}}\spar{\sqrt{2r-2\log(2\pi)-\frac{2\sqrt{2}}{\sqrt{\pi}}}-1},\tag{$r\leq\log d$}
\end{align*}
where the terms under the square root is positive for $r$ larger than some absolute constant.  
\end{proof}

\anti*

\begin{proof}[Proof of Lemma~\ref{lemma:anti_concentration}]
By definition, $\calD_n$ is symmetric and unimodal. Let $F_n$ be the CDF of $\calD_n$, and let $F_n^{-1}$ be its inverse which is convex on $\spar{\frac{1}{2},1}$. 

A basic result in order statistics \cite[Section~4.5, Eq.(4.5.9)]{david2004order} states the following: if a distribution with CDF $F$ is symmetric and unimodal, then within IID samples of size $d$, for any $j \geq \frac{d+1}{2}$ we have
\begin{equation*}
\E\spar{Z_{(j)}}\geq F^{-1}\left(\frac{j}{d+1}\right).
\end{equation*}
We apply this to $F_n$ as the index $j$ we are interested in is large enough. Since $F_n^{-1}$ is convex on $\spar{\frac{1}{2},1}$, we further obtain by Jensen's inequality,
\begin{equation}\label{eq:cdf_inverse}
\frac{1}{k}\sum_{j=d-k+1}^{d} \E\spar{Z_{(j)}} \geq  F_n^{-1}\left(\frac{1}{k}\sum_{j=d-k+1}^{d} \frac{j}{d+1}\right) = F_n^{-1}\left(1-\frac{k+1}{2(d+1)}\right).
\end{equation}

Next we use the Berry-Esseen theorem (Lemma~\ref{lemma:berry_esseen}) to relate $F_n$ to the CDF $\Phi$ of the standard normal distribution. For all $x\in\R$,
\begin{equation*}
\abs{\P_{Z\sim\calD_n}\rpar{\frac{Z}{\sigma\sqrt{n}}\leq x}-\Phi(x)}\leq\frac{\rho}{2\sigma^3\sqrt{n}},
\end{equation*}
therefore for all $x\in\R$,
\begin{equation*}
F_n(x)=\P_{Z\sim\calD_n}\rpar{Z\leq x}\leq\Phi\rpar{\frac{x}{\sigma\sqrt{n}}}+\underbrace{\frac{\rho}{2\sigma^3}}_{\eqdef\gamma}\frac{1}{\sqrt{n}}.
\end{equation*}

To proceed, we use classical estimates on the Gaussian tail. Let $\phi$ be the probability density function (PDF) of the standard normal distribution, and consider $\frac{1-\Phi(x)}{\phi(x)}$ which is called the Mills ratio. It is known \cite{gordon1941values} that $\frac{1-\Phi(x)}{\phi(x)}\geq \frac{x}{x^2+1}$ for all $x\geq 0$. Since $\frac{x}{x^2+1}\geq \exp(-x-\frac{1}{2})$ for all $x\geq 1$, we further have for such $x$,
\begin{equation*}
1-\Phi(x)\geq \frac{1}{\sqrt{2\pi}}\exp\rpar{-\frac{1}{2}(x+1)^2}.
\end{equation*}
Combining it with the above, we get for all $x\geq \sigma\sqrt{n}$, 
\begin{equation*}
1-F_n(x)\geq 1-\Phi\rpar{\frac{x}{\sigma\sqrt{n}}}-\frac{\gamma}{\sqrt{n}}\geq \frac{1}{\sqrt{2\pi}}\exp\rpar{-\frac{1}{2}\rpar{\frac{x}{\sigma\sqrt{n}}+1}^2}-\frac{\gamma}{\sqrt{n}}.
\end{equation*}
Inverting this inequality gives a lower bound on $F_n^{-1}$. For any $y \leq \frac{1}{\sqrt{2\pi}e^2} - \frac{\gamma}{\sqrt{n}}$,
\begin{equation*}
F_n^{-1}(1-y)\geq \sigma\sqrt{n}\spar{\sqrt{2\log\frac{1}{\sqrt{2\pi}\left(y+\gamma n^{-1/2}\right)}}-1}.
\end{equation*}

By our assumption on $k$, $\frac{k+1}{2(d+1)}\leq \frac{1}{2\sqrt{2\pi}e^2}$. Furthermore, our assumption on $n$ yields $\frac{\gamma}{\sqrt{n}}\leq\frac{1}{2(d+1)}\leq \frac{k+1}{2(d+1)}$. Therefore we combine the above with Eq.~\eqref{eq:cdf_inverse} to obtain
\begin{align*}
\frac{1}{k}\sum_{j=d-k+1}^{d} \E\spar{Z_{(j)}} &\geq \sigma\sqrt{n}\spar{\sqrt{2\log\frac{1}{\sqrt{2\pi}\left(\frac{k+1}{2(d+1)}+\frac{\gamma}{\sqrt{n}}\right)}}-1}\\
&\geq \sigma\sqrt{n}\spar{\sqrt{2\log\frac{d+1}{\sqrt{2\pi}(k+1)}}-1}\tag{$\frac{\gamma}{\sqrt{n}}\leq \frac{k+1}{2(d+1)}$}\\
&\geq \sigma\sqrt{n}\spar{\sqrt{2\log\frac{d}{\sqrt{2\pi}(k+1)}}-1}.\qedhere
\end{align*}
\end{proof}

\section{Improved Memory Complexity for Restricted Classes of Matrices}\label{app:imp_memory_bound}
In Section~\ref{sec:mem_lower}, we show that the scaling of the memory overhead lower bound is decided by the logarithm in the packing number of $\mathcal{X}'$ while there exists an algorithm that uses memory overhead logarithm in the covering number of $\mathcal{X}'\subseteq\mathcal{X}$ to achieve regret sublinear in $T$.

To begin with, we recap the concept of the packing number and the covering number of a set. Given a set $\mathcal{X}'$ with a distance metric $d(\cdot)$, an \emph{$\eps$-packing} $P\subseteq\mathcal{X}'$ satisfies $d(x,x')\geq\eps$ for any pair $x,x'\in\mathcal{X}'$. The \emph{$\eps$-packing number} $\mathcal{P}(\mathcal{X}',d,\eps)$ is defined to be the maximal size of $\eps$-packing. An \emph{$\eps$-covering} $C\subseteq\mathcal{X}'$ satisfies $d(x,C)\leq\eps$ for any $x\in C$. The \emph{$\eps$-covering number} $\mathcal{C}(\mathcal{X}',d,\eps)$ is defined to be the minimal size of $\eps$-covering. In addition, we have the following relationship:
\begin{align*}
\mathcal{P}(\mathcal{X}',d,2\eps)\leq\mathcal{C}(\mathcal{X}',d,\eps)\leq\mathcal{P}(\mathcal{X}',d,\eps).
\end{align*}

In the following, we can generalize Theorem.~\ref{thm:mem_lower} to a generalized lower bound from the packing number. 

\begin{theorem}\label{thm:mem_lower_improved}
For the matrix LEA problem with matrices chosen from a particular set $\mathcal{X}'\subseteq\calX$ and an adversary, any online algorithm achieving regret $\mathcal{O}(\eps T)$ requires memory size of at least $\log(\mathcal{P}(\mathcal{X}',\norm{\cdot}_1,2\eps)/10)$.
\end{theorem}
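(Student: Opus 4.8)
The plan is to replay the argument behind Theorem~\ref{thm:mem_lower}, replacing the fixed $2^{-3}$-separated packing of $\Delta_{d\times d}$ by a $2\eps$-packing $P=\{X_1,\dots,X_N\}$ of the restricted class $\mathcal{X}'$, where $N\defeq\mathcal{P}(\mathcal{X}',\norm{\cdot}_1,2\eps)$, so that $\norm{X_i-X_j}_1\ge 2\eps$ for $i\neq j$. First I would let the adversary draw the comparator $X^*$ uniformly from $P$ and, after the learner commits $X_t$, return the loss matrix $G_t\defeq\sgn(X_t-X^*)$. Since $X_t-X^*$ is Hermitian, diagonalizing it shows $\norm{G_t}_\op\le 1$, so $G_t$ is a legitimate loss matrix (with $l=1$), and $\inner{G_t}{X_t-X^*}=\norm{X_t-X^*}_1$. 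Summing over the $T$ rounds, the regret against $X^*$ equals $\sum_{t=1}^T\norm{X_t-X^*}_1$, so it suffices to show that a memory-bounded learner must stay $\Omega(\eps)$-far from $X^*$ in every round for most choices of $X^*$.

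Next I would model the learner as a finite-state machine with at most $2^m$ states whose prediction in any round is a function of the current state; call the reachable predictions $\{X_s\}$, of which there are at most $2^m$. The key observation is that the $2\eps$-separation of $P$ forces each fixed matrix $X_s$ into the open $\norm{\cdot}_1$-ball of radius $\eps$ around \emph{at most one} net point, by the triangle inequality. Hence at most $2^m$ of the $N$ packing points lie within $\eps$ of some reachable prediction, so for uniformly random $X^*\in P$ the probability that any reachable prediction is $\eps$-close to $X^*$ is at most $2^m/N$. If $m<\log(N/10)$, i.e.\ $2^m<N/10$, then with probability exceeding $9/10$ over $X^*$ every reachable prediction — in particular $X_t$ for all $t$ — satisfies $\norm{X_t-X^*}_1\ge\eps$, giving $\reg_T(X^*)\ge\eps T$. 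Derandomizing, some fixed $X^*\in P$ induces a deterministic adversary forcing regret $\ge\eps T$, so an algorithm with regret $o(\eps T)$ (in particular $\mathcal{O}(\eps T)$ with a small enough hidden constant) must use at least $\log(\mathcal{P}(\mathcal{X}',\norm{\cdot}_1,2\eps)/10)$ bits; taking $\mathcal{X}'=\mathcal{X}=\Delta_{d\times d}$, where a constant-scale packing has size $2^{\Theta(d^2)}$~\cite{haah2016sample}, recovers Theorem~\ref{thm:mem_lower}.

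The main point to get right is the memory model itself: the counting bound relies on the prediction being a function of the $m$-bit state \emph{alone}, so I would be careful to note that a learner whose output also depends on the round index must spend extra bits storing a counter, and that the ``query the past losses'' allowance of Theorem~\ref{thm:mem_lower} does not circumvent this, since such queries do not enlarge the state space. The remaining ingredients — the $\sgn$-matrix loss identity $\inner{\sgn(A)}{A}=\norm{A}_1$ for Hermitian $A$, the triangle-inequality packing argument, and the union bound — are routine; randomized learners are handled by conditioning on their internal coins and running the probabilistic argument over the adversary's randomness only. A minor point is that the relation $\mathcal{P}(\mathcal{X}',\norm{\cdot}_1,2\eps)\le\mathcal{C}(\mathcal{X}',\norm{\cdot}_1,\eps)$ recorded in this appendix lets one restate the bound in covering-number language, matching the companion upper bound.
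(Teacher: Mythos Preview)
Your proposal is correct and follows essentially the same argument as the paper's own proof: both choose a maximal $2\eps$-packing of $\mathcal{X}'$, plant a uniformly random comparator $X^*$ from it, play the loss matrix $\sgn(X_t-X^*)$ so that the regret equals $\sum_t\norm{X_t-X^*}_1$, and then use the triangle-inequality observation that each of the $\le 2^m$ reachable predictions can lie within $\eps$ of at most one packing point. Your write-up is in fact more careful than the paper's in several places (verifying $\norm{G_t}_\op\le 1$, discussing the memory model and randomized learners, and noting the $\mathcal{O}(\eps T)$ versus $o(\eps T)$ distinction), but the underlying proof is the same.
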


\begin{proof}[Proof of Theorem~\ref{thm:mem_lower_improved}]
By the definition of $\eps$-packing number, there exists a subset of the $d$-dimensional density matrices $P=\{X_1, \ldots, X_N\}$ of size $N=\mathcal{P}(\mathcal{X}',\norm{\cdot}_1,2\eps)$ such that $\|X_i-X_j\|_1\ge 2\eps$ for any $i\neq j$.
The loss sequence is constructed as follows. 
\begin{itemize}
    \item Choose $X^*$ uniformly at random from $\{X_1, \ldots, X_N\}$.
    \item If the algorithm commits $X_t$ at time $t$, the adversary constructs the loss function $\ell_{X^*, X_t}(X) = \braket{\sgn(X_t-X^*), X}$.
\end{itemize}
The total regret of the algorithm is larger than \[
\sum_{t}\big(\ell_{X^*, X_t}(X_t) - \ell_{X^*, X_t}(X^*)\big)= \sum_t\braket{\sgn(X_t-X^*), X_t-X^*} = \sum_t \|X_t-X^*\|_1.
\]
Suppose that the algorithm uses $m$ bits of memory and hence has $2^m$ memory states in total. Denote the output matrix of the algorithm at memory state $s$ by $X_s$. For any output matrix $X_s$, there is at most one $X_i$ in the packing net $P$ such that $\|X_s-X_i\|_1<\eps$. For $S=\log(\mathcal{P}(\mathcal{X}',\norm{\cdot}_1,2\eps)/10)$ such that $2^m|P'|\le 0.1$, the distance $\|X_s-X^*\|_1\ge \eps$ for all $s$ with probability at least $1-2^m|P| \ge 0.9$. Then the regret is larger than $\eps T$, which contradicts the sublinear regret assumption. 
\end{proof}

On the other hand, we have the following upper bound:

\begin{theorem}
For the matrix LEA problem with matrices chosen from a particular set $\mathcal{X}'\subseteq\calX_d$ and an adversary, there is an online algorithm achieving regret $o(T)$ with memory size $\log(\mathcal{C}(\mathcal{X}',\norm{\cdot}_1,\eps))$.
\end{theorem}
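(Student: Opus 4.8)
The plan is to reduce the problem to learning from a finite pool of experts given by an $\epsilon$-cover of $\mathcal{X}'$, and then to exploit the model's assumption that the learner may re-read the whole loss history $G_1,\dots,G_{t-1}$ for free at every step: this lets a multiplicative-weights algorithm over the cover run while storing essentially nothing except a name for its current prediction.

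First I would fix a minimal $\epsilon$-cover $C=\{Y_1,\dots,Y_N\}\subseteq\mathcal{X}'$ in trace norm, so $N=\mathcal{C}(\mathcal{X}',\|\cdot\|_1,\epsilon)$ and every $X\in\mathcal{X}'$ has some $Y_{i^\star}$ with $\|X-Y_{i^\star}\|_1\le\epsilon$. Treat $Y_1,\dots,Y_N$ as $N$ experts with the linear losses $\langle G_t,Y_i\rangle\in[-l,l]$ (using $\|G_t\|_\op\le l$ and $\|Y_i\|_{\tr}=1$), and run standard Hedge/MWU with a suitable learning rate, predicting the mixture $X_t=\sum_{i=1}^N p_t(i)\,Y_i$, which lies in $\Delta_{d\times d}$ since a convex combination of density matrices is a density matrix. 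The key memory observation is that $p_t$ is a fixed deterministic function of the past, $p_t(i)\propto\exp(-\eta\sum_{s<t}\langle G_s,Y_i\rangle)$, so it can be recomputed on the fly from the queryable history and never stored; the cover $C$ is part of the algorithm's code, not its state. Hence the only information carried between rounds is (an encoding of) the current prediction, which after discretization lives in a set of size $\mathcal{O}(N)$, giving memory $\log\mathcal{C}(\mathcal{X}',\|\cdot\|_1,\epsilon)+\mathcal{O}(1)$; equivalently, one may output a single sampled cover point $Y_{i_t}$ with $i_t\sim p_t$, in which case the state is literally the index $i_t\in[N]$.

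For the regret I would combine two standard bounds. Hedge gives $\sum_t\langle G_t,X_t\rangle-\min_{i\in[N]}\sum_t\langle G_t,Y_i\rangle=\mathcal{O}(l\sqrt{T\log N})$ (in expectation, in the sampled variant). The covering property gives, for the true comparator $X\in\mathcal{X}'$ and its nearest cover point $Y_{i^\star}$, that $\sum_t\langle G_t,Y_{i^\star}\rangle-\sum_t\langle G_t,X\rangle\le l\epsilon T$. Adding these,
\[
\reg_T(X)=\mathcal{O}\!\left(l\sqrt{T\log\mathcal{C}(\mathcal{X}',\|\cdot\|_1,\epsilon)}\right)+l\epsilon T,
\]
which is $o(T)$ once $\epsilon=\epsilon_T\to0$ is chosen so that $\log\mathcal{C}(\mathcal{X}',\|\cdot\|_1,\epsilon_T)=o(T)$ (e.g.\ $\epsilon_T\to0$ arbitrarily slowly). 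This matches, up to the constant-scale gap between covering and packing numbers, the lower bound of Theorem~\ref{thm:mem_lower_improved}.

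The step needing the most care is the memory accounting itself: one must pin down exactly what ``free access to the history'' permits and then verify that, under that model, MWU over the cover is executable while retaining only $\log\mathcal{C}$ bits of state. A secondary subtlety is the adaptive adversary; the clean fix is to output the deterministic mixture $X_t$ so that no issue arises, whereas the randomized ``output one cover point'' variant needs the usual conditioning argument and yields the bound only in expectation. Both resolutions are routine once the model is made precise.
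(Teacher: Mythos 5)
Your approach (Hedge/MWU over an $\epsilon$-cover, with the weight vector recomputed on the fly from the queryable history) is genuinely different from the paper's: the paper sketches a version-space / halving-style scheme that repeatedly \emph{eliminates} cover points whose losses exceed a threshold and predicts a surviving cover point. The paper's choice is strategically important for the memory accounting, because the lower bound's memory model identifies ``$m$ bits of memory'' with ``at most $2^m$ distinct possible outputs'' — and a halving-style algorithm always outputs one of the $N=\mathcal{C}(\mathcal{X}',\|\cdot\|_1,\epsilon)$ cover points, which is exactly a $\log N$-bit output alphabet. MWU does not naturally produce a cover point.

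This is where your proposal has a genuine gap. Your deterministic variant outputs the mixture $X_t=\sum_i p_t(i)Y_i$, and the claim that ``after discretization [it] lives in a set of size $\mathcal{O}(N)$'' is incorrect: discretizing the weight simplex over $N$ experts to any fixed resolution $\delta$ gives on the order of $(1/\delta)^{N-1}$ distinct mixtures, so the output alphabet is exponential in $N$, not linear, and requires $\Omega(N)$ bits, not $\mathcal{O}(\log N)$. Your sampled variant, which outputs a single $Y_{i_t}$ with $i_t\sim p_t$, does have a $\log N$-bit output — but it breaks against the paper's adversary, which chooses $G_t$ \emph{after} observing the realized prediction (this is explicit in Definition~\ref{def:matrix_lea} and in the lower bound construction). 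A concrete failure: with two far-apart cover points, the adversary can at each round set $G_t$ so that the realized $Y_{i_t}$ suffers loss $+1$ while the other cover point suffers $-1$; the learner's cumulative loss is $T$, while each fixed cover point's cumulative loss concentrates near $0$, so the regret is $\Theta(T)$ rather than $\mathcal{O}(\sqrt{T\log N})$. The martingale step that usually rescues sampled Hedge requires that $G_t$ be conditionally independent of $i_t$, which is exactly what the prediction-then-loss protocol destroys. Your two proposed fixes — ``use the mixture'' to handle adaptivity, and ``use the sample'' to handle memory — therefore cancel each other out: neither can resolve both issues, so the situation is not routine once the model is pinned down. The paper's halving-style route avoids this dilemma because its prediction is always a cover point and its analysis is not a sampled-expectation argument.
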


Here, we suppose a query model for the loss function, i.e., the learner can get access to an entry of the loss matrix $G_t$ in each iteration using one query. The above theorem holds as we can simply use memory to store all the hypothesis matrices in the $\eps$-covering. In each iteration, we query the loss function to compute the loss for all hypothesis matrices and leave the ones with a loss smaller than $\eps$.

As applications, we leave the memory upper and lower bounds for a few classes of quantum states (ignore $\eps$-dependence):
\begin{itemize}
    \item Rank $r$ quantum state $\tilde{\Theta}(rd)$~\cite{haah2016sample}.
    \item Quantum states prepared by with $G$ gates: $\tilde{\Theta}(G)$~\cite{zhao2024learning}.
\end{itemize}

\end{document}